\documentclass{article}

\usepackage[preprint]{neurips_2026}
\usepackage{subcaption}

\usepackage[utf8]{inputenc} % allow utf-8 input
\usepackage[T1]{fontenc}    % use 8-bit T1 fonts
\usepackage{hyperref}       % hyperlinks
\usepackage{url}            % simple URL typesetting
\usepackage{booktabs}       % professional-quality tables
\usepackage{amsfonts}       % blackboard math symbols
\usepackage{nicefrac}       % compact symbols for 1/2, etc.
\usepackage{microtype}      % microtypography
\usepackage{xcolor}         % colors
\usepackage[utf8]{inputenc} % allow utf-8 input
\usepackage[T1]{fontenc}    % use 8-bit T1 fonts
\usepackage{hyperref}       % hyperlinks
\usepackage{url}            % simple URL typesetting
\usepackage{booktabs}       % professional-quality tables
\usepackage{amsfonts}       % blackboard math symbols
\usepackage{nicefrac}       % compact symbols for 1/2, etc.
\usepackage{graphicx}
\usepackage{microtype}      % microtypography
\usepackage{overpic}
\usepackage{amsmath, amssymb, amsfonts}
\usepackage{amsthm}
\usepackage{multirow}
\usepackage{colortbl}
\usepackage[ruled,vlined]{algorithm2e}
\usepackage{float}
\usepackage{placeins}
\usepackage{wrapfig}
\usepackage[table]{xcolor}
\usepackage{booktabs} % 用于三线表格式 \toprule, \midrule, \bottomrule
\usepackage{tabularx} % 用于可变宽度列（X列）的表格环境
\usepackage{array}    % 用于增强表格列格式控制

\usepackage{pgfplots}
\pgfplotsset{compat=1.18}
\usepackage{sansmath} % 保证字体和论文一致

\usepackage{wrapfig}

\graphicspath{{./figs/}}

\newtheorem{theorem}{Theorem}
\newtheorem*{theoreml}{Theorem}
\newtheorem{lemma}{Lemma}
\newtheorem{proposition}{Proposition}
\newtheorem{assumption}{Assumption}
\newtheorem{remark}{Remark}

\definecolor{abstractGray}{HTML}{F5F7FA}
\definecolor{ruleNavy}{HTML}{244B73}
\title{{Geometry Conflict}: Explaining and Controlling Forgetting in LLM Continual Post-Training}
\author{%
\textbf{Yuanyi Wang}$^{1}$\thanks{Equal contribution.}\, \footnotemark[3],
\textbf{Yifan Yang}$^{1}$\footnotemark[1],
\textbf{Su Lu}$^{1}$,
\textbf{Yanggan Gu}$^{1}$,
\textbf{Pengkai Wang}$^{1}$, \\
\textbf{Wenjun Wang}$^{1}$,
\textbf{Zhaoyi Yan}$^{2}$,
\textbf{Congkai Xie}$^{2}$,
\textbf{Jianmin Wu}$^{1}$, \\
\textbf{Jialun Cao}$^{3}$,
\textbf{Shing-Chi Cheung}$^{3}$,
\textbf{Hongxia Yang}$^{1,2,4}$\thanks{Corresponding author.} \thanks{yuan-yi.wang@connect.polyu.hk, hongxia.yang@polyu.edu.hk} \\
$^{1}$The Hong Kong Polytechnic University, PolyU \quad
$^{2}$InfiX.ai \\
$^{3}$The Hong Kong University of Science and Technology, HKUST \\
$^{4}$PolyU-Daya Bay Technology and Innovation Research Institute \\
\\
\textbf{Code:} \textcolor{ruleNavy}{\url{https://github.com/wyy-code/GCWM}}
}

\begin{document}

\maketitle

\begin{abstract}
Continual post-training aims to extend large language models (LLMs) with new knowledge, skills, and behaviors, yet it remains unclear when sequential updates enable capability transfer and when they cause catastrophic forgetting.
Existing methods mitigate forgetting through sequential fine-tuning, replay, regularization, or model merging, but offer limited criteria for determining when incorporating new updates is beneficial or harmful.
In this work, we study LLM continual post-training through three questions: \textit{What drives forgetting? When do sequentially acquired capabilities transfer or interfere? How can compatibility be used to control update integration?}
We address these questions through \textit{task geometry}: we represent each post-training task by its parameter update and study the covariance geometry induced by the update.
Our central finding is that: \textbf{forgetting can be considered as a state-relative update-integration failure, it arises when the covariance geometries induced by tasks misalign with the geometry of the evolving model state}.
Sequential updates transfer when they remain compatible with the model state shaped by previous updates, and interfere when state-relative geometry conflict becomes high.
Motivated by this finding, we propose \textbf{G}eometry-\textbf{C}onflict \textbf{W}asserstein \textbf{M}erging (GCWM), a data-free update-integration method that constructs a shared Wasserstein metric via Gaussian Wasserstein barycenters and uses geometry conflict to gate geometry-aware correction.
Across Qwen3 0.6B--14B on domain-continual and capability-continual settings, GCWM consistently outperforms data-free baselines, improving retention and final performance without replay data.
These results identify geometry conflict as both an explanatory signal for forgetting and a practical control signal for LLM continual post-training.
\end{abstract}

\section{Introduction}

Continual post-training is becoming an increasingly important paradigm for extending large language models (LLMs) \cite{shi2025continual,kumar2025llm}. Rather than learning jointly over all desired capabilities or data, a model is expected to learn through a sequence of post-training stages, each targeting a new domain \cite{ke2025demystifying,zhao2025redone}, skill \cite{tang2025synthesizing,yano2025lamdagent}, or behavior \cite{tan2025scaling,du2025post}. This process is natural for real scenarios as capabilities are introduced incrementally. However, sequential post-training faces a fundamental challenge: learning a new task undermines the knowledge acquired from previous ones, a phenomenon known as catastrophic forgetting \cite{van2024continual,loke2025overcoming}, often driven by interference between sequential parameter updates.

Existing approaches can be broadly categorized into four classes: sequential fine-tuning \cite{ke2022continual,wang2025see}, replay-based methods that revisit past data \cite{hickok2025scalable,rolnick2019experience}, regularization methods that constrain update drift \cite{ahn2019uncertainty,pomponi2020efficient}, or model merging strategies that combine task-specific adaptations  \cite{feng2025aimmerging,zhang2025merge}.
These approaches have led to important progress, but they still lack a principled account of \emph{task compatibility} in continual post-training. As a result, they often struggle to answer a central practical question: when should new parameter updates be strongly integrated into the current model, and when should such integration be restrained? This issue is particularly pronounced for LLMs, where tasks are highly heterogeneous, post-training objectives differ substantially, and the same update magnitude can lead to very different retention outcomes \cite{wang2025model}.

\vspace{-0.1em}
To address this problem, we study LLM continual post-training through three questions:
\textit{What drives forgetting? When do sequentially acquired capabilities transfer or interfere? How can compatibility be used to control update integration?}
We answer these questions through a task-geometry view of post-training updates.
Specifically, we represent each task by its parameter update and study the induced covariance geometry, which captures not only update magnitude but also the subspaces and spectral structure through which a task changes the model.
We define \emph{geometry conflict} as a normalized Bures--Wasserstein discrepancy \cite{bhatia2019bures} between task-induced covariance geometries in a shared space, and use its state-relative form to measure compatibility with the evolving LLM state.

\vspace{-0.1em}
Our analysis (Sec.~\ref{sec:findings}) across Qwen3 scales and continual strategies compares geometry conflict with update norm, subspace alignment ratio \cite{gargiulo2025task}, and gradient conflict \cite{wang2021gradient}.
It reveals a central mechanism: forgetting can be considered as a state-relative update-integration failure, it arises when the covariance geometries induced by tasks misalign with the geometry of the evolving model state, 
whereas transfer occurs when new updates remain compatible with the state shaped by previous updates.
This explains why raw update norm and isolated pairwise compatibility are insufficient, and why geometry conflict serves as a natural signal for controlling sequential update integration.

\vspace{-0.1em}
Motivated by this finding, we propose \textbf{G}eometry-\textbf{C}onflict \textbf{W}asserstein \textbf{M}erging (GCWM), a data-free update-integration method for LLM continual post-training. 
GCWM constructs task-induced covariance geometry, builds a shared Wasserstein metric via Gaussian Wasserstein barycenters, and uses geometry conflict to gate geometry-aware correction, which allows GCWM to perform compatibility-controlled update integration.
We further provide theoretical support showing that the induced loss change is controlled by geometry conflict and gated merge displacement.

\vspace{-0.1em}
Across \emph{domain-continual} and \emph{capability-continual} settings, GCWM consistently improves retention and final performance over data-free baselines without replay data. On Qwen3 models from 0.6B to 14B, GCWM remains the strongest data-free update-integration method across scales, showing that geometry conflict is useful not only as an explanatory signal for forgetting but also as a practical control signal for continual post-training.
In summary, our contributions are summarized as follows:
% \\
% (iii) We propose \textbf{G}eometry-\textbf{C}onflict \textbf{W}asserstein \textbf{M}erging (GCWM), a data-free update-integration method for LLM continual post-training, with theory showing that its induced loss change is controlled by geometry conflict and gated merge displacement. Across Qwen3 0.6B--14B, GCWM improves retention and performance over baselines in domain-continual and capability-continual settings.
\vspace{0.4em}
\\
\textbf{(i)} We develop a task-geometry analysis of LLM continual post-training and show that forgetting is better explained as a state-relative update-integration failure, beyond update norm and isolated pairwise compatibility.
\\
\textbf{(ii)} We introduce \emph{geometry conflict}, a Bures--Wasserstein distance over task-induced covariance geometries, and identify it as both an explanatory signal for forgetting and a compatibility signal for update integration, complementing existing subspace alignment ratio and gradient conflict.
\\
\textbf{(iii)} We propose Geometry-Conflict Wasserstein Merging, a data-free update-integration method that constructs a shared Wasserstein metric and gates geometry-aware correction by layer-wise conflict.
\\
% (iv) With theory proving that GCWM`s induced loss change is controlled by geometry conflict and gated merge displacement, GCWM is evaluated on Qwen3 0.6B--14B under domain- and capability-continual settings, showing improved final performance over data-free baselines without replay data.
\textbf{(iv)} We derive a conflict-controlled theory linking GCWM's relative loss to geometry conflict and gated merge displacement, and validate GCWM on Qwen3 0.6B--14B across domain- and capability-continual settings, improving final performance over data-free baselines without replay data.

\vspace{-0.2em}
\section{Preliminary}
\label{sec:background}

\subsection{Problem Setup}
We study continual post-training for LLMs. Starting from a pretrained model with parameters \(\theta_{\mathrm{pre}}\), the model is adapted through a sequence of tasks \(\mathcal{T}=\{T_1,\ldots,T_K\}\), where each task introduces a new domain, skill, or behavior. For task \(T_t\), we denote its task-specific update by
\[
\Delta_t = \theta_t - \theta_{\mathrm{pre}},
\]
where \(\theta_t\) is the model adapting to \(T_t\). We use these task updates, which may be parameter-efficient or full-model updates, as the basic objects for analyzing in LLM continual post-training.

\subsection{Task Geometry and Compatibility Signals}
\label{sec:task_geo_comp_sig}
A task update is not fully characterized by its norm: two updates with similar magnitude can affect different subspaces and induce different forgetting behavior. For a layer \(\ell\), let \(\Delta_t^{(\ell)} \in \mathbb{R}^{d_{\mathrm{out}}\times d_{\mathrm{in}}}\) denote the update matrix of task \(T_t\). 
Motivated by the task vector \cite{ilharcoediting}, we define task geometry as:
% \vspace{-0.4em}
{\footnotesize
\[
C_t^{(\ell)} = \big(\Delta_t^{(\ell)}\big)^\top \Delta_t^{(\ell)} ,
\]
}
which captures the dominant directions of the update.
To compare two tasks, we project them into a shared basis and measure their discrepancy using a normalized Bures--Wasserstein distance \cite{bhatia2019bures}:
\vspace{-0.1em}
{\footnotesize
\[
\gamma_{ij}^{(\ell)}
=
\frac{
d_{\mathrm{B}}^2(B_i^{(\ell)},B_j^{(\ell)})
}{
\mathrm{tr}(B_i^{(\ell)})+\mathrm{tr}(B_j^{(\ell)})+\varepsilon
},
\]
}
% \vspace{-0.1em}
where \(B_i^{(\ell)}\) and \(B_j^{(\ell)}\) are the projected geometries. We refer to \(\gamma_{ij}^{(\ell)}\) as \emph{geometry conflict}. Lower values indicate more compatible task-induced geometries. In Sec.~\ref{sec:findings}, we compare geometry conflict with three standard diagnostics: update norm, subspace alignment ratio (SAR) \cite{marczak2025no}, and gradient cosine conflict \cite{yu2020gradient}. State-relative variants replace one task update with the current continual-training state. Full metric definitions and aggregation details are provided in Appendix~\ref{app:analysis_metrics}.

\subsection{Related Work}
\textbf{Continual Post-training}
has become an increasingly important paradigm for extending LLMs beyond their original pretraining distribution, including domain adaptation \cite{saad2023udapdr,eschbach2024exploring}, capability acquisition \cite{yin2024enhancing,bansal2024llm}, and behavior alignment over sequential stages \cite{yang2024behavior,ye2026align3gr}. Existing approaches largely follow four lines. Sequential fine-tuning directly adapts the model stage by stage, but is highly prone to forgetting under heterogeneous task sequences \cite{ji2024reversing,qiao2024learn}. Replay-based methods mitigate forgetting by revisiting historical data \cite{zhang2025gere,feng2026forever}, while regularization-based methods constrain update drift to preserve prior knowledge \cite{lu2025controlled,ahn2019uncertainty}. Model merging that combines task-specific adaptations offers a plug-in workflow, but struggles to resolve cross-task interference \cite{zhang2025merge,marczak2024magmax}. 
However, most existing methods emphasize preserving prior performance during sequential updates while offering limited guidance on the task-compatibility conditions under which sequential interactions should be encouraged or suppressed. Our work addresses this gap through a task-compatibility perspective.

\textbf{Continual Model Merging}
provides a data-efficient alternative to standard sequential adaptation by composing task-specific parameter updates in weight space \cite{wang2026mergepipe, yang2026model,zhou2025democratizing}. 
Recent work studies sequential settings in which models arrive incrementally over time \cite{libecame,bui2026mergeslide,zhou2026model}, including projection-based sequential merging \cite{tang2025merging}, stability-based methods based on null-space filtering or test-time gating \cite{qiumingle,qiu2025null}, resource-constrained online merging of adapters \cite{shenaj2025k}, and broader hybrid frameworks that combine continual learning and model merging \cite{phan2025toward}. 
Our method is instantiated as a data-free continual merging method, but the broader goal is to study continual post-training through task compatibility and use merging as an mechanism for exploiting the resulting compatibility findings.

\textbf{Compatibility Metrics and Signals.}
Recent work studies compatibility via parameter discrepancy \cite{ke2025demystifying,chen2025coefficients}, gradient alignment \cite{wei2025modeling}, and subspace or spectral overlap \cite{marczak2025no,tammerging}. 
\emph{Demystifying Mergeability} \cite{ke2025demystifying} shows that subspace overlap and gradient alignment are stable method-agnostic indicators, but these signals remain largely diagnostic. 
In contrast, we introduce \emph{geometry conflict} as a method-native control signal derived from task-induced covariance geometry, and construct a shared merging metric via Bures--Wasserstein geometry \cite{bhatia2019bures} and Gaussian Wasserstein barycenters \cite{alvarez2016fixed}.

\section{What Governs Forgetting in Continual Post-Training?}
\label{sec:findings}
Before introducing GCWM, we first ask what makes a continual post-training step harmful.
Across Qwen3 models \cite{yang2025qwen3} from 0.6B to 14B and four representative strategies--Seq.\ SFT, EWC regularization \cite{kirkpatrick2017overcoming}, FOREVER replay \cite{feng2026forever}, and AIMMerging \cite{feng2025aimmerging}--we compare forgetting with update norm, SAR, gradient conflict, and our geometry conflict.
Here, retention loss is the positive old-task drop from each task's best previous score, reported in percentage points (pp) when scaled by 100, and \(\rho_s\) denotes Spearman rank correlation.
The analysis yields four findings: update norm is only a coarse drift baseline; geometry conflict refines SAR-based compatibility; state-relative geometry mismatch best tracks continual forgetting; geometry and gradient conflict reveal complementary failure modes. Extended diagnostics and bootstrap confidence intervals are organized in Appendix~\ref{app:sec3_additional}.

\subsection{Update Norm Is Insufficient to Explain Forgetting}
\label{sec:update_norm}

A natural hypothesis is that forgetting is mainly driven by parameter drift: larger updates should induce larger retention loss.
We test this by comparing update norm with forgetting, and contrast it with geometry signals that use different reference points.
In Figs.~\ref{fig:state_reference_core} and \ref{fig:state_reference_support}, active conflict is the mean pairwise geometry conflict among active task updates, while state and global gaps measure geometry mismatch between active task updates and the evolving model state.
Fig.~\ref{fig:state_reference_support}(a) shows that update norm has a nontrivial but coarse association with retention loss (\(|\rho_s|=0.48\)).
State-relative geometry is stronger: the global state-active gap reaches \(|\rho_s|=0.59\), exceeding both update norm and active-pair conflict (\(|\rho_s|=0.30\)).
The scale breakdown in Fig.~\ref{fig:state_reference_core}(b) further shows that this advantage becomes clearer in larger LLMs: the global gap increases from \(0.16\) at 0.6B to \(0.86\) at 14B, while update norm remains a weaker drift baseline.
Overall, \textbf{update norm measures how far the model moves, but not whether the movement remains compatible with task-induced geometries}. Bootstrap confidence intervals and additional step-level rankings are provided in Appendices~\ref{app:sec3_confidence} and \ref{app:additional_step_analysis}.

\begin{figure}[H]
  \centering
  \vspace{-0.4em}
  \includegraphics[width=\linewidth]{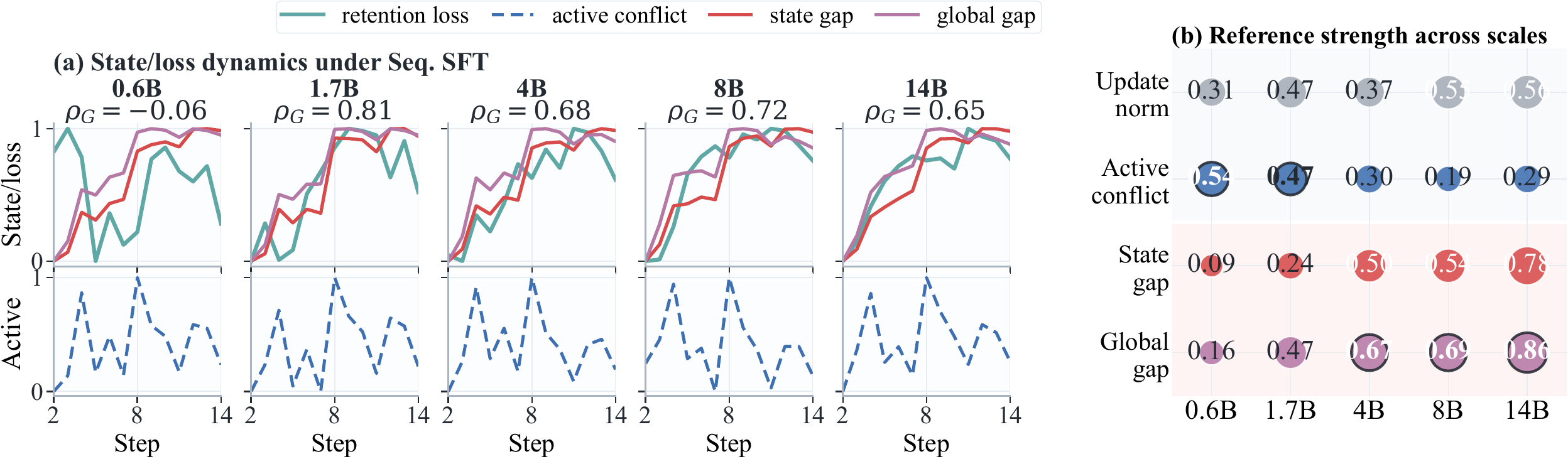}
  \vspace{-1.0em}
  \caption{
  {State-relative geometry tracks forgetting across continual steps and scales.}
  Panel (a) shows normalized SFT dynamics within each scale; panel (b) reports \(|\rho_s|\) between each signal and loss.
  }
  \label{fig:state_reference_core}
  \vspace{-0.8em}
\end{figure}

\vspace{-0.8em}
\subsection{Geometry Conflict Refines Subspace Compatibility}
\label{sec:sar_geometry}
\vspace{-0.4em}

\begin{wrapfigure}[20]{r}{0.34\linewidth}
  \centering
  \vspace{-2.5em}
  \includegraphics[width=\linewidth]{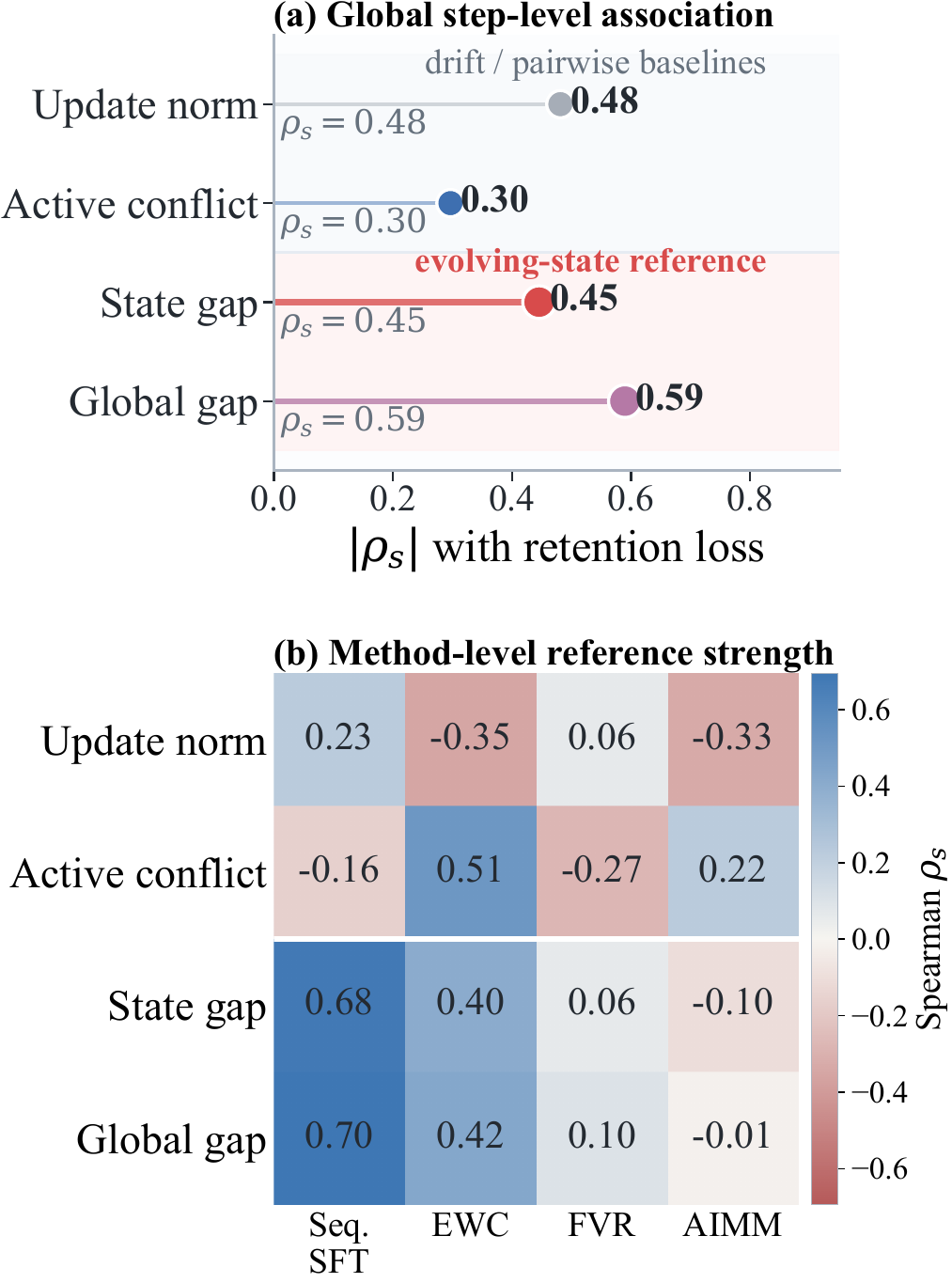}
  \vspace{-1.2em}
  \caption{\small
  {Global and method-level associations.}
  Top: global \(|\rho_s|\). Bottom: signed method-level \(\rho_s\); FVR denotes FOREVER.
  }
  \label{fig:state_reference_support}
  \vspace{-2em}
\end{wrapfigure}
Subspace overlap is a natural compatibility proxy: if two updates act on similar directions, they may be easier to integrate.
We therefore compare SAR with geometry conflict (Sec.~\ref{sec:task_geo_comp_sig}).
As shown in Fig.~\ref{fig:pairwise_compatibility}(a), SAR and geometry conflict are related but non-redundant: their global rank association is moderate (\(\rho_s=0.27\)), and task pairs with similar SAR can still exhibit very different geometry conflict.
SAR captures \emph{where} updates overlap; geometry conflict captures whether their induced covariance geometry is compatible in that shared space.
\vspace{-0.4em}

Pairwise geometry is useful for regime diagnosis, but it is not a standalone predictor of forgetting.
In Fig.~\ref{fig:pairwise_compatibility}(a--c), SAR percentile ranks task-pair SAR values; GC-drop and GC-forget denote correlations between pairwise geometry conflict and immediate old-task score change or best-previous forgetting, respectively.
Fig.~\ref{fig:pairwise_compatibility}(b) shows that GC-drop stays near zero across methods and scales, while GC-forget is scale-sensitive: it is visible on 0.6B--4B (\(0.28/0.31/0.30\)) but weak on 8B and 14B (\(0.12/0.02\)).
Fig.~\ref{fig:pairwise_compatibility}(c) further illustrates this point: large drops, such as Math\(\rightarrow\)History (\(12.9\) pp) and Math\(\rightarrow\)Economics (\(12.3\) pp), do not form a single pairwise-conflict pattern.
Thus, pairwise compatibility is informative but insufficient, motivating the state-relative analysis in Sec.~\ref{sec:state_relative_geometry}.
Overall, \textbf{SAR and geometry conflict capture different levels of compatibility}. Pairwise confidence intervals, heatmaps, summaries, and harmful transitions are provided in Appendices~\ref{app:sec3_confidence} and \ref{app:pairwise_analysis}.

\vspace{-0.6em}
\subsection{State-Relative Geometry Conflict Tracks Continual Forgetting}
\label{sec:state_relative_geometry}
\vspace{-0.2em}

Sec.~\ref{sec:sar_geometry} shows that isolated task pairwise compatibility is incomplete.
In LLM continual post-training, each incoming update is applied to an evolving model state that already encodes previous updates.
The question is whether incoming task geometry remains compatible with the \emph{current} state.

Fig.~\ref{fig:state_reference_core}(a) tracks this effect under Seq.\ SFT.
Active-pair conflict fluctuates across steps, while state and global gaps more closely follow the growth of retention loss, especially from 1.7B to 14B.
The method-level heatmap in Fig.~\ref{fig:state_reference_support}(b) shows the same pattern is strongest under direct sequential updating: state/global signals reach \(0.68/0.70\) for Seq.\ SFT and remain substantial for EWC (\(0.40/0.42\)), but weaken when replay or merging compresses forgetting variance.
This identifies the evolving model state, rather than isolated task pairs, as the relevant reference point for geometry-based forgetting analysis. Full confidence intervals and method-stratified correlations are in Appendices~\ref{app:sec3_dashboard}--\ref{app:additional_step_analysis}.

\vspace{-0.4em}
\subsection{Geometry and Gradient Conflict Reveal Complementary Failure Modes}
\label{sec:geometry_gradient}
\vspace{-0.3em}

Finally, we ask whether geometry conflict simply duplicates gradient conflict.
The answer is no.
Here, \(q/k/v/o\) denote attention projections, \(gate/up/down\) denote MLP projections, top-layer share is the fraction of top-ranked conflict layers in each family, min grad-cos is the minimum gradient cosine, and neg-grad ratio is the fraction of negative-cosine pairs.
Fig.~\ref{fig:geometry_gradient}(d) shows a sharp module-level separation: top geometry-conflict layers concentrate in \(up\_\mathrm{proj}\), \(gate\_\mathrm{proj}\), \(v\_\mathrm{proj}\), and \(down\_\mathrm{proj}\), whereas top negative-gradient layers are dominated by \(k\_\mathrm{proj}\) and \(q\_\mathrm{proj}\).
Together, the four geometry-heavy families account for about \(0.89\) of top geometry-conflict layers, while query/key projections account for about \(0.86\) of top negative-gradient layers.
Fig.~\ref{fig:geometry_gradient}(e) further shows that the geometry-conflict locus changes with the update-integration strategy, while negative-gradient conflict remains consistently query/key-centric.
Fig.~\ref{fig:geometry_gradient}(f) complements this module view: the global geometry gap is the strongest forgetting-aligned signal among the plotted predictors, whereas gradient diagnostics are more aligned with old-task mean and overall performance.
Thus, \textbf{geometry conflict and gradient conflict are complementary diagnostics}.
Gradient conflict exposes optimization-level opposition, while geometry conflict captures update-integration mismatch.
This distinction is important for GCWM: geometry conflict is not used as a replacement for gradient diagnostics, but as a native signal for controlling how strongly sequential updates should be integrated. 
Confidence intervals for the geometry and gradient target comparison and decompositions are in Appendices~\ref{app:sec3_confidence} and \ref{app:family_mechanism}.

\begin{figure}[t]
  \centering
  \includegraphics[width=\linewidth]{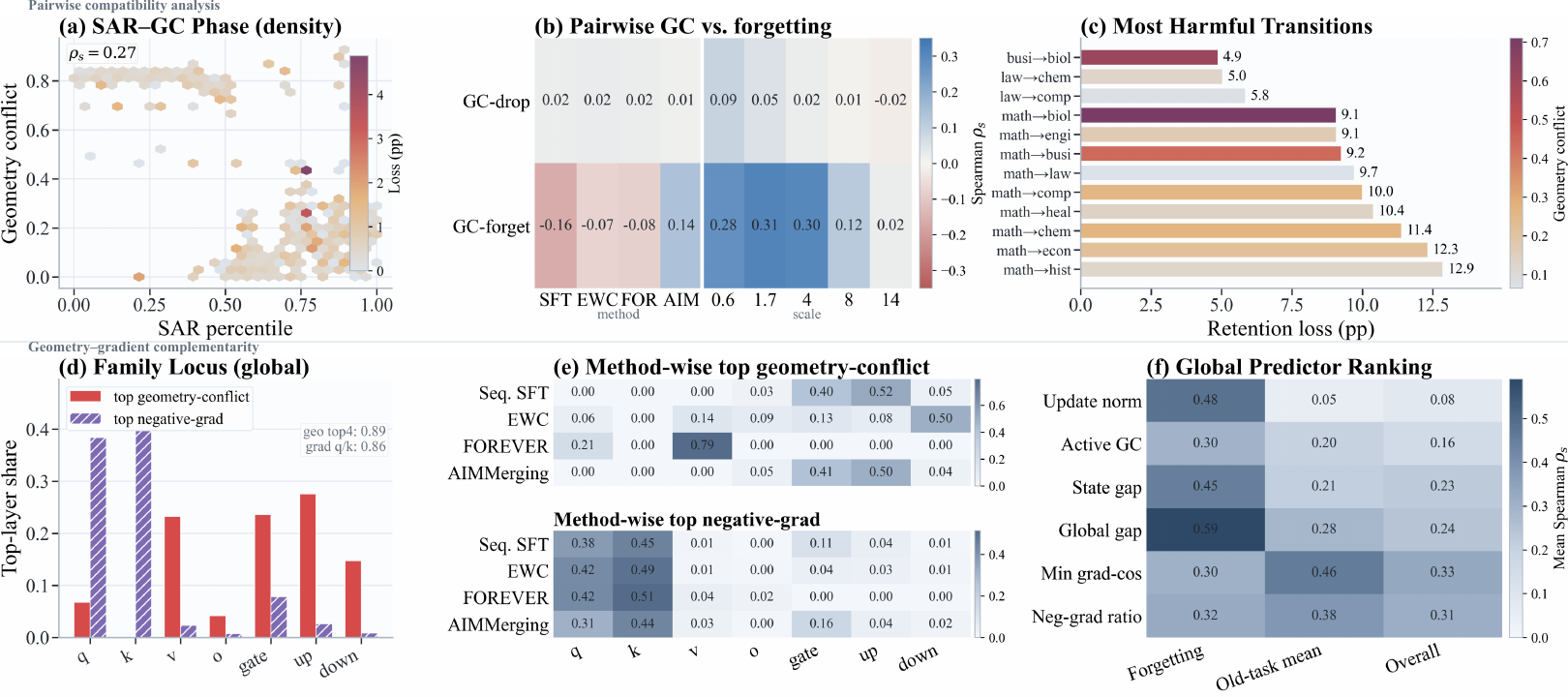}
  \caption{
\textbf{Pairwise compatibility and conflict complementarity.}
(a)--(c) SAR and geometry conflict stratify task-pair transfer regimes, while pairwise conflict alone weakly predicts forgetting. GC-drop is the signed association with the immediate old-task delta; GC-forget measures degradation from each old task's best prior score. 
(d)--(f) reveal complementary failure modes: top-layer share is the fraction of top-ranked layers within each projection family, and (f) reports global step-level $|\rho_s|$.
}
  \label{fig:pairwise_compatibility}
  \label{fig:geometry_gradient}
  \vspace{-1em}
\end{figure}

\vspace{-0.2em}
\section{Geometry Conflict Wasserstein Merging}
\label{sec:method}

We now turn the state-relative geometry findings in Sec.~\ref{sec:findings} into a data-free update-integration algorithm.
\textbf{G}eometry-\textbf{C}onflict \textbf{W}asserstein \textbf{M}erging (GCWM) operates on task vectors, estimates layer-wise geometry conflict, constructs a shared Wasserstein metric, and uses a conflict gate to control how strongly geometry-aware correction is applied.
At continual step \(t\), GCWM then applies only the incremental change of the update, yielding a compatibility-controlled continual post-training merge.

\vspace{-0.4em}
\subsection{Task Geometry and Conflict Gate}

GCWM represents each active task update by its layer-wise covariance geometry.
For an active update \(\Delta_i\) and target linear layer \(\ell\), let
\(\Delta_i^{(\ell)}\in\mathbb{R}^{d_{\mathrm{out}}\times d_{\mathrm{in}}}\).
We define
{\footnotesize
\begin{equation}
C_i^{(\ell)}
=
\big(\Delta_i^{(\ell)}\big)^\top \Delta_i^{(\ell)}+\lambda I,
\label{eq:covariance-geometry}
\end{equation}
}
which captures the dominant update subspaces and spectral energy while ensuring numerical stability.

To compare multiple active updates in a shared system, GCWM computes a truncated SVD
{\small
\[
\Delta_i^{(\ell)}
\approx
U_i^{(\ell)}\Sigma_i^{(\ell)}\big(V_i^{(\ell)}\big)^\top,
\]
}
retains the principal right-singular directions, and forms
{\small
\begin{equation}
Q^{(\ell)}
=
\mathrm{orth}\!\left(
\left[
V_1^{(\ell)},V_2^{(\ell)},\dots,V_m^{(\ell)}
\right]
\right),
\label{eq:shared-basis}
\end{equation}
}
where \(m\) is the number of active task updates. The projected geometry is
{\small
\begin{equation}
B_i^{(\ell)}
=
\big(Q^{(\ell)}\big)^\top C_i^{(\ell)}Q^{(\ell)} .
\label{eq:projected-geometry}
\end{equation}
}
The operators \(\{B_i^{(\ell)}\}_{i=1}^m\) are used for conflict estimation and shared-metric construction.

For two projected geometries, GCWM defines layer-wise geometry conflict by the normalized Bures--Wasserstein discrepancy
{\small
\begin{equation}
\gamma_{ij}^{(\ell)}
=
\frac{
d_{\mathrm{B}}^2\!\left(B_i^{(\ell)},B_j^{(\ell)}\right)
}{
\mathrm{tr}\!\left(B_i^{(\ell)}\right)
+
\mathrm{tr}\!\left(B_j^{(\ell)}\right)
+
\varepsilon
},
\qquad
d_{\mathrm{B}}^2(A,B)
=
\mathrm{tr}(A)+\mathrm{tr}(B)
-
2\,\mathrm{tr}\!\left(
\left(A^{1/2}BA^{1/2}\right)^{1/2}
\right),
\label{eq:pairwise-geometry-conflict}
\end{equation}
}
where \(\varepsilon>0\) is a stabilizer. Smaller \(\gamma_{ij}^{(\ell)}\) indicates more compatible task-induced geometries.
GCWM aggregates pairwise conflicts and converts the result into a layer-wise gate:
\begin{align}
g^{(\ell)}
&=
\sum_{i<j}w_{ij}\gamma_{ij}^{(\ell)},
\qquad
\sum_{i<j}w_{ij}=1,
\label{eq:layer-geometry-conflict}
\\
\alpha^{(\ell)}
&=
\alpha_{\min}
+
(\alpha_{\max}-\alpha_{\min})
\,\sigma\!\big(\kappa(g^{(\ell)}-\tau)\big).
\label{eq:conflict-gate}
\end{align}
Here \(w_{ij}\) are normalized task-pair weights, \(\tau\) is the conflict threshold, and \(\kappa\) controls gate sharpness.
Thus, geometry conflict becomes an actionable layer-wise control signal rather than a purely diagnostic score.

\subsection{Shared Wasserstein Metric and Gated Merge}

Given \(\{B_i^{(\ell)}\}_{i=1}^m\), GCWM constructs a shared merging metric through the Gaussian Wasserstein barycenter
\begin{equation}
\bar{B}^{(\ell)}
=
\arg\min_{B\succeq 0}
\sum_{i=1}^{m}
\omega_i
d_{\mathrm{B}}^2\!\left(B,B_i^{(\ell)}\right),
\qquad
\sum_i \omega_i=1 .
\label{eq:wasserstein-barycenter}
\end{equation}
The barycenter \(\bar{B}^{(\ell)}\) defines the local metric in which active updates are aligned before merging.

Let
\(\hat{\Delta}_i^{(\ell)}=\Delta_i^{(\ell)}Q^{(\ell)}\).
GCWM whitens the projected update, applies a base merge operator \(\mathcal{M}\), and recolors the result:
\begin{align}
\tilde{\Delta}_i^{(\ell)}
&=
\hat{\Delta}_i^{(\ell)}
\big(\bar{B}^{(\ell)}\big)^{-1/2},
\label{eq:whitened-update}
\\
\tilde{\Delta}_{\mathrm{geo}}^{(\ell)}
&=
\mathcal{M}\!\left(
\{\tilde{\Delta}_i^{(\ell)}\}_{i=1}^m;
\{\omega_i\}_{i=1}^m
\right),
\nonumber
\\
\Delta_{\mathrm{geo}}^{(\ell)}
&=
\tilde{\Delta}_{\mathrm{geo}}^{(\ell)}
\big(\bar{B}^{(\ell)}\big)^{1/2}
\big(Q^{(\ell)}\big)^\top .
\label{eq:recolor-update}
\end{align}
We instantiate \(\mathcal{M}\) with weighted WUDI~\cite{cheng2025whoever}. The geometry-aware branch is then blended with an ungated plain merge:
\begin{equation}
\Delta_{\mathrm{merge}}^{(\ell)}
=
\alpha^{(\ell)}\Delta_{\mathrm{geo}}^{(\ell)}
+
\big(1-\alpha^{(\ell)}\big)\Delta_{\mathrm{plain}}^{(\ell)} .
\label{eq:final-merge}
\end{equation}
For clarity, Eqs.~\eqref{eq:whitened-update}--\eqref{eq:recolor-update} present the projected form; the implementation uses the corresponding regularized full-space transform detailed in Appendix~\ref{app:algorithm}.

\subsection{Incremental Continual Update}

GCWM is applied incrementally. At step \(t\), let \(\mathcal{A}_t\) be the active set of task updates selected by the memory policy, containing the current update and optionally historical updates or the previous merged state.
For each target layer, GCWM computes \(\Delta_{\mathrm{merge},t}^{(\ell)}\) using Eqs.~\eqref{eq:pairwise-geometry-conflict}--\eqref{eq:final-merge}.
Instead of reapplying the full merged update, GCWM applies only its change relative to the previous merged state:
\begin{equation}
\Delta_{\mathrm{inc},t}^{(\ell)}
=
\Delta_{\mathrm{merge},t}^{(\ell)}
-
\Delta_{\mathrm{merge},t-1}^{(\ell)},
\qquad
\Delta_{\mathrm{merge},0}^{(\ell)}=0 .
\label{eq:incremental-merge}
\end{equation}
The model update is \quad
$\theta_t^{(\ell)}
=
\theta_{t-1}^{(\ell)}
+
\eta_t\Delta_{\mathrm{inc},t}^{(\ell)},$ \quad
where \(\eta_t\) is a step coefficient. This rule keeps continual post-training tied to newly induced compatibility-controlled changes.

\subsection{Theoretical Support for GCWM}

We provide a fixed-step proposal analysis of GCWM relative to the plain merge.
This analysis isolates the effect of geometry-aware correction before the incremental differencing in Eq.~\eqref{eq:incremental-merge}; the implementation applies the change between consecutive merged proposals.
Under local smoothness, projected-geometry adequacy, and layer-wise metric-curvature assumptions stated in Appendix~\ref{app:proof_conflict_controlled_integration}, the relative loss effect of the GCWM correction is controlled by geometry conflict and metric displacement.

Let
$\Theta_{\mathrm{plain},t}
=
\theta_{t-1}+\eta_t\Delta_{\mathrm{plain},t},
\qquad
\Theta_{\mathrm{gcwm},t}
=
\theta_{t-1}+\eta_t\Delta_{\mathrm{merge},t},$
where \(\Delta_{\mathrm{plain},t}\) and \(\Delta_{\mathrm{merge},t}\) are the plain and GCWM merge proposals at step \(t\).
Let \(\widetilde B_t^{(\ell)}\) denote the implementation-aligned full-space metric induced by the shared Wasserstein metric.

\begin{theorem}[Conflict-Controlled Integration]
\label{thm:conflict_controlled_integration}
Assume \(m_t\ge2\) and \(0\le \alpha_{\min}\le\alpha_{\max}\le1\).
Under the assumptions in Appendix~\ref{app:proof_conflict_controlled_integration}, the additional loss incurred on a previously acquired task \(u\) by the GCWM proposal relative to the plain proposal satisfies
\begin{equation*}
\mathcal L_u(\Theta_{\mathrm{gcwm},t})
-
\mathcal L_u(\Theta_{\mathrm{plain},t})
\le
\eta_t\sum_\ell c_{u,t}^{(\ell)}g_t^{(\ell)}
+
\frac{\eta_t^2}{2}
\sum_\ell d_{u,t}^{(\ell)}
\bigl\|
\Delta_{\mathrm{merge},t}^{(\ell)}
-
\Delta_{\mathrm{plain},t}^{(\ell)}
\bigr\|_{\widetilde B_t^{(\ell)}}^2,
\end{equation*}
where \(c_{u,t}^{(\ell)},d_{u,t}^{(\ell)}\ge0\) are local constants.
\end{theorem}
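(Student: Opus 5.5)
The plan is a second-order Taylor expansion of \(\mathcal L_u\) around the plain proposal \(\Theta_{\mathrm{plain},t}\). The two proposals differ by
\[
\Theta_{\mathrm{gcwm},t}-\Theta_{\mathrm{plain},t}=\eta_t\,\delta_t,
\qquad
\delta_t^{(\ell)}:=\Delta_{\mathrm{merge},t}^{(\ell)}-\Delta_{\mathrm{plain},t}^{(\ell)}=\alpha_t^{(\ell)}\bigl(\Delta_{\mathrm{geo},t}^{(\ell)}-\Delta_{\mathrm{plain},t}^{(\ell)}\bigr),
\]
using Eq.~\eqref{eq:final-merge}. By local smoothness and Taylor's theorem with the integral remainder along the segment,
\[
\mathcal L_u(\Theta_{\mathrm{gcwm},t})-\mathcal L_u(\Theta_{\mathrm{plain},t})=\eta_t\sum_\ell\bigl\langle\nabla_\ell\mathcal L_u(\Theta_{\mathrm{plain},t}),\delta_t^{(\ell)}\bigr\rangle+\frac{\eta_t^2}{2}\,\delta_t^\top H\,\delta_t,
\]
where \(H\) is a Hessian evaluated at an intermediate point. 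The theorem follows once the linear term is bounded by \(\sum_\ell c^{(\ell)}_{u,t}g^{(\ell)}_t\) and the quadratic term by \(\sum_\ell d^{(\ell)}_{u,t}\|\delta_t^{(\ell)}\|^2_{\widetilde B_t^{(\ell)}}\).

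\textbf{Quadratic term.} I would invoke the layer-wise metric-curvature assumption, namely that the Hessian restricted to the region of interest is dominated block-diagonally by the shared metric: \(\delta^\top H\delta\le\sum_\ell d^{(\ell)}_{u,t}\|\delta^{(\ell)}\|^2_{\widetilde B_t^{(\ell)}}\). Cross-layer blocks are absorbed by Cauchy--Schwarz, inflating each \(d^{(\ell)}\) by at most the number of layers. This term should be routine.

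\textbf{Linear term (main obstacle).} The goal is to show that the geometry-aware correction \(\Delta_{\mathrm{geo}}-\Delta_{\mathrm{plain}}\) is small when conflict is small. The intended route is through the barycenter.

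1. If all projected geometries \(B_i^{(\ell)}\) coincide, the barycenter in Eq.~\eqref{eq:wasserstein-barycenter} equals that common \(B\). Whitening by \(\bar B^{-1/2}\), merging, and recoloring by \(\bar B^{1/2}\) then reproduces the plain merge on the projected subspace. This relies on equivariance of the WUDI base operator \(\mathcal M\) under a common right transform, which I would take from the projected-geometry adequacy assumption.

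2. For general inputs, a perturbation bound is needed. Using the fixed-point characterization of the barycenter, \(d_{\mathrm B}^2(\bar B,B_i)\le\sum_j\omega_j d_{\mathrm B}^2(B_i,B_j)\), which is controlled by pairwise conflicts \(\gamma_{ij}\) times the traces. Square-root Lipschitz bounds then control \(\|\bar B^{\pm1/2}-B_i^{\pm1/2}\|\).

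3. Combining these gives \(\|\Delta_{\mathrm{geo}}-\Delta_{\mathrm{plain}}\|\lesssim K\sum_{i<j}w_{ij}\gamma_{ij}=K\,g\). Here \(K\) absorbs the trace normalizers, \(\lambda^{-1/2}\) from the regularization, the Lipschitz constant of \(\mathcal M\), and the mismatch between the pair weights \(w_{ij}\) and the barycenter weights \(\omega_i\).

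4. Cauchy--Schwarz with \(\|\nabla_\ell\mathcal L_u\|\), together with \(\alpha^{(\ell)}\le\alpha_{\max}\le1\), gives \(c^{(\ell)}_{u,t}=\alpha_{\max}K\|\nabla_\ell\mathcal L_u\|\ge0\).

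The delicate points are two. First, the Bures distance controls square roots only in a squared, trace-weighted sense, so linearity in \(g\) rather than \(\sqrt g\) must come from the appendix assumption. If the assumption only yields \(\sqrt g\), I would fall back to absorbing the gap via the local boundedness of \(g\). Second, the residual outside \(\mathrm{span}(Q)\) must either vanish by adequacy or be folded into the constants.
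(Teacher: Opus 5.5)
The Taylor step and the quadratic term are fine and agree with the paper. The appendix's layer-separable curvature assumption is literally $\langle\delta_t,\nabla^2\mathcal L_u(\cdot)\,\delta_t\rangle\le\sum_\ell d^{(\ell)}_{u,t}\|\delta_t^{(\ell)}\|^2_{\widetilde B_t^{(\ell)}}$ at every point of the segment. Cross-layer coupling is already absorbed there, so your extra Cauchy--Schwarz step is unnecessary.

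The gap is in the linear term. You try to derive $\|\Delta_{\mathrm{geo}}-\Delta_{\mathrm{plain}}\|\le K g$ from the construction of GCWM, and no such inequality holds. The conflict $g$ sees only the projected Gram matrices $B_i$. The correction, however, also depends on the parts of the $\Delta_i$ outside $\mathrm{span}(Q)$ and on their left singular structure.

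Here is a concrete case. Suppose $\mathcal M$ and the plain merge are the same weighted average. Then the projected form gives $\Delta_{\mathrm{geo}}-\Delta_{\mathrm{plain}}=-\sum_i\omega_i\Delta_i(I-QQ^\top)$, which does not depend on $\bar B$ at all. Now take $\Delta_2=U\Delta_1$ with $U$ orthogonal. Then $B_1=B_2$, so $g=0$, yet the residual is generically nonzero whenever truncation discards energy. Because $K$ multiplies $g$, this residual cannot be ``folded into the constants''.

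Separately, your perturbation chain (Bures distance, then closeness of $\bar B^{\pm1/2}$ and $B_i^{\pm1/2}$) is linear in $d_{\mathrm B}$. It therefore yields at best $O(\sqrt g)$. Your fallback runs the wrong way: boundedness of $g$ gives $g\lesssim\sqrt g$, not $\sqrt g\lesssim g$. Forcing the conversion would need $c^{(\ell)}_{u,t}\propto g^{-1/2}$, which blows up as $g\to0$ and makes the bound contentless.

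The paper never bounds the correction itself. Its projected-geometry adequacy assumption is not an equivariance property of WUDI. It is directly the first-order bound $|\langle\nabla_\ell\mathcal L_u(\Theta_{\mathrm{plain},t}),\delta_t^{(\ell)}\rangle|\le a^{(\ell)}_{u,t}\,r^{(\ell)}_t$, where $r^{(\ell)}_t=\sum_i\omega_i d_{\mathrm B}^2(B_i^{(\ell)},\bar B^{(\ell)})$ is the barycentric variance. In the situation above $r=0$, so this control is assumed rather than derived. Because everything stays in squared-Bures units, linearity in $g$ comes for free.

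The only real step is the lemma $r^{(\ell)}_t\le M^{(\ell)}_t g^{(\ell)}_t$, which uses only minimality of the barycenter. Let $F(B)=\sum_i\omega_i d_{\mathrm B}^2(B_i,B)$.
\begin{enumerate}
\item Minimality gives $F(\bar B)\le F(B_j)$ for each $j$. Averaging with weights $\omega_j$ yields $r\le\sum_{i,j}\omega_i\omega_j d_{\mathrm B}^2(B_i,B_j)=2\sum_{i<j}\omega_i\omega_j d_{\mathrm B}^2(B_i,B_j)$.
\item Each term satisfies $d_{\mathrm B}^2(B_i,B_j)=\gamma_{ij}(\mathrm{tr}B_i+\mathrm{tr}B_j+\varepsilon)\le M\gamma_{ij}$.
\item With the specific pair weights $w_{ij}=\omega_i\omega_j/Z$, where $Z=\sum_{i<j}\omega_i\omega_j=(1-\sum_i\omega_i^2)/2\le\tfrac12$, one gets $r\le 2ZMg\le Mg$.
\end{enumerate}
Hence $c^{(\ell)}_{u,t}=a^{(\ell)}_{u,t}M^{(\ell)}_t$.

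Two details of your sketch need to be put in this form.
\begin{itemize}
\item Your per-$i$ bound $d_{\mathrm B}^2(\bar B,B_i)\le\sum_j\omega_j d_{\mathrm B}^2(B_i,B_j)$ is not what minimality gives; it gives it only with an extra factor $1/\omega_i$. The $\omega$-averaged version is all that is needed.
\item The mismatch between $w_{ij}$ and $\omega_i\omega_j$ cannot be absorbed into a constant for arbitrary normalized weights, since a pair with $w_{ij}=0$ escapes $g$ entirely. This is why the weights are fixed as $w_{ij}\propto\omega_i\omega_j$.
\end{itemize}
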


Theorem~\ref{thm:conflict_controlled_integration} shows that the relative loss effect of the geometry-aware proposal is bounded by two quantities: the shared geometry conflict \(g_t^{(\ell)}\) and the metric displacement from the plain merge. The next result shows how the conflict gate controls this displacement.

\begin{proposition}[Compatibility Regimes of Update Integration]
\label{prop:compatibility_regimes}
Assume \(0\le \alpha_{\min}\le\alpha_{\max}\le1\). For each layer \(\ell\), let
\[
D_t^{(\ell)}
=
\bigl\|
\Delta_{\mathrm{geo},t}^{(\ell)}
-
\Delta_{\mathrm{plain},t}^{(\ell)}
\bigr\|_{\widetilde B_t^{(\ell)}}.
\]
Then
$\bigl\|
\Delta_{\mathrm{merge},t}^{(\ell)}
-
\Delta_{\mathrm{plain},t}^{(\ell)}
\bigr\|_{\widetilde B_t^{(\ell)}}
=
\alpha_t^{(\ell)}D_t^{(\ell)},
\qquad
\bigl\|
\Delta_{\mathrm{merge},t}^{(\ell)}
-
\Delta_{\mathrm{geo},t}^{(\ell)}
\bigr\|_{\widetilde B_t^{(\ell)}}
=
(1-\alpha_t^{(\ell)})D_t^{(\ell)}.$
Moreover, Eq.~\eqref{eq:conflict-gate} implies
\(g_t^{(\ell)}\le\tau \Rightarrow
\alpha_t^{(\ell)}\le(\alpha_{\min}+\alpha_{\max})/2\),
whereas
\(g_t^{(\ell)}\ge\tau \Rightarrow
\alpha_t^{(\ell)}\ge(\alpha_{\min}+\alpha_{\max})/2\).
Thus, low-conflict layers receive weaker geometry-aware correction, while high-conflict layers receive stronger correction.
\end{proposition}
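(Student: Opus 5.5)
The argument uses only the algebraic form of the blend in Eq.~\eqref{eq:final-merge}, absolute homogeneity of the seminorm $\|\cdot\|_{\widetilde B_t^{(\ell)}}$, and monotonicity of the logistic function in Eq.~\eqref{eq:conflict-gate}. Throughout, $\widetilde B_t^{(\ell)}$ is positive semidefinite, so $\|X\|_{\widetilde B}=\mathrm{tr}(X\widetilde B X^\top)^{1/2}$ is a seminorm. Positive definiteness is not needed, only homogeneity. I also take $\Delta_{\mathrm{plain},t}^{(\ell)}$ and $\Delta_{\mathrm{geo},t}^{(\ell)}$ to live in the same full parameter space. This holds after the recoloring $(\cdot)(\bar B)^{1/2}Q^\top$ in Eq.~\eqref{eq:recolor-update} or its full-space implementation, so their difference is well defined.

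\textbf{Norm identities.} Subtracting $\Delta_{\mathrm{plain},t}^{(\ell)}$ from Eq.~\eqref{eq:final-merge} gives
\[
\Delta_{\mathrm{merge},t}^{(\ell)}-\Delta_{\mathrm{plain},t}^{(\ell)}=\alpha_t^{(\ell)}\bigl(\Delta_{\mathrm{geo},t}^{(\ell)}-\Delta_{\mathrm{plain},t}^{(\ell)}\bigr).
\]
Subtracting $\Delta_{\mathrm{geo},t}^{(\ell)}$ instead gives
\[
\Delta_{\mathrm{merge},t}^{(\ell)}-\Delta_{\mathrm{geo},t}^{(\ell)}=(1-\alpha_t^{(\ell)})\bigl(\Delta_{\mathrm{plain},t}^{(\ell)}-\Delta_{\mathrm{geo},t}^{(\ell)}\bigr).
\]
Applying the seminorm and using $\|cX\|_{\widetilde B}=|c|\,\|X\|_{\widetilde B}$ yields $|\alpha_t^{(\ell)}|D_t^{(\ell)}$ and $|1-\alpha_t^{(\ell)}|D_t^{(\ell)}$, using also symmetry $\|{-X}\|=\|X\|$. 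The absolute values drop because $\sigma\in(0,1)$ and $0\le\alpha_{\min}\le\alpha_{\max}\le1$. Together these show that $\alpha_t^{(\ell)}$ is a convex combination of $\alpha_{\min}$ and $\alpha_{\max}$, hence lies in $[0,1]$.

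\textbf{Regime statement.} Write $\alpha_t^{(\ell)}=\alpha_{\min}+(\alpha_{\max}-\alpha_{\min})\sigma(\kappa(g_t^{(\ell)}-\tau))$. Since $\sigma(0)=1/2$ and $\sigma$ is nondecreasing, the following holds for $\kappa\ge0$, which I take as the intended sign since $\kappa$ is a sharpness parameter. If $g_t^{(\ell)}\le\tau$, then $\sigma(\kappa(g_t^{(\ell)}-\tau))\le 1/2$. Multiplying by the nonnegative factor $\alpha_{\max}-\alpha_{\min}$ gives $\alpha_t^{(\ell)}\le\alpha_{\min}+(\alpha_{\max}-\alpha_{\min})/2=(\alpha_{\min}+\alpha_{\max})/2$. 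The case $g_t^{(\ell)}\ge\tau$ is symmetric. The closing sentence of the proposition then follows by combining this with the first identity: the displacement from the plain merge, $\alpha_t^{(\ell)}D_t^{(\ell)}$, is at most the midpoint value times $D_t^{(\ell)}$ in low-conflict layers and at least that in high-conflict layers.

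\textbf{Expected obstacle.} There is no real mathematical obstacle; the only care needed is bookkeeping of assumptions. The hypothesis $\kappa\ge0$ must be made explicit, since for $\kappa<0$ the inequalities reverse. The proof should also state that the seminorm is defined on the common full space where both branches live. If $\widetilde B_t^{(\ell)}$ were only PSD, $D_t^{(\ell)}$ could vanish for distinct branches, but the identities remain valid.
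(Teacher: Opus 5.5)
Your proposal is correct and follows essentially the same route as the paper's proof. You subtract each branch from the blend in Eq.~\eqref{eq:final-merge}, use homogeneity of the $\widetilde B_t^{(\ell)}$-norm with $\alpha_t^{(\ell)}, 1-\alpha_t^{(\ell)}\ge 0$, and then use monotonicity of $\sigma$ together with $\sigma(0)=1/2$; the paper's appendix likewise states $\kappa>0$ explicitly as a hypothesis, matching the sign assumption you flagged.
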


In summary, Theorem~\ref{thm:conflict_controlled_integration} and Proposition~\ref{prop:compatibility_regimes} show that GCWM is controlled by geometry conflict at both the loss and update levels. Proofs are provided in Appendices~\ref{app:proof_conflict_controlled_integration} and~\ref{app:proof_compatibility_regimes}.

\vspace{-0.4em}
\section{Experiments}
\label{sec:experiments}
\vspace{-0.3em}

We evaluate GCWM as a data-free update-integration method under domain and capability shifts. Our main comparisons focus on data-free merging baselines; sequential, regularized, and replay-based methods are included as reference continual-training pipelines.

\vspace{-0.5em}
\subsection{Setup}
\label{sec:exp_setup}
\vspace{-0.3em}

\textbf{Models:}
We use Qwen3 backbones at 0.6B, 1.7B, 4B, 8B, and 14B.
\\
\textbf{Training data:}
For domain-continual training, we use \cite{mmlu_pro_cot_train_labeled} and form a 14-task sequence with 1k samples per sub-domain.
For capability-continual training, we use 30k math samples from \cite{nemotron_post_training_dataset_v1} and 30k code samples from \cite{zheng2024opencodeinterpreter}.
More setup details are provided in Appendix~\ref{app:exp_setup_details} and \ref{app:evaluation_prompt}.
\\
\textbf{Baselines:}
Our main baselines are data-free update-integration methods, including Localize-and-Stitch \cite{he2024localize}, AIMMerging \cite{feng2025aimmerging}, and OPCM \cite{tang2025merging}.
Seq.\ SFT, EWC \cite{kirkpatrick2017overcoming}, and FOREVER \cite{feng2026forever} are reported as reference continual-training pipelines because they use additional regularization or replay.
Non-continual merging like TA \cite{ilharcoediting}, TIES \cite{yadav2023ties}, DARE \cite{yu2024language} are also reported in Appendix~\ref{app:non_continual_merge}.
\\
\textbf{Benchmarks:}
Domain-continual performance is evaluated on the 14 MMLU-Pro sub-categories \cite{wang2024mmlu}.
Capability-continual performance is evaluated on GSM8K \cite{cobbe2021training}, MATH-500 \cite{hendrycks2measuring}, MBPP \cite{austin2021program}, HumanEval \cite{chen2021evaluating}, GPQA-Diamond \cite{reingpqa}, and MMLU-Pro \cite{wang2024mmlu}.
\\
\textbf{Remark 1:}
All reported performance scores are averaged over five independent evaluation runs.
\\
\textbf{Remark 2:}
The evaluation code we employ strictly adheres to the Qwen3 Technical Report \cite{yang2025qwen3}

\definecolor{rcwmBlue}{HTML}{DDEEFF}
\definecolor{mergeBlue}{HTML}{F1F7FD}
\definecolor{refGray}{HTML}{F7F7F7}
\definecolor{mtlGold}{HTML}{FFF4D6}
\definecolor{sizeBand}{HTML}{EAF2FA}
\definecolor{ruleNavy}{HTML}{244B73}
\providecommand{\best}[1]{\textbf{#1}}
\providecommand{\ub}[1]{\underline{#1}}

\begin{table*}[t]
\centering
\caption{Domain-continual MMLU-Pro results on Qwen3-1.7B, 8B, and 14B. Scores are accuracies (\%). Underlined MTL is a joint-training upper-bound reference; bold marks the best non-MTL result in each block. Data-free update-integration methods are shaded in blue.}
\label{tab:mmlu_domain_main}
\scriptsize
\setlength{\tabcolsep}{2.15pt}
\renewcommand{\arraystretch}{1.06}

\arrayrulecolor{ruleNavy}
\resizebox{\textwidth}{!}{%
\begin{tabular}{lrrrrrrrrrrrrrrr}
\toprule[0.9pt]
Method & Overall & Bio & Bus & Chem & CS & Econ & Eng & Health & Hist & Law & Math & Other & Phil & Phys & Psych \\
\midrule
\rowcolor{sizeBand}
\multicolumn{16}{l}{\emph{Qwen3-1.7B}} \\
\rowcolor{mtlGold} MTL & \ub{44.4} & \ub{65.1} & \ub{51.6} & \ub{43.0} & \ub{47.3} & \ub{52.3} & \ub{37.1} & \ub{41.9} & \ub{30.2} & \ub{23.3} & \ub{53.2} & \ub{35.8} & \ub{39.5} & \ub{46.4} & \ub{52.9} \\
\rowcolor{refGray} Seq.\ SFT & 36.8 & 55.6 & 39.7 & 36.3 & 37.6 & 48.0 & 29.0 & 31.9 & 24.9 & 17.7 & 47.1 & 31.2 & 32.1 & 36.6 & 47.4 \\
\rowcolor{refGray} EWC & 40.0 & 58.2 & 44.6 & 39.6 & 40.0 & 48.6 & 32.3 & 35.8 & 24.1 & 17.7 & 55.4 & 31.7 & 31.5 & 42.9 & 46.6 \\
\rowcolor{refGray} FOREVER & 38.5 & 58.2 & 42.2 & 37.6 & 41.2 & 46.3 & 28.0 & 35.4 & 27.3 & 16.2 & 50.8 & 33.1 & 30.9 & 40.5 & 47.4 \\
\cmidrule(lr){1-16}
\rowcolor{mergeBlue} L\&S & 41.1 & 61.4 & 48.2 & 39.9 & 44.0 & 48.8 & 34.2 & 38.8 & 27.4 & 20.6 & 49.8 & 32.9 & 36.4 & 43.2 & 49.5 \\
\rowcolor{mergeBlue} AIMMerging & 41.8 & 60.7 & 49.7 & 41.4 & 45.9 & 50.8 & \best{36.6} & 36.9 & 24.4 & 18.4 & \best{56.4} & 34.4 & 32.3 & 42.3 & 47.5 \\
\rowcolor{mergeBlue} OPCM & 41.7 & 62.7 & 49.0 & 40.4 & 44.8 & 49.7 & 34.5 & 39.3 & 27.5 & 20.5 & 50.7 & 33.1 & 36.8 & 43.8 & 50.4 \\
\rowcolor{rcwmBlue} GCWM & \best{43.5} & \best{64.9} & \best{51.0} & \best{42.2} & \best{46.6} & \best{51.7} & 36.1 & \best{41.1} & \best{29.0} & \best{21.9} & 52.7 & \best{34.8} & \best{38.6} & \best{45.7} & \best{52.3} \\
\midrule[0.55pt]

\rowcolor{sizeBand}
\multicolumn{16}{l}{\emph{Qwen3-8B}} \\
\rowcolor{mtlGold} MTL & \ub{65.3} & \ub{83.3} & \ub{70.6} & \ub{65.9} & \ub{66.3} & \ub{74.4} & \ub{54.3} & \ub{65.4} & \ub{58.0} & \ub{40.2} & \ub{76.1} & \ub{58.2} & \ub{60.7} & \ub{67.5} & \ub{73.9} \\
\rowcolor{refGray} Seq.\ SFT & 55.2 & 75.2 & 59.2 & 53.7 & 51.7 & 67.1 & 48.4 & 54.4 & 49.9 & 27.3 & 59.1 & 51.5 & 51.7 & 57.0 & 67.2 \\
\rowcolor{refGray} EWC & 60.4 & 78.8 & 66.2 & 63.0 & 61.0 & 71.1 & 51.8 & 59.4 & 49.9 & 29.9 & 72.3 & 52.9 & 55.3 & 63.2 & 68.2 \\
\rowcolor{refGray} FOREVER & 59.6 & 79.4 & 63.5 & 59.6 & 61.5 & 69.8 & 48.9 & 62.0 & 50.4 & 29.9 & 71.3 & 54.6 & 51.9 & 62.5 & 68.5 \\
\cmidrule(lr){1-16}
\rowcolor{mergeBlue} L\&S & 62.4 & 79.4 & 68.3 & 67.0 & \best{65.4} & 70.5 & 52.6 & 61.5 & 51.7 & 32.3 & 77.3 & 55.5 & 52.1 & 66.0 & 67.4 \\
\rowcolor{mergeBlue} AIMMerging & 62.9 & 78.1 & \best{69.1} & \best{67.9} & 65.1 & 71.3 & \best{53.0} & 62.4 & 52.0 & 32.1 & \best{78.7} & 55.5 & 51.9 & \best{67.4} & 67.9 \\
\rowcolor{mergeBlue} OPCM & 61.9 & 78.8 & 66.8 & 62.4 & 62.8 & 70.5 & 51.4 & 61.9 & 54.9 & 38.1 & 72.0 & 55.1 & 57.5 & 63.9 & 70.0 \\
\rowcolor{rcwmBlue} GCWM & \best{63.7} & \best{81.4} & 68.9 & 64.2 & 64.7 & \best{72.7} & 52.7 & \best{63.7} & \best{56.4} & \best{38.8} & 74.3 & \best{56.6} & \best{59.1} & 65.8 & \best{72.2} \\
\midrule[0.55pt]

\rowcolor{sizeBand}
\multicolumn{16}{l}{\emph{Qwen3-14B}} \\
\rowcolor{mtlGold} MTL & \ub{68.6} & \ub{86.2} & \ub{74.0} & \ub{72.4} & \ub{67.8} & \ub{77.0} & \ub{56.8} & \ub{67.3} & \ub{61.9} & \ub{39.9} & \ub{79.4} & \ub{64.6} & \ub{61.7} & \ub{72.2} & \ub{77.6} \\
\rowcolor{refGray} Seq.\ SFT & 60.4 & 79.6 & 63.4 & 59.5 & 63.4 & 73.3 & 48.1 & 61.9 & 53.8 & 34.4 & 68.6 & 58.2 & 56.1 & 58.6 & 72.6 \\
\rowcolor{refGray} EWC & 65.3 & 84.2 & 70.0 & 67.0 & 63.4 & 74.1 & 52.6 & 66.3 & 59.3 & 33.5 & 80.9 & 61.2 & 55.9 & 69.1 & 71.7 \\
\rowcolor{refGray} FOREVER & 66.5 & \best{84.4} & 72.1 & 67.1 & 69.5 & 74.6 & \best{56.4} & \best{66.6} & 57.5 & 35.4 & 80.5 & 61.8 & 58.7 & 69.7 & 74.3 \\
\cmidrule(lr){1-16}
\rowcolor{mergeBlue} L\&S & 65.6 & 82.7 & 70.8 & 69.2 & 64.8 & 73.7 & 54.1 & 64.3 & 59.1 & 37.7 & 76.0 & 61.7 & 58.9 & 69.1 & 74.3 \\
\rowcolor{mergeBlue} AIMMerging & 66.4 & 83.9 & 71.7 & 70.1 & 65.6 & 74.7 & 54.6 & 65.0 & 59.7 & 37.7 & 77.1 & 62.4 & 59.5 & 70.0 & \best{75.3} \\
\rowcolor{mergeBlue} OPCM & 66.6 & 83.7 & 71.8 & 70.2 & 65.8 & 74.7 & 55.1 & 65.3 & \best{60.1} & \best{38.7} & 77.0 & \best{62.7} & \best{59.9} & 70.1 & 75.3 \\
\rowcolor{rcwmBlue} GCWM & \best{67.8} & 83.8 & \best{73.1} & \best{72.3} & \best{72.7} & \best{76.2} & 55.8 & 66.1 & 59.6 & 36.1 & \best{83.4} & 61.5 & 59.5 & \best{73.3} & 72.3 \\
\bottomrule[0.9pt]
\end{tabular}%
}
\arrayrulecolor{black}
\vspace{-1.5em}
\end{table*}

\vspace{-0.4em}
\subsection{Domain-Continual Post-Training}
\label{sec:domain_continual}
\vspace{-0.2em}

We first evaluate domain-continual post-training on the 14-domain MMLU-Pro sequence.
This setting tests whether GCWM can integrate domain-specific updates without accessing training data during the merge stage.
Table~\ref{tab:mmlu_domain_main} reports full results on Qwen3-1.7B, 8B, and 14B.
MTL is included as a joint-training upper-bound reference, while the main comparison is among data-free update-integration methods.
GCWM gives the strongest non-MTL overall performance at all three scales.
It improves over the best data-free baseline by \(+1.61\), \(+0.74\), and \(+1.23\) points on Qwen3-1.7B, 8B, and 14B, respectively.
The gains are broad rather than driven by a single domain: GCWM improves over AIMMerging on 12/14 domains at 1.7B, 10/14 domains at 8B, and 9/14 domains at 14B.
These results support the role of geometry conflict as a practical control signal for data-free continual update integration.
Complete results for all model scales are reported in Appendix~\ref{app:domain_continual_results}.

\vspace{-0.4em}
\subsection{Capability-Continual Post-Training}
\label{sec:capability_continual}
\vspace{-0.2em}
We next evaluate capability-continual post-training with sequential math and code updates.
Table~\ref{tab:capability_main} reports the performance of Qwen3-1.7B and 14B across three key domains: knowledge (GPQA-Diamond, MMLU-Pro), math (GSM8K, MATH-500), and code (HumanEval, MBPP).
At 1.7B, GCWM gives the best data-free average (62.6), improving over the strongest data-free baseline (OPCM, 56.8) by +5.78 points and leading on all six benchmarks.
At 14B, GCWM remains the strongest data-free method on average (74.3 vs. 72.9 for OPCM) and leads on GPQA-Diamond, GSM8K, HumanEval, and MMLU-Pro.
FOREVER can be higher in some settings because it revisits data through replay and additional sequential optimization; we include it as a replay-based reference, while the primary comparison is among data-free update-integration methods.
These results show that geometry-conflict-controlled integration extends beyond domain transfer to heterogeneous capability updates.
Full five-scale results are provided in Appendix~\ref{app:capability_continual_results}.

\begin{table*}[t]
\centering
\caption{Capability-continual results on Qwen3-1.7B and 14B. Scores are accuracies or pass@1 (\%). MTL is a joint-training reference; Seq.\ SFT, EWC, and FOREVER are training-pipeline references. Bold marks the best data-free method.}
\label{tab:capability_main}
\scriptsize
\setlength{\tabcolsep}{3.5pt}
\renewcommand{\arraystretch}{1.03}
\arrayrulecolor{ruleNavy}
\resizebox{\textwidth}{!}{%
\begin{tabular}{lrrrrrrrrrrrrrr}
\toprule[0.9pt]
Method & \multicolumn{2}{c}{Avg.} & \multicolumn{2}{c}{GPQA-D.} & \multicolumn{2}{c}{GSM8K} & \multicolumn{2}{c}{HumanEval} & \multicolumn{2}{c}{MATH-500} & \multicolumn{2}{c}{MBPP} & \multicolumn{2}{c}{MMLU-Pro} \\
\cmidrule(lr){2-3} \cmidrule(lr){4-5} \cmidrule(lr){6-7} \cmidrule(lr){8-9} \cmidrule(lr){10-11} \cmidrule(lr){12-13} \cmidrule(lr){14-15}
 & 1.7B & 14B & 1.7B & 14B & 1.7B & 14B & 1.7B & 14B & 1.7B & 14B & 1.7B & 14B & 1.7B & 14B \\
\midrule
\rowcolor{mtlGold} MTL & \ub{57.1} & \ub{74.6} & \ub{26.7} & \ub{33.3} & \ub{76.1} & \ub{92.1} & \ub{61.6} & \ub{84.2} & \ub{63.6} & \ub{87.8} & \ub{56.8} & \ub{80.5} & \ub{57.5} & \ub{69.8} \\
\rowcolor{refGray} Seq.\ SFT & 51.9 & 70.4 & 18.2 & 43.4 & 70.6 & 95.8 & 64.0 & 86.6 & 64.2 & 66.2 & 59.1 & 63.4 & 35.3 & 67.2 \\
\rowcolor{refGray} EWC & 54.7 & 73.5 & 24.8 & 43.4 & 75.7 & 95.4 & 61.6 & 86.0 & 67.8 & 68.0 & 57.6 & 78.6 & 40.5 & 69.4 \\
\rowcolor{refGray} FOREVER & 58.3 & 75.9 & 27.3 & 54.0 & 76.7 & 96.4 & 62.2 & 87.2 & 69.6 & 69.2 & 66.9 & 75.9 & 47.3 & 72.8 \\
\rowcolor{mergeBlue} L\&S & 52.4 & 71.3 & 21.3 & 38.4 & 71.0 & 94.1 & 62.8 & 83.7 & 57.5 & 76.5 & 58.4 & 78.5 & 43.5 & 56.8 \\
\rowcolor{mergeBlue} AIMMerging & 53.4 & 72.2 & 21.7 & 38.8 & 72.4 & 95.2 & 64.0 & 84.7 & 58.6 & 77.4 & 59.5 & \best{79.4} & 44.3 & 57.5 \\
\rowcolor{mergeBlue} OPCM & 56.8 & 72.9 & 23.2 & 38.0 & 73.0 & 94.5 & 65.2 & 80.5 & \best{67.6} & \best{79.7} & 59.9 & 78.6 & 51.9 & 66.3 \\
\rowcolor{rcwmBlue} GCWM & \best{58.3} & \best{74.3} & \best{26.3} & \best{39.9} & \best{79.0} & \best{95.8} & \best{67.4} & \best{86.6} & 63.4 & 78.2 & \best{61.5} & 76.7 & \best{52.0} & \best{68.8} \\
\bottomrule[0.9pt]
\end{tabular}%
}

\arrayrulecolor{black}
\vspace{-2em}
\end{table*}

\vspace{-0.5em}
\subsection{Ablations and Analysis}
\label{sec:ablations}
\vspace{-0.5em}

\begin{wrapfigure}{r}{0.5\textwidth}
  \centering
  \vspace{-3em}
  \includegraphics[width=0.99\linewidth]{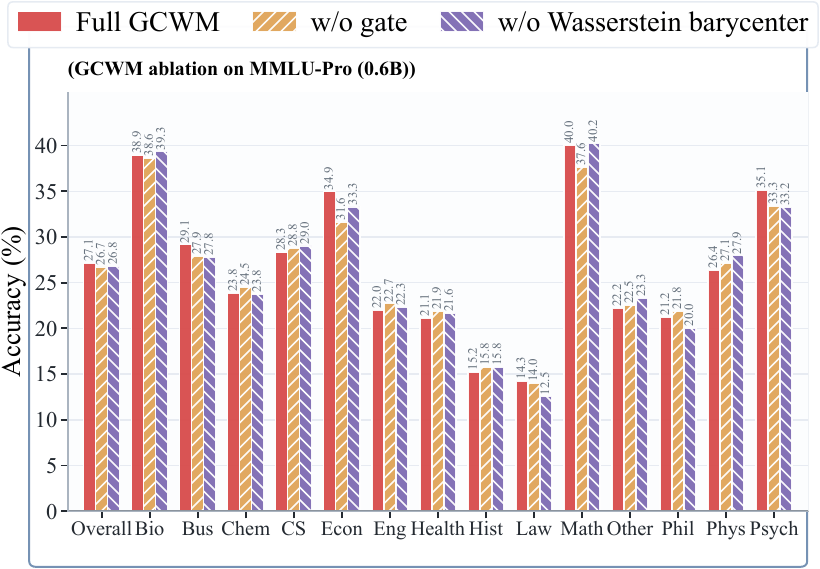}
  \caption{GCWM ablation on MMLU-Pro.}
  \label{fig:rcwm_ablation}
  \vspace{-1.0em}
\end{wrapfigure}

We ablate two merge-time components of GCWM: the conflict gate and the shared Wasserstein metric. 
All variants use the same Qwen3-0.6B domain-continual task experts and evaluation protocol, differing only in the integration rule. 
The \emph{w/o gate} variant removes conflict-conditioned gating and applies the geometry-aware branch uniformly, while \emph{w/o Wasserstein barycenter} replaces the shared Wasserstein barycenter with a mean covariance metric. 
Fig.~\ref{fig:rcwm_ablation} shows that full GCWM achieves the best aggregate score, improving overall accuracy from \(26.7/26.8\%\) to \(27.1\%\). 
The domain breakdown shows a trade-off rather than uniform dominance: removing the gate notably hurts economics, math, and psychology, whereas replacing the Wasserstein metric weakens business, law, and psychology. 
These results support the gate as a layer-wise control signal and the Wasserstein metric as a shared geometry for data-free update integration. 
Additional ablations on Qwen3-1.7B and 8B capability-continual settings, together with detailed breakdowns, are reported in Appendix~\ref{app:ablations}.

\vspace{-0.4em}
\textbf{Remark 3:} We also profile GCWM runtime and memory on Qwen3-8B and 14B in Appendix~\ref{app:runtime_memory_profile}, since these costs are the practical bottleneck for data-free update integration.
\\
\textbf{Remark 4:} Appendix~\ref{app:hyperparameter_sensitivity} reports Qwen3-8B rank and gate-parameter sensitivity.
\vspace{-0.5em}

\vspace{-0.3em}
\section{Conclusion}
\vspace{-0.4em}

We studied LLM continual post-training through task geometry, analyzing update norm, subspace alignment, gradient conflict, and geometry conflict across model scales and continual strategies. 
Our main finding is that forgetting is a state-relative update-integration failure: harmful steps occur when task-induced covariance geometries become incompatible with the geometry of the evolving model state. This explains why raw drift and isolated pairwise compatibility are insufficient, and why geometry conflict serves as both an explanatory signal for forgetting and a control signal for sequential update integration. 
\textbf{G}eometry-\textbf{C}onflict \textbf{W}asserstein \textbf{M}erging (GCWM) operationalizes this insight by constructing a Wasserstein shared metric from task-induced covariance geometry and gating data-free update integration by geometry conflict. Across domain-continual and capability-continual settings, GCWM improves retention and final performance over data-free baselines without replay data.

% \begin{ack}
% Use unnumbered first level headings for the acknowledgments. All acknowledgments
% go at the end of the paper before the list of references. Moreover, you are required to declare
% funding (financial activities supporting the submitted work) and competing interests (related financial activities outside the submitted work).
% More information about this disclosure can be found at: \url{https://neurips.cc/Conferences/2026/PaperInformation/FundingDisclosure}.

% Do {\bf not} include this section in the anonymized submission, only in the final paper. You can use the \texttt{ack} environment provided in the style file to automatically hide this section in the anonymized submission.
% \end{ack}

\newpage
\bibliographystyle{unsrt}
\bibliography{ref}

%%%%%%%%%%%%%%%%%%%%%%%%%%%%%%%%%%%%%%%%%%%%%%%%%%%%%%%%%%%%
\newpage
\appendix

\section*{Limitations}
\label{app:limitations}
Our analysis and experiments focus on Qwen3-scale open LLMs and on domain and capability continual post-training tasks built from public reasoning, knowledge, math, and code benchmarks.
Although the state-relative geometry signal is consistent across scales and methods, it should be viewed as an explanatory and control signal rather than a proof of causal necessity for all forms of forgetting.
GCWM is data-free at merge time and is therefore most relevant when historical data are unavailable or replay is undesirable; replay-based training can still be stronger when it is allowed to repeatedly revisit past data.
The method also assumes access to task-specific updates or checkpoints and incurs additional CPU cost for geometry construction and Wasserstein metric computation, though this cost is offline and does not affect inference.

\section{Discussion and Broader impacts}
\label{app:availability_compute_impact}

% Code, configuration files, analysis scripts, and processed result tables are available at \url{https://anonymous.4open.science/r/GCWM-D8B4}.
% Code, configuration files, analysis scripts, and processed result tables are available at \url{https://github.com/wyy-code/GCWM}.

GCWM may make continual LLM adaptation more practical by reducing reliance on replay data and by providing diagnostics for harmful update integration.
At the same time, easier post-training can also be used to adapt models toward unsafe or misleading behaviors if deployed without appropriate safety evaluation.

GCWM operates in parameter-update space and is designed for data-free continual update integration.
This differs from recent distribution- \cite{wang2025infigfusion} or preference-level \cite{gu2025infifpo} fusion methods, which primarily target cross-model knowledge transfer rather than state-relative continual forgetting.

This work does not introduce new safety guarantees; model developers should combine compatibility-controlled merging with standard data governance, red-teaming, and downstream safety checks before deployment.

\section{Implementation Details of Algorithm}
\label{app:algorithm}

This section summarizes the implementation-aligned version of GCWM. All task updates are constructed relative to the same pretrained model,
\[
\Delta_i = \theta_i - \theta_{\mathrm{pre}}.
\]
At continual step \(t\), GCWM forms an active set of updates according to a memory policy: either a history-aware policy that retains previous task updates, or an anchor-based policy that merges the current task against the previously merged state. For each target layer, GCWM then computes a shared task geometry, estimates geometry conflict, constructs a shared Wasserstein metric, performs geometry-aware merging, and applies only the incremental change relative to the previous merged state.

For clarity, the main text presents a projected formulation of whitening and recoloring. The implementation uses the corresponding regularized full-space transform, which preserves the projected geometry while regularizing the orthogonal complement.

The full continual update and merge process is summarized in Algorithm~\ref{alg:rcwm_impl}.

\section{Proof of Theorem~\ref{thm:conflict_controlled_integration}}
\label{app:proof_conflict_controlled_integration}

We prove a relative form of Theorem~\ref{thm:conflict_controlled_integration}, comparing GCWM against the plain merge at the same continual step. This form directly captures the additional effect introduced by the geometry-aware correction. For simplicity, we state the result for the \emph{projected shared-metric branch} of GCWM, which is the main analytical form studied in the paper; the dense-local implementation variant is treated as an efficiency-oriented special case.

\paragraph{Notation.}
At continual step \(t\), define
\[
\Theta_{\mathrm{plain},t}
:=
\theta_{t-1}+\eta_t \Delta_{\mathrm{plain},t},
\qquad
\Theta_{\mathrm{GCWM},t}
:=
\theta_{t-1}+\eta_t \Delta_{\mathrm{merge},t},
\]

\begin{algorithm}[H]
\caption{Implementation-aligned GCWM at continual step \(t\)}
\label{alg:rcwm_impl}
\footnotesize
\DontPrintSemicolon
\SetAlgoLined
\SetInd{0.3em}{0.8em}
\KwIn{
pretrained parameters \(\theta_{\mathrm{pre}}\), expert models \(\{\theta_i\}\), previous merged state \(\Delta_{\mathrm{merge},t-1}\), memory policy, task weights \(\{\omega_i\}\), hyperparameters \((r,\rho,\lambda,\tau,\kappa,\alpha_{\min},\alpha_{\max},\eta_t)\).
}
\KwOut{updated model parameters \(\theta_t\).}

Construct task vectors relative to the base model: \(\Delta_i=\theta_i-\theta_{\mathrm{pre}}\)\;

Build the active set \(\mathcal{A}_t=\{\Delta_i\}_{i=1}^m\) according to the memory policy:
history-aware mode uses historical task vectors (or a recent subset) plus the current task; anchor-based mode uses \(\Delta_{\mathrm{merge},t-1}\) and the current task\;

\For{each target layer \(\ell\)}{
Stack active layer updates \(V^{(\ell)}=\{\Delta_i^{(\ell)}\}_{i=1}^m\)\;

\textbf{Shared metric construction:}\;
\Indp
\If{dense local metric is used at layer \(\ell\)}{
Compute \(\Sigma_i^{(\ell)}=(\Delta_i^{(\ell)})^\top\Delta_i^{(\ell)}+\lambda I\), set \(\Sigma_{\mathrm{shared}}^{(\ell)}=\sum_i \omega_i \Sigma_i^{(\ell)}\), and compute the layer conflict score from normalized pairwise Bures distances between \(\{\Sigma_i^{(\ell)}\}\)\;
}
\Else{
For each \(\Delta_i^{(\ell)}\), compute a truncated right SVD and retain \((V_i^{(\ell)},S_i^{(\ell)})\)\;
Form \(Q^{(\ell)}=\mathrm{orth}\!\left([V_1^{(\ell)},\dots,V_m^{(\ell)}]\right)\)\;
Construct \(B_i^{(\ell)}=(Q^{(\ell)})^\top V_i^{(\ell)} \mathrm{diag}\!\big((S_i^{(\ell)})^2\big) (V_i^{(\ell)})^\top Q^{(\ell)}+\lambda I\)\;
\If{Wasserstein barycenter metric is used}{
compute \(B_{\mathrm{shared}}^{(\ell)}\) as the Gaussian Wasserstein barycenter of \(\{B_i^{(\ell)}\}\)\;
}
\Else{
compute \(B_{\mathrm{shared}}^{(\ell)}=\sum_i\omega_i B_i^{(\ell)}\)\;
}
Compute the layer conflict score from normalized pairwise Bures distances between \(\{B_i^{(\ell)}\}\)\;
}
\Indm

\textbf{Gate computation:}\;
\[
\alpha^{(\ell)}
=
\alpha_{\min}
+
(\alpha_{\max}-\alpha_{\min})
\,\sigma\!\big(\kappa(g^{(\ell)}-\tau)\big).
\]

\textbf{Merge branch selection:}\;
\Indp
\If{\(\alpha^{(\ell)}\) is below the skip threshold}{
Set \(\Delta_{\mathrm{merge},t}^{(\ell)}=\Delta_{\mathrm{plain}}^{(\ell)}\), where \(\Delta_{\mathrm{plain}}^{(\ell)}\) is the plain merge operator on \(V^{(\ell)}\)\;
}
\Else{
Whiten the active updates under the shared metric:\;
\Indp
dense branch: \(\tilde{\Delta}_i^{(\ell)}=\Delta_i^{(\ell)}(\Sigma_{\mathrm{shared}}^{(\ell)})^{-1/2}\)\;
projected branch: \(\tilde{\Delta}_i^{(\ell)}=\mathcal{T}_{Q^{(\ell)},B_{\mathrm{shared}}^{(\ell)},\lambda}^{-}(\Delta_i^{(\ell)})\)\;
\Indm

Apply merge operator to \(\{\tilde{\Delta}_i^{(\ell)}\}\), obtaining \(\tilde{\Delta}_{\mathrm{geo}}^{(\ell)}\)\;

Recolor the geometry-aware merge:\;
\Indp
dense branch: \(\Delta_{\mathrm{geo}}^{(\ell)}=\tilde{\Delta}_{\mathrm{geo}}^{(\ell)}(\Sigma_{\mathrm{shared}}^{(\ell)})^{1/2}\)\;
projected branch: \(\Delta_{\mathrm{geo}}^{(\ell)}=\mathcal{T}_{Q^{(\ell)},B_{\mathrm{shared}}^{(\ell)},\lambda}^{+}(\tilde{\Delta}_{\mathrm{geo}}^{(\ell)})\)\;
\Indm

\If{\(\alpha^{(\ell)}\) is above \(1\) minus the skip threshold}{
Set \(\Delta_{\mathrm{merge},t}^{(\ell)}=\Delta_{\mathrm{geo}}^{(\ell)}\)\;
}
\Else{
Compute \(\Delta_{\mathrm{plain}}^{(\ell)}\) and blend
\[
\Delta_{\mathrm{merge},t}^{(\ell)}
=
\alpha^{(\ell)}\Delta_{\mathrm{geo}}^{(\ell)}
+
(1-\alpha^{(\ell)})\Delta_{\mathrm{plain}}^{(\ell)}.
\]
}
}
\Indm
}

Set \(\Delta_{\mathrm{inc},t}^{(\ell)}=\Delta_{\mathrm{merge},t}^{(\ell)}-\Delta_{\mathrm{merge},t-1}^{(\ell)}\) with \(\Delta_{\mathrm{merge},0}^{(\ell)}=0\), and update \(\theta_t^{(\ell)}=\theta_{t-1}^{(\ell)}+\eta_t\,\Delta_{\mathrm{inc},t}^{(\ell)}\)\;

\Return{\(\theta_t\)}\;
\end{algorithm}

where \(\eta_t>0\) is the outer step coefficient, \(\Delta_{\mathrm{plain},t}\) is the ungated merge, and \(\Delta_{\mathrm{merge},t}\) is the GCWM update. For each target layer \(\ell\), define the GCWM correction relative to the plain merge as
\begin{equation}
\delta_t^{(\ell)}
:=
\Delta_{\mathrm{merge},t}^{(\ell)}-\Delta_{\mathrm{plain},t}^{(\ell)}.
\label{eq:delta_def_appendix_final}
\end{equation}
By the GCWM blending rule in Eq.~\eqref{eq:final-merge},
\begin{equation}
\delta_t^{(\ell)}
=
\alpha_t^{(\ell)}
\Bigl(
\Delta_{\mathrm{geo},t}^{(\ell)}-\Delta_{\mathrm{plain},t}^{(\ell)}
\Bigr),
\label{eq:delta_gate_identity_appendix_final}
\end{equation}
where \(\alpha_t^{(\ell)}\in[\alpha_{\min},\alpha_{\max}]\) is the layer-wise gate.

For a layer-wise collection \(X=\{X^{(\ell)}\}_{\ell=1}^L\), define the Frobenius inner product
\[
\langle X,Y\rangle
:=
\sum_{\ell=1}^L \mathrm{tr}\!\big((X^{(\ell)})^\top Y^{(\ell)}\big).
\]
For a positive semidefinite matrix \(M\), define the metric-induced norm
\begin{equation}
\|X\|_{M}^2
:=
\mathrm{tr}(X M X^\top).
\label{eq:metric_norm_appendix_final}
\end{equation}

\paragraph{Implementation-aligned shared metric.}
For each layer \(\ell\) and step \(t\), let \(Q_t^{(\ell)}\in\mathbb{R}^{d_{\mathrm{in}}\times r_t^{(\ell)}}\) be the shared basis and \(\bar B_t^{(\ell)}\in\mathbb{R}^{r_t^{(\ell)}\times r_t^{(\ell)}}\) the projected shared metric. To align the theorem with the implementation, define the corresponding full-space regularized metric
\begin{equation}
\widetilde B_t^{(\ell)}
:=
Q_t^{(\ell)} \bar B_t^{(\ell)} \bigl(Q_t^{(\ell)}\bigr)^\top
+
\lambda \Bigl(I - Q_t^{(\ell)}\bigl(Q_t^{(\ell)}\bigr)^\top\Bigr),
\label{eq:full_space_metric_appendix}
\end{equation}
where \(\lambda>0\) is the regularization parameter and \(I\) is the identity matrix of size \(d_{\mathrm{in}}\times d_{\mathrm{in}}\). The norm used in the proof is
\[
\|X\|_{\widetilde B_t^{(\ell)}}^2
=
\mathrm{tr}\!\bigl(X \widetilde B_t^{(\ell)} X^\top\bigr).
\]
This is the metric induced by the full-space whitening/recoloring operators used in the implementation.

\paragraph{Center mismatch quantity.}
Let \(\{B_{i,t}^{(\ell)}\}_{i=1}^{m_t}\) denote the projected task geometries at layer \(\ell\) and step \(t\), and let \(\bar B_t^{(\ell)}\) be their shared Wasserstein barycenter. Define the weighted center mismatch
\begin{equation}
r_t^{(\ell)}
:=
\sum_{i=1}^{m_t}\omega_{i,t}\,
d_{\mathrm B}^2\!\bigl(B_{i,t}^{(\ell)},\bar B_t^{(\ell)}\bigr),
\label{eq:center_mismatch_appendix_final}
\end{equation}
where \(\omega_{i,t}\ge 0\), \(\sum_i \omega_{i,t}=1\), and \(d_{\mathrm B}(\cdot,\cdot)\) is the Bures distance from \cite{bhatia2019bures} This quantity measures how far the active task geometries lie from the shared center.

We use the following noraml assumptions.

\begin{assumption}[Local smoothness]
\label{ass:local_smoothness_appendix_final}
For every previously acquired task \(u\), the loss \(\mathcal L_u\) is twice continuously differentiable in a neighborhood of the line segment
\[
\bigl\{
\Theta_{\mathrm{plain},t}+s(\Theta_{\mathrm{GCWM},t}-\Theta_{\mathrm{plain},t})
: s\in[0,1]
\bigr\}.
\]
\end{assumption}

\begin{assumption}[Projected-geometry adequacy]
\label{ass:geometry_adequacy_appendix_final}
For every previously acquired task \(u\), target layer \(\ell\), and step \(t\), there exists a nonnegative constant \(a_{u,t}^{(\ell)}\) such that
\begin{equation}
\bigl|
\langle
\nabla_{\ell}\mathcal L_u(\Theta_{\mathrm{plain},t}),
\delta_t^{(\ell)}
\rangle
\bigr|
\;\le\;
a_{u,t}^{(\ell)}\, r_t^{(\ell)}.
\label{eq:geometry_adequacy_appendix_final}
\end{equation}
This assumption formalizes that the projected geometry captures the dominant first-order directions relevant to cross-task interference.
\end{assumption}

\begin{assumption}[Layer-separable metric curvature bound]
\label{ass:metric_curvature_appendix_final}
For every previously acquired task \(u\), step \(t\), and \(s\in[0,1]\), there exist nonnegative constants \(d_{u,t}^{(\ell)}\) such that
\begin{equation}
\Bigl\langle
\delta_t,\,
\nabla^2\mathcal L_u\!\bigl(\Theta_{\mathrm{plain},t}+s\eta_t\delta_t\bigr)\,
\delta_t
\Bigr\rangle
\;\le\;
\sum_{\ell=1}^L
d_{u,t}^{(\ell)}
\,
\|\delta_t^{(\ell)}\|_{\widetilde B_t^{(\ell)}}^2.
\label{eq:metric_curvature_appendix_final}
\end{equation}
This assumption allows cross-layer couplings in the full Hessian to be absorbed into layer-wise constants while measuring displacement in the implementation-aligned full-space metric \(\widetilde B_t^{(\ell)}\).
\end{assumption}

The next lemma shows that the center mismatch \(r_t^{(\ell)}\) can be controlled by the pairwise geometry conflict \(g_t^{(\ell)}\) used by GCWM.

\begin{lemma}[Center mismatch is controlled by pairwise geometry conflict]
\label{lem:center_mismatch_to_conflict}
Fix a layer \(\ell\) and step \(t\), and assume \(m_t \ge 2\). Assume that the pairwise weights in Eq.~\eqref{eq:layer-geometry-conflict} are chosen as
\[
w_{ij,t}
=
\frac{\omega_{i,t}\omega_{j,t}}{Z_t},
\qquad
Z_t:=\sum_{1\le i<j\le m_t}\omega_{i,t}\omega_{j,t}.
\]
Let
\begin{equation}
M_t^{(\ell)}
:=
\max_{1\le i<j\le m_t}
\Bigl(
\mathrm{tr}(B_{i,t}^{(\ell)})
+
\mathrm{tr}(B_{j,t}^{(\ell)})
+\varepsilon
\Bigr).
\label{eq:trace_scale_constant}
\end{equation}
Then
\begin{equation}
r_t^{(\ell)}
\;\le\;
M_t^{(\ell)}\, g_t^{(\ell)}.
\label{eq:center_mismatch_bound}
\end{equation}
When \(m_t=1\), both \(r_t^{(\ell)}\) and \(g_t^{(\ell)}\) vanish, so the bound is trivial.
\end{lemma}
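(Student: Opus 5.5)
The plan is to use the variational characterization of the barycenter in Eq.~\eqref{eq:wasserstein-barycenter} and compare it against the individual task geometries as competitors. Since each \(B_{k,t}^{(\ell)}\) is positive semidefinite (indeed positive definite, because of the \(\lambda I\) term in Eq.~\eqref{eq:covariance-geometry}), it is feasible for the minimization. Hence, for every \(k\),
\[
r_t^{(\ell)}=\sum_i\omega_{i,t}\,d_{\mathrm B}^2\bigl(\bar B_t^{(\ell)},B_{i,t}^{(\ell)}\bigr)\le\sum_i\omega_{i,t}\,d_{\mathrm B}^2\bigl(B_{k,t}^{(\ell)},B_{i,t}^{(\ell)}\bigr).
\]
I will average this inequality over \(k\) with weights \(\omega_{k,t}\). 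Since the \(\omega_{k,t}\) sum to one, this avoids any need to know where the barycenter actually sits.

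Next I will use symmetry of \(d_{\mathrm B}\) and the fact that \(d_{\mathrm B}(B,B)=0\). With these, the double sum collapses to
\[
r_t^{(\ell)}\le\sum_{k,i}\omega_{k,t}\omega_{i,t}\,d_{\mathrm B}^2\bigl(B_{k,t}^{(\ell)},B_{i,t}^{(\ell)}\bigr)=2\sum_{i<j}\omega_{i,t}\omega_{j,t}\,d_{\mathrm B}^2\bigl(B_{i,t}^{(\ell)},B_{j,t}^{(\ell)}\bigr).
\]
Each squared distance is then rewritten through Eq.~\eqref{eq:pairwise-geometry-conflict} as
\[
d_{\mathrm B}^2\bigl(B_{i,t}^{(\ell)},B_{j,t}^{(\ell)}\bigr)=\gamma_{ij,t}^{(\ell)}\bigl(\mathrm{tr}(B_{i,t}^{(\ell)})+\mathrm{tr}(B_{j,t}^{(\ell)})+\varepsilon\bigr)\le M_t^{(\ell)}\gamma_{ij,t}^{(\ell)}.
\]
This step uses \(\gamma_{ij,t}^{(\ell)}\ge0\) and the definition of \(M_t^{(\ell)}\) in Eq.~\eqref{eq:trace_scale_constant}.

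Combining the two displays with the choice \(w_{ij,t}=\omega_{i,t}\omega_{j,t}/Z_t\) gives
\[
r_t^{(\ell)}\le 2Z_t\,M_t^{(\ell)}\sum_{i<j}w_{ij,t}\gamma_{ij,t}^{(\ell)}=2Z_t\,M_t^{(\ell)}g_t^{(\ell)}.
\]
It remains to check \(2Z_t\le1\). This follows from
\[
1=\Bigl(\sum_i\omega_{i,t}\Bigr)^2=\sum_i\omega_{i,t}^2+2Z_t,
\]
so \(2Z_t=1-\sum_i\omega_{i,t}^2\le1\). When \(m_t\ge2\) we need \(Z_t>0\) for the weights to be well defined. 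I will note this holds whenever at least two \(\omega_{i,t}\) are positive, and otherwise treat that degenerate case like \(m_t=1\).

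For \(m_t=1\), the barycenter is \(B_{1,t}^{(\ell)}\) itself, so \(r_t^{(\ell)}=0\), and \(g_t^{(\ell)}\) is an empty sum equal to zero. The main subtlety is the existence of a minimizer in Eq.~\eqref{eq:wasserstein-barycenter}, which I will take from the cited fixed-point theory for Gaussian barycenters. Even without that, the argument only needs that \(\bar B_t^{(\ell)}\) achieves an objective no larger than any feasible competitor, so any approximate minimizer that does at least as well as every \(B_{k,t}^{(\ell)}\) also satisfies the bound.
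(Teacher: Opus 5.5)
Your proposal is correct and matches the paper's proof step for step. You test the barycenter against each $B_{k,t}^{(\ell)}$, average with weights $\omega_{k,t}$, collapse the double sum using symmetry and the vanishing diagonal, bound each $d_{\mathrm B}^2$ by $M_t^{(\ell)}\gamma_{ij,t}^{(\ell)}$, and finish with $2Z_t=1-\sum_i\omega_{i,t}^2\le 1$; your remarks on feasibility of the $B_{k,t}^{(\ell)}$, on needing $Z_t>0$, and on existence of the minimizer are points of care the paper leaves implicit.
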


\begin{proof}
For fixed \(\ell,t\), define the Fr\'echet functional
\[
F(B)
:=
\sum_{i=1}^{m_t}\omega_{i,t}\,
d_{\mathrm B}^2(B_{i,t}^{(\ell)},B).
\]
By construction, \(\bar B_t^{(\ell)}\) is a minimizer of \(F\). Therefore, for every \(j\in\{1,\dots,m_t\}\),
\[
F(\bar B_t^{(\ell)})
\le
F(B_{j,t}^{(\ell)}).
\]
Multiplying both sides by \(\omega_{j,t}\) and summing over \(j\) yields
\begin{align}
r_t^{(\ell)}
=
F(\bar B_t^{(\ell)})
&\le
\sum_{j=1}^{m_t}\omega_{j,t}
\sum_{i=1}^{m_t}\omega_{i,t}\,
d_{\mathrm B}^2(B_{i,t}^{(\ell)},B_{j,t}^{(\ell)})
\nonumber\\
&=
\sum_{i=1}^{m_t}\sum_{j=1}^{m_t}
\omega_{i,t}\omega_{j,t}\,
d_{\mathrm B}^2(B_{i,t}^{(\ell)},B_{j,t}^{(\ell)}).
\label{eq:barycenter_variance_bound}
\end{align}
Because the diagonal terms vanish, this becomes
\begin{equation}
r_t^{(\ell)}
\le
2\sum_{1\le i<j\le m_t}
\omega_{i,t}\omega_{j,t}\,
d_{\mathrm B}^2(B_{i,t}^{(\ell)},B_{j,t}^{(\ell)}).
\label{eq:pairwise_bures_sum}
\end{equation}

Next, by the definition of the normalized pairwise conflict in Eq.~\eqref{eq:pairwise-geometry-conflict},
\[
d_{\mathrm B}^2(B_{i,t}^{(\ell)},B_{j,t}^{(\ell)})
=
\gamma_{ij,t}^{(\ell)}
\Bigl(
\mathrm{tr}(B_{i,t}^{(\ell)})+\mathrm{tr}(B_{j,t}^{(\ell)})+\varepsilon
\Bigr)
\le
M_t^{(\ell)}\gamma_{ij,t}^{(\ell)}.
\]
Substituting this bound into Eq.~\eqref{eq:pairwise_bures_sum} gives
\begin{equation}
r_t^{(\ell)}
\le
2M_t^{(\ell)}
\sum_{1\le i<j\le m_t}
\omega_{i,t}\omega_{j,t}\,
\gamma_{ij,t}^{(\ell)}.
\label{eq:pairwise_gamma_bound}
\end{equation}
Using the definition of \(w_{ij,t}\), we obtain
\[
\sum_{1\le i<j\le m_t}
\omega_{i,t}\omega_{j,t}\,
\gamma_{ij,t}^{(\ell)}
=
Z_t \sum_{1\le i<j\le m_t}
w_{ij,t}\,\gamma_{ij,t}^{(\ell)}
=
Z_t\, g_t^{(\ell)}.
\]
Hence
\[
r_t^{(\ell)}
\le
2M_t^{(\ell)} Z_t\, g_t^{(\ell)}.
\]
Finally, since \(\sum_i \omega_{i,t}=1\),
\[
Z_t
=
\sum_{i<j}\omega_{i,t}\omega_{j,t}
=
\frac{1-\sum_i \omega_{i,t}^2}{2}
\le
\frac12.
\]
Therefore
\[
r_t^{(\ell)}
\le
M_t^{(\ell)}\, g_t^{(\ell)},
\]
which proves Eq.~\eqref{eq:center_mismatch_bound}.
\end{proof}

We are now ready to prove the main result.

The case \(m_t=1\) is trivial, since then the pairwise geometry conflict vanishes and GCWM reduces to the plain merge. We therefore state the theorem for \(m_t \ge 2\).

\begin{theoreml}[Conflict-Controlled Integration]
\label{thm:conflict_controlled_integration_appendix_final}
Assume \(m_t \ge 2\) for every analyzed layer \(\ell\). Under Assumptions~\ref{ass:local_smoothness_appendix_final}, \ref{ass:geometry_adequacy_appendix_final}, and \ref{ass:metric_curvature_appendix_final}, and with pairwise weights chosen as in Lemma~\ref{lem:center_mismatch_to_conflict}, the additional loss incurred on a previously acquired task \(u\) by GCWM relative to the plain merge satisfies
\begin{equation}
\mathcal L_u(\Theta_{\mathrm{GCWM},t})
-
\mathcal L_u(\Theta_{\mathrm{plain},t})
\;\le\;
\eta_t \sum_{\ell=1}^L c_{u,t}^{(\ell)}\, g_t^{(\ell)}
\;+\;
\frac{\eta_t^2}{2}
\sum_{\ell=1}^L
d_{u,t}^{(\ell)}
\bigl\|
\Delta_{\mathrm{merge},t}^{(\ell)}-\Delta_{\mathrm{plain},t}^{(\ell)}
\bigr\|_{\widetilde B_t^{(\ell)}}^2,
\label{eq:theorem1_final_appendix_revised}
\end{equation}
where
\[
c_{u,t}^{(\ell)} := a_{u,t}^{(\ell)} M_t^{(\ell)}.
\]
Hence, the additional loss induced by GCWM relative to the plain merge is controlled by shared geometry conflict and gated merge displacement.
\end{theoreml}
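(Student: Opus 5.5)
The plan is a second-order Taylor expansion of \(\mathcal L_u\) along the segment joining the two proposals. The linear term is then bounded using Assumption~\ref{ass:geometry_adequacy_appendix_final} combined with Lemma~\ref{lem:center_mismatch_to_conflict}. The quadratic term is bounded using Assumption~\ref{ass:metric_curvature_appendix_final}.

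First, note that \(\Theta_{\mathrm{GCWM},t}-\Theta_{\mathrm{plain},t}=\eta_t\delta_t\), where \(\delta_t=\{\delta_t^{(\ell)}\}_\ell\) is defined in Eq.~\eqref{eq:delta_def_appendix_final}. Assumption~\ref{ass:local_smoothness_appendix_final} makes \(\phi(s):=\mathcal L_u(\Theta_{\mathrm{plain},t}+s\eta_t\delta_t)\) twice continuously differentiable on \([0,1]\). Taylor's theorem with Lagrange remainder then gives, for some \(s^\star\in[0,1]\),
\[
\mathcal L_u(\Theta_{\mathrm{GCWM},t})-\mathcal L_u(\Theta_{\mathrm{plain},t})
=\eta_t\sum_{\ell}\langle\nabla_\ell\mathcal L_u(\Theta_{\mathrm{plain},t}),\delta_t^{(\ell)}\rangle
+\frac{\eta_t^2}{2}\bigl\langle\delta_t,\nabla^2\mathcal L_u(\Theta_{\mathrm{plain},t}+s^\star\eta_t\delta_t)\,\delta_t\bigr\rangle .
\]
If one prefers to avoid the mean-value point, the integral form of the remainder works as well, because the curvature bound holds uniformly in \(s\).

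For the first-order term, bound each layer's inner product by its absolute value. Assumption~\ref{ass:geometry_adequacy_appendix_final} gives a bound of \(a_{u,t}^{(\ell)}r_t^{(\ell)}\). Lemma~\ref{lem:center_mismatch_to_conflict} applies because \(m_t\ge2\) and the pairwise weights are the product weights, so \(r_t^{(\ell)}\le M_t^{(\ell)}g_t^{(\ell)}\). Since \(\eta_t>0\) and \(a_{u,t}^{(\ell)}\ge0\), the sign is preserved, and the linear term is at most \(\eta_t\sum_\ell c_{u,t}^{(\ell)}g_t^{(\ell)}\) with \(c_{u,t}^{(\ell)}=a_{u,t}^{(\ell)}M_t^{(\ell)}\ge0\).

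For the second-order term, apply Assumption~\ref{ass:metric_curvature_appendix_final} at \(s=s^\star\). This bounds the Hessian quadratic form by \(\sum_\ell d_{u,t}^{(\ell)}\|\delta_t^{(\ell)}\|_{\widetilde B_t^{(\ell)}}^2\). Substituting the definition of \(\delta_t^{(\ell)}\) turns this into the displacement \(\|\Delta_{\mathrm{merge},t}^{(\ell)}-\Delta_{\mathrm{plain},t}^{(\ell)}\|^2_{\widetilde B_t^{(\ell)}}\). Adding the two bounds yields Eq.~\eqref{eq:theorem1_final_appendix_revised}.

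I do not expect a genuine obstacle, since the assumptions are tailored to the two Taylor terms. The only delicate points are bookkeeping. The Hessian must be evaluated on the same segment that Assumption~\ref{ass:local_smoothness_appendix_final} covers, which holds because the segment is parameterized by \(s\eta_t\delta_t\), exactly as in Assumption~\ref{ass:metric_curvature_appendix_final}. The seminorm \(\|\cdot\|_{\widetilde B_t^{(\ell)}}\) must be well defined, and it is, because \(\widetilde B_t^{(\ell)}\succeq0\) (indeed positive definite for \(\lambda>0\), given \(\bar B_t^{(\ell)}\succeq0\)). Finally, the Lemma's hypothesis on the weights \(w_{ij}\) must match the \(g_t^{(\ell)}\) that appears in the statement.
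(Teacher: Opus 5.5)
Your proposal is correct and follows the paper's proof essentially step for step: a Taylor expansion along $\Theta_{\mathrm{plain},t}+s\eta_t\delta_t$, a layerwise bound on the linear term via Assumption~\ref{ass:geometry_adequacy_appendix_final} and Lemma~\ref{lem:center_mismatch_to_conflict} giving $c_{u,t}^{(\ell)}=a_{u,t}^{(\ell)}M_t^{(\ell)}$, and a bound on the Hessian term via Assumption~\ref{ass:metric_curvature_appendix_final}. The only difference is cosmetic: the paper uses the integral remainder with $\int_0^1(1-s)\,ds=\tfrac12$, while you use a Lagrange point $s^\star$; the two are equivalent here because the curvature bound holds uniformly in $s$. (Your aside that $\widetilde B_t^{(\ell)}$ is positive definite would actually need $\bar B_t^{(\ell)}\succ0$, but the argument never uses that.)
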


\begin{proof}
By Assumption~\ref{ass:local_smoothness_appendix_final}, the exact second-order Taylor formula in integral form yields
\begin{align}
&\mathcal L_u(\Theta_{\mathrm{GCWM},t})
-
\mathcal L_u(\Theta_{\mathrm{plain},t})
=
\Bigl\langle
\nabla \mathcal L_u(\Theta_{\mathrm{plain},t}),
\Theta_{\mathrm{GCWM},t}-\Theta_{\mathrm{plain},t}
\Bigr\rangle
\nonumber\\
&\quad
+
\int_0^1 (1-s)\,
\Bigl\langle
\Theta_{\mathrm{GCWM},t}-\Theta_{\mathrm{plain},t},\,
\nabla^2\mathcal L_u\!\bigl(
\Theta_{\mathrm{plain},t}
+s(\Theta_{\mathrm{GCWM},t}-\Theta_{\mathrm{plain},t})
\bigr)
\nonumber\\
&\qquad\qquad
(\Theta_{\mathrm{GCWM},t}-\Theta_{\mathrm{plain},t})
\Bigr\rangle ds.
\label{eq:taylor_integral_exact_revised}
\end{align}
Since
\[
\Theta_{\mathrm{GCWM},t}-\Theta_{\mathrm{plain},t}
=
\eta_t \delta_t,
\]
Eq.~\eqref{eq:taylor_integral_exact_revised} becomes
\begin{align}
\mathcal L_u(\Theta_{\mathrm{GCWM},t})
-
\mathcal L_u(\Theta_{\mathrm{plain},t})
&=
\eta_t
\langle
\nabla \mathcal L_u(\Theta_{\mathrm{plain},t}),
\delta_t
\rangle
\nonumber\\
&\quad
+
\eta_t^2
\int_0^1 (1-s)\,
\Bigl\langle
\delta_t,\,
\nabla^2\mathcal L_u\!\bigl(\Theta_{\mathrm{plain},t}+s\eta_t\delta_t\bigr)\,
\delta_t
\Bigr\rangle ds.
\label{eq:taylor_substituted_revised}
\end{align}

We first bound the linear term. By layer-wise decomposition,
\begin{equation}
\langle
\nabla \mathcal L_u(\Theta_{\mathrm{plain},t}),
\delta_t
\rangle
=
\sum_{\ell=1}^L
\langle
\nabla_{\ell}\mathcal L_u(\Theta_{\mathrm{plain},t}),
\delta_t^{(\ell)}
\rangle.
\label{eq:linear_layer_decomp_revised}
\end{equation}
Applying Assumption~\ref{ass:geometry_adequacy_appendix_final},
\begin{equation}
\bigl|
\langle
\nabla \mathcal L_u(\Theta_{\mathrm{plain},t}),
\delta_t
\rangle
\bigr|
\le
\sum_{\ell=1}^L
a_{u,t}^{(\ell)}\, r_t^{(\ell)}.
\label{eq:linear_bound_via_r_revised}
\end{equation}
By Lemma~\ref{lem:center_mismatch_to_conflict},
\[
r_t^{(\ell)}
\le
M_t^{(\ell)} g_t^{(\ell)}.
\]
Hence
\begin{equation}
\bigl|
\langle
\nabla \mathcal L_u(\Theta_{\mathrm{plain},t}),
\delta_t
\rangle
\bigr|
\le
\sum_{\ell=1}^L
a_{u,t}^{(\ell)}M_t^{(\ell)} g_t^{(\ell)}
=
\sum_{\ell=1}^L
c_{u,t}^{(\ell)} g_t^{(\ell)}.
\label{eq:linear_bound_via_g_revised}
\end{equation}
Therefore
\begin{equation}
\eta_t
\langle
\nabla \mathcal L_u(\Theta_{\mathrm{plain},t}),
\delta_t
\rangle
\le
\eta_t
\sum_{\ell=1}^L
c_{u,t}^{(\ell)} g_t^{(\ell)}.
\label{eq:linear_term_final_revised}
\end{equation}

We next bound the second-order term. By Assumption~\ref{ass:metric_curvature_appendix_final}, for every \(s\in[0,1]\),
\begin{equation}
\Bigl\langle
\delta_t,\,
\nabla^2\mathcal L_u\!\bigl(\Theta_{\mathrm{plain},t}+s\eta_t\delta_t\bigr)\,
\delta_t
\Bigr\rangle
\le
\sum_{\ell=1}^L
d_{u,t}^{(\ell)}
\,
\|\delta_t^{(\ell)}\|_{\widetilde B_t^{(\ell)}}^2.
\label{eq:quadratic_pointwise_bound_revised}
\end{equation}
Substituting Eq.~\eqref{eq:quadratic_pointwise_bound_revised} into the integral term in Eq.~\eqref{eq:taylor_substituted_revised} yields
\begin{align}
&\eta_t^2
\int_0^1 (1-s)\,
\Bigl\langle
\delta_t,\,
\nabla^2\mathcal L_u\!\bigl(\Theta_{\mathrm{plain},t}+s\eta_t\delta_t\bigr)\,
\delta_t
\Bigr\rangle ds
\nonumber\\
&\le
\eta_t^2
\int_0^1 (1-s)\,ds
\sum_{\ell=1}^L
d_{u,t}^{(\ell)}
\,
\|\delta_t^{(\ell)}\|_{\widetilde B_t^{(\ell)}}^2
\nonumber\\
&=
\frac{\eta_t^2}{2}
\sum_{\ell=1}^L
d_{u,t}^{(\ell)}
\,
\|\delta_t^{(\ell)}\|_{\widetilde B_t^{(\ell)}}^2.
\label{eq:quadratic_integrated_bound_revised}
\end{align}

Combining Eqs.~\eqref{eq:taylor_substituted_revised}, \eqref{eq:linear_term_final_revised}, and \eqref{eq:quadratic_integrated_bound_revised}, we obtain
\begin{equation}
\mathcal L_u(\Theta_{\mathrm{GCWM},t})
-
\mathcal L_u(\Theta_{\mathrm{plain},t})
\le
\eta_t \sum_{\ell=1}^L c_{u,t}^{(\ell)} g_t^{(\ell)}
+
\frac{\eta_t^2}{2}
\sum_{\ell=1}^L
d_{u,t}^{(\ell)}
\,
\|\delta_t^{(\ell)}\|_{\widetilde B_t^{(\ell)}}^2.
\label{eq:combined_bound_revised}
\end{equation}
Finally, substituting
\[
\delta_t^{(\ell)}
=
\Delta_{\mathrm{merge},t}^{(\ell)}-\Delta_{\mathrm{plain},t}^{(\ell)}
\]
from Eq.~\eqref{eq:delta_def_appendix_final} into Eq.~\eqref{eq:combined_bound_revised} gives Eq.~\eqref{eq:theorem1_final_appendix_revised}. This completes the proof.
\end{proof}

\begin{remark}[Gate-scaled displacement]
\label{rem:gate_scaled_displacement_revised}
By Eq.~\eqref{eq:delta_gate_identity_appendix_final},
\begin{equation}
\|\delta_t^{(\ell)}\|_{\widetilde B_t^{(\ell)}}^2
=
\bigl(\alpha_t^{(\ell)}\bigr)^2
\bigl\|
\Delta_{\mathrm{geo},t}^{(\ell)}-\Delta_{\mathrm{plain},t}^{(\ell)}
\bigr\|_{\widetilde B_t^{(\ell)}}^2.
\label{eq:gate_scaled_displacement_revised}
\end{equation}
Thus, the GCWM gate directly scales the second-order displacement term in Theorem~\ref{thm:conflict_controlled_integration_appendix_final}. This is the precise sense in which geometry conflict becomes an actionable control signal.
\end{remark}

% =========================
% Appendix
% =========================

\section{Proof of Proposition~\ref{prop:compatibility_regimes}}
\label{app:proof_compatibility_regimes}

For convenience, we restate the main definitions. At continual step \(t\) and layer \(\ell\), GCWM forms the merged update
\begin{equation}
\Delta_{\mathrm{merge},t}^{(\ell)}
=
\alpha_t^{(\ell)} \Delta_{\mathrm{geo},t}^{(\ell)}
+
\bigl(1-\alpha_t^{(\ell)}\bigr)\Delta_{\mathrm{plain},t}^{(\ell)},
\label{eq:prop1_blend_appendix}
\end{equation}
where \(\Delta_{\mathrm{geo},t}^{(\ell)}\) is the geometry-aware merge, \(\Delta_{\mathrm{plain},t}^{(\ell)}\) is the plain merge, and \(\alpha_t^{(\ell)} \in [\alpha_{\min},\alpha_{\max}]\) is the layer-wise gate. We also define
\begin{equation}
D_t^{(\ell)}
:=
\bigl\|
\Delta_{\mathrm{geo},t}^{(\ell)}-\Delta_{\mathrm{plain},t}^{(\ell)}
\bigr\|_{\widetilde B_t^{(\ell)}},
\label{eq:prop1_D_appendix}
\end{equation}
where \(\widetilde B_t^{(\ell)}\) is the implementation-aligned shared metric defined in Eq.~\eqref{eq:full_space_metric_appendix}.

The gate is given by Eq.~\eqref{eq:conflict-gate}, namely
\begin{equation}
\alpha_t^{(\ell)}
=
\alpha_{\min}
+
(\alpha_{\max}-\alpha_{\min})
\,\sigma\!\bigl(\kappa(g_t^{(\ell)}-\tau)\bigr),
\label{eq:prop1_gate_appendix}
\end{equation}
where \(\sigma(z)=1/(1+e^{-z})\) is the sigmoid function, \(\kappa>0\) is the sharpness parameter, and \(\tau\) is the conflict threshold.

\begin{proof}
We first prove the two norm identities. Subtracting \(\Delta_{\mathrm{plain},t}^{(\ell)}\) from both sides of Eq.~\eqref{eq:prop1_blend_appendix} gives
\begin{align}
\Delta_{\mathrm{merge},t}^{(\ell)}-\Delta_{\mathrm{plain},t}^{(\ell)}
&=
\alpha_t^{(\ell)} \Delta_{\mathrm{geo},t}^{(\ell)}
+
\bigl(1-\alpha_t^{(\ell)}\bigr)\Delta_{\mathrm{plain},t}^{(\ell)}
-
\Delta_{\mathrm{plain},t}^{(\ell)}
\nonumber\\
&=
\alpha_t^{(\ell)}
\Bigl(
\Delta_{\mathrm{geo},t}^{(\ell)}-\Delta_{\mathrm{plain},t}^{(\ell)}
\Bigr).
\label{eq:prop1_plain_diff}
\end{align}
Taking the norm induced by \(\widetilde B_t^{(\ell)}\) and using positive homogeneity,
\begin{align}
\bigl\|
\Delta_{\mathrm{merge},t}^{(\ell)}-\Delta_{\mathrm{plain},t}^{(\ell)}
\bigr\|_{\widetilde B_t^{(\ell)}}
&=
\bigl\|
\alpha_t^{(\ell)}
\bigl(
\Delta_{\mathrm{geo},t}^{(\ell)}-\Delta_{\mathrm{plain},t}^{(\ell)}
\bigr)
\bigr\|_{\widetilde B_t^{(\ell)}}
\nonumber\\
&=
\alpha_t^{(\ell)}
\bigl\|
\Delta_{\mathrm{geo},t}^{(\ell)}-\Delta_{\mathrm{plain},t}^{(\ell)}
\bigr\|_{\widetilde B_t^{(\ell)}}
\nonumber\\
&=
\alpha_t^{(\ell)} D_t^{(\ell)}.
\label{eq:prop1_first_identity}
\end{align}

Similarly, subtracting \(\Delta_{\mathrm{geo},t}^{(\ell)}\) from both sides of Eq.~\eqref{eq:prop1_blend_appendix} gives
\begin{align}
\Delta_{\mathrm{merge},t}^{(\ell)}-\Delta_{\mathrm{geo},t}^{(\ell)}
&=
\alpha_t^{(\ell)} \Delta_{\mathrm{geo},t}^{(\ell)}
+
\bigl(1-\alpha_t^{(\ell)}\bigr)\Delta_{\mathrm{plain},t}^{(\ell)}
-
\Delta_{\mathrm{geo},t}^{(\ell)}
\nonumber\\
&=
\bigl(1-\alpha_t^{(\ell)}\bigr)
\Bigl(
\Delta_{\mathrm{plain},t}^{(\ell)}-\Delta_{\mathrm{geo},t}^{(\ell)}
\Bigr).
\label{eq:prop1_geo_diff}
\end{align}
Taking the same norm, and using the fact that \(\|X\|_{\widetilde B_t^{(\ell)}}=\|-X\|_{\widetilde B_t^{(\ell)}}\),
\begin{align}
\bigl\|
\Delta_{\mathrm{merge},t}^{(\ell)}-\Delta_{\mathrm{geo},t}^{(\ell)}
\bigr\|_{\widetilde B_t^{(\ell)}}
&=
\bigl(1-\alpha_t^{(\ell)}\bigr)
\bigl\|
\Delta_{\mathrm{plain},t}^{(\ell)}-\Delta_{\mathrm{geo},t}^{(\ell)}
\bigr\|_{\widetilde B_t^{(\ell)}}
\nonumber\\
&=
\bigl(1-\alpha_t^{(\ell)}\bigr)
\bigl\|
\Delta_{\mathrm{geo},t}^{(\ell)}-\Delta_{\mathrm{plain},t}^{(\ell)}
\bigr\|_{\widetilde B_t^{(\ell)}}
\nonumber\\
&=
\bigl(1-\alpha_t^{(\ell)}\bigr) D_t^{(\ell)}.
\label{eq:prop1_second_identity}
\end{align}

We next prove the threshold characterization of the gate. Since \(\kappa>0\) and the sigmoid function \(\sigma(\cdot)\) is monotone increasing with \(\sigma(0)=1/2\), Eq.~\eqref{eq:prop1_gate_appendix} implies
\[
g_t^{(\ell)} \le \tau
\quad\Longrightarrow\quad
\sigma\!\bigl(\kappa(g_t^{(\ell)}-\tau)\bigr)\le \frac12,
\]
and therefore
\begin{align}
\alpha_t^{(\ell)}
&=
\alpha_{\min}
+
(\alpha_{\max}-\alpha_{\min})
\,\sigma\!\bigl(\kappa(g_t^{(\ell)}-\tau)\bigr)
\nonumber\\
&\le
\alpha_{\min}
+
\frac{\alpha_{\max}-\alpha_{\min}}{2}
=
\frac{\alpha_{\min}+\alpha_{\max}}{2}.
\label{eq:prop1_low_conflict}
\end{align}
Likewise,
\[
g_t^{(\ell)} \ge \tau
\quad\Longrightarrow\quad
\sigma\!\bigl(\kappa(g_t^{(\ell)}-\tau)\bigr)\ge \frac12,
\]
which gives
\begin{equation}
\alpha_t^{(\ell)}
\ge
\frac{\alpha_{\min}+\alpha_{\max}}{2}.
\label{eq:prop1_high_conflict}
\end{equation}

Combining Eqs.~\eqref{eq:prop1_first_identity}, \eqref{eq:prop1_second_identity}, \eqref{eq:prop1_low_conflict}, and \eqref{eq:prop1_high_conflict} establishes the proposition. In particular, when \(g_t^{(\ell)}\) is below the threshold \(\tau\), GCWM applies a weaker geometry-aware correction, whereas when \(g_t^{(\ell)}\) exceeds \(\tau\), the geometry-aware branch receives a larger weight. This establishes the regime characterization claimed in Proposition~\ref{prop:compatibility_regimes}.
\end{proof}

\section{Analysis Metrics}
\label{app:analysis_metrics}

We use four families of signals to analyze continual post-training dynamics.
For a step update \(\Delta_t\), we measure parameter drift by the update norm
\[
n_t=\Big(\sum_{\ell\in\mathcal{L}}\|\Delta_t^{(\ell)}\|_F^2\Big)^{1/2}.
\]
Given the retained right-singular subspace \(V_j^{(\ell)}\) of task \(T_j\), we measure subspace overlap by the subspace alignment ratio (SAR)
\[
\mathrm{SAR}_{i\rightarrow j}^{(\ell)}
=
\frac{\|\Delta_i^{(\ell)}V_j^{(\ell)}\|_F}
{\|\Delta_i^{(\ell)}\|_F+\varepsilon},
\qquad
\mathrm{SAR}_{ij}^{(\ell)}
=
\frac{1}{2}
\big(
\mathrm{SAR}_{i\rightarrow j}^{(\ell)}
+
\mathrm{SAR}_{j\rightarrow i}^{(\ell)}
\big).
\]
For projected task geometries \(B_i^{(\ell)}\) and \(B_j^{(\ell)}\), we define geometry conflict by the normalized Bures--Wasserstein discrepancy
\[
\gamma_{ij}^{(\ell)}
=
\frac{
d_{\mathrm{B}}^2(B_i^{(\ell)},B_j^{(\ell)})
}{
\mathrm{tr}(B_i^{(\ell)})+\mathrm{tr}(B_j^{(\ell)})+\varepsilon
}.
\]
For gradient-based diagnostics, we compute gradient cosine similarity
\[
c_{ij}^{(\ell)}
=
\frac{\langle g_i^{(\ell)},g_j^{(\ell)}\rangle}
{\|g_i^{(\ell)}\|_2\|g_j^{(\ell)}\|_2+\varepsilon},
\]
and report both mean cosine and the fraction of negative-cosine pairs.
State-relative variants replace one task update by the current continual-training state.

\FloatBarrier
\section{Additional Empirical Analysis for Sec.~\ref{sec:findings}}
\label{app:sec3_additional}

This appendix mirrors the four empirical findings in Sec.~\ref{sec:findings}: diagnostic dashboards, step-level and state-relative analysis, pairwise compatibility, and module-level geometry--gradient complementarity.

\subsection{Statistical Confidence for Sec.~\ref{sec:findings}}
\label{app:sec3_confidence}

To quantify statistical uncertainty in Sec.~\ref{sec:findings}, we report run-cluster bootstrap confidence intervals and permutation-test significance for key Spearman associations.
For each statistic, we use 2{,}000 bootstrap resamples with clusters defined by run (model-size \(\times\) method sequence), which preserves within-run temporal dependence across continual steps.
We also report two-sided permutation \(p\)-values with 3{,}000 shuffles.

\begin{figure*}[!htbp]
  \centering
  \includegraphics[width=\textwidth]{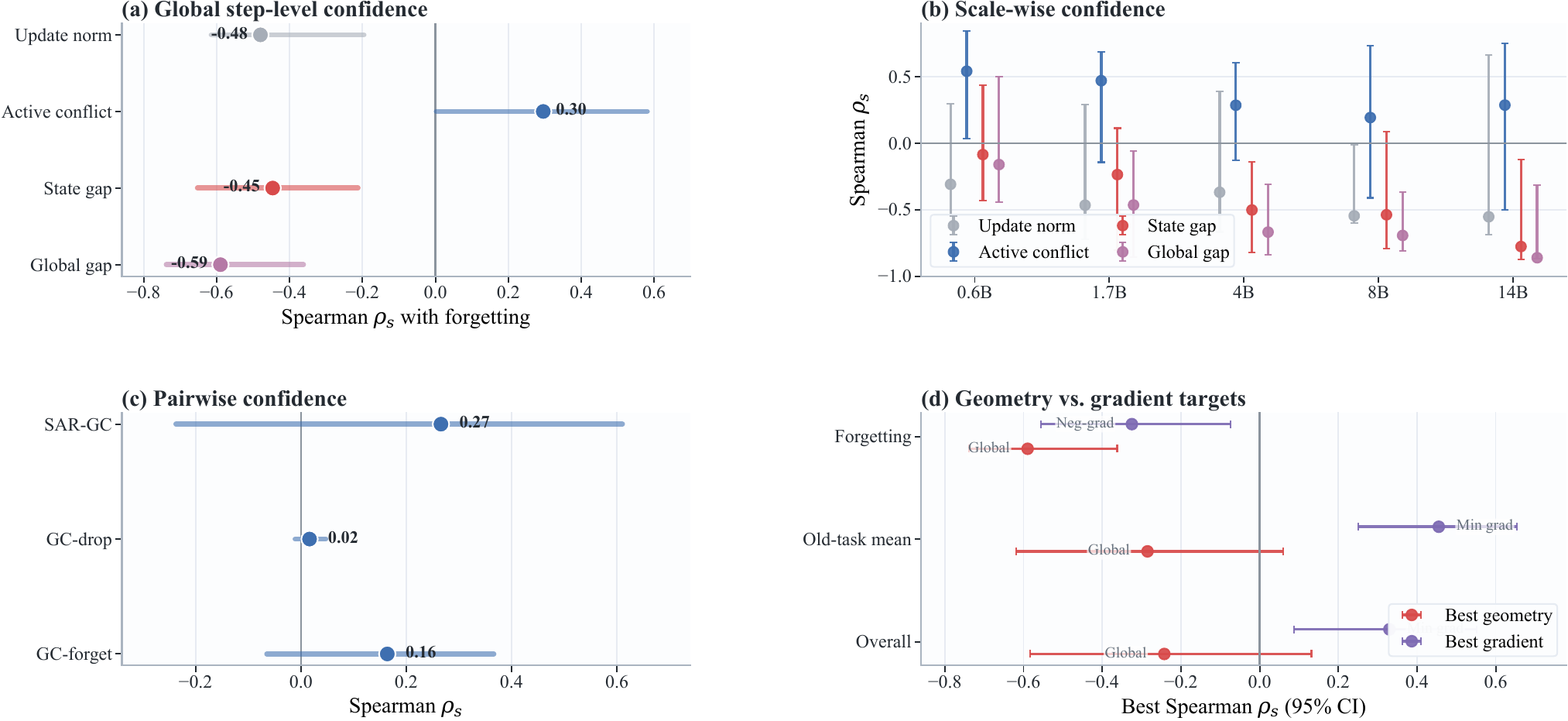}
  \caption{
  {Statistical confidence for Sec.~\ref{sec:findings}.}
  Error bars show run-cluster bootstrap 95\% confidence intervals for Spearman associations.
  }
  \label{fig:app_sec3_stat_confidence}
\end{figure*}

\begin{table}[!htbp]
\centering
\caption{{Global step-level confidence for forgetting associations.}}
\label{tab:app_sec3_conf_step_global}
\scriptsize
\setlength{\tabcolsep}{4pt}
\begin{tabular}{lccc}
\toprule
Signal & Spearman $\rho_s$ [95\% CI] & $p_{\mathrm{perm}}$ & $n$ \\
\midrule
Update norm & -0.48 [-0.61, -0.20] & <1e-3 & 250 \\
Active conflict & 0.30 [0.00, 0.58] & <1e-3 & 250 \\
State gap & -0.45 [-0.65, -0.21] & <1e-3 & 250 \\
Global gap & -0.59 [-0.74, -0.36] & <1e-3 & 250 \\
\bottomrule
\end{tabular}

\end{table}

\begin{table*}[!htbp]
\centering
\caption{{Step-level confidence by model scale.}}
\label{tab:app_sec3_conf_step_scale}
\scriptsize
\setlength{\tabcolsep}{4pt}
\begin{tabular}{lcccc}
\toprule
Scale & Update norm & Active conflict & State gap & Global gap \\
\midrule
0.6B & -0.31 [-0.64, 0.29] & 0.54 [0.04, 0.84] & -0.09 [-0.43, 0.44] & -0.16 [-0.44, 0.50] \\
1.7B & -0.47 [-0.72, 0.29] & 0.47 [-0.14, 0.69] & -0.24 [-0.80, 0.11] & -0.47 [-0.86, -0.06] \\
4B & -0.37 [-0.67, 0.39] & 0.28 [-0.13, 0.60] & -0.50 [-0.82, -0.14] & -0.67 [-0.84, -0.31] \\
8B & -0.55 [-0.60, -0.01] & 0.19 [-0.41, 0.73] & -0.54 [-0.79, 0.09] & -0.69 [-0.81, -0.37] \\
14B & -0.55 [-0.69, 0.66] & 0.29 [-0.50, 0.75] & -0.78 [-0.87, -0.12] & -0.86 [-0.87, -0.32] \\
\bottomrule
\end{tabular}

\end{table*}

\begin{table*}[!htbp]
\centering
\caption{{Step-level confidence by continual-training method.}}
\label{tab:app_sec3_conf_step_method}
\scriptsize
\setlength{\tabcolsep}{4pt}
\begin{tabular}{lcccc}
\toprule
Method & Update norm & Active conflict & State gap & Global gap \\
\midrule
Seq. SFT & -0.23 [-0.30, 0.18] & -0.16 [-0.49, 0.36] & -0.68 [-0.83, -0.30] & -0.70 [-0.83, -0.33] \\
EWC & 0.35 [0.04, 0.71] & 0.51 [0.08, 0.78] & -0.40 [-0.71, 0.04] & -0.42 [-0.77, 0.03] \\
FOREVER & -0.06 [-0.41, 0.18] & -0.27 [-0.50, 0.09] & -0.06 [-0.31, 0.34] & -0.10 [-0.33, 0.33] \\
AIMMerging & 0.33 [0.15, 0.52] & 0.22 [-0.22, 0.63] & 0.10 [-0.03, 0.30] & 0.01 [-0.20, 0.21] \\
\bottomrule
\end{tabular}

\end{table*}

\begin{table}[!htbp]
\centering
\caption{{Global pairwise confidence for Sec.~\ref{sec:sar_geometry}.}}
\label{tab:app_sec3_conf_pair_global}
\scriptsize
\setlength{\tabcolsep}{4pt}
\begin{tabular}{lccc}
\toprule
Pairwise relation & Spearman $\rho_s$ [95\% CI] & $p_{\mathrm{perm}}$ & $n$ \\
\midrule
SAR vs. geometry conflict & 0.27 [-0.24, 0.61] & <1e-3 & 1820 \\
GC vs. immediate old-task change & 0.02 [-0.01, 0.05] & 0.485 & 1820 \\
GC vs. forgetting-from-best & 0.16 [-0.06, 0.37] & <1e-3 & 1820 \\
\bottomrule
\end{tabular}

\end{table}

\begin{table*}[!htbp]
\centering
\caption{{Best geometry vs.\ gradient predictors by target.}}
\label{tab:app_sec3_conf_geo_grad}
\scriptsize
\setlength{\tabcolsep}{4pt}
\begin{tabular}{llll}
\toprule
Target & Best geometry predictor & Best gradient predictor & $\Delta|\rho_s|$ (Geom$-$Grad) \\
\midrule
Forgetting & Global gap: -0.59 [-0.74, -0.36] & Neg-grad ratio: -0.32 [-0.56, -0.07] & +0.26 \\
Old-task mean & Global gap: -0.28 [-0.62, 0.06] & Min grad-cos: 0.46 [0.25, 0.65] & -0.17 \\
Overall & Global gap: -0.24 [-0.58, 0.13] & Min grad-cos: 0.33 [0.09, 0.55] & -0.09 \\
\bottomrule
\end{tabular}

\end{table*}

Fig.~\ref{fig:app_sec3_stat_confidence} and Tables~\ref{tab:app_sec3_conf_step_global}--\ref{tab:app_sec3_conf_geo_grad} support the robustness of the main Sec.~\ref{sec:findings} claims.
At the global step level, global state-relative geometry gap remains the strongest forgetting-associated signal (\(\rho_s=-0.59\), 95\% CI \([-0.74,-0.36]\)), stronger than update norm (\(-0.48\), CI \([-0.61,-0.20]\)) and active conflict (\(0.30\), CI \([0.00,0.58]\)).
By scale, the state/global confidence intervals become more negative and better separated on larger models (4B--14B), while 0.6B intervals are substantially wider.
At the pairwise level, SAR--GC is non-redundant (\(\rho_s=0.27\)) but GC--drop remains near zero (\(\rho_s=0.02\), \(p=0.485\)).
For predictor families, geometry signals are strongest for forgetting, whereas gradient diagnostics are stronger for old-task mean and overall score, matching the complementarity claim in Sec.~\ref{sec:geometry_gradient}.

\subsection{Diagnostic Dashboards for Sec.~\ref{sec:findings}}
\label{app:sec3_dashboard}

\begin{figure*}[!htbp]
  \centering
  \includegraphics[width=\textwidth]{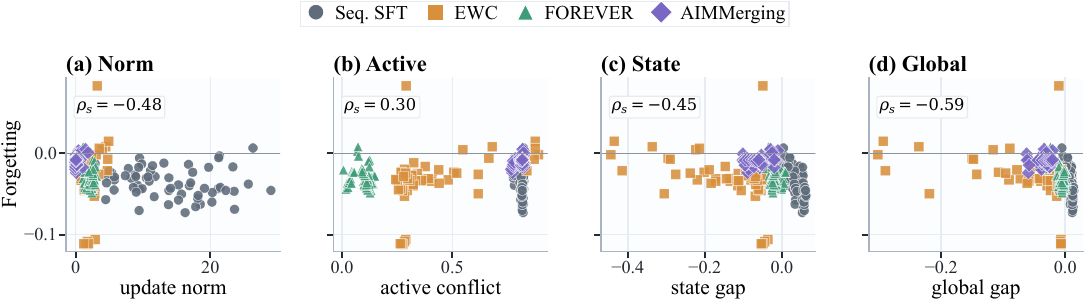}
  \caption{
  {Drift and geometry-discrepancy signals versus forgetting.}
  The four panels are arranged horizontally for readability. Each point is a continual post-training step; no smoothing or connecting curve is used.
  }
  \label{fig:app_sec3_drift_geometry}
\end{figure*}

\begin{figure*}[!htbp]
  \centering
  \includegraphics[width=\textwidth]{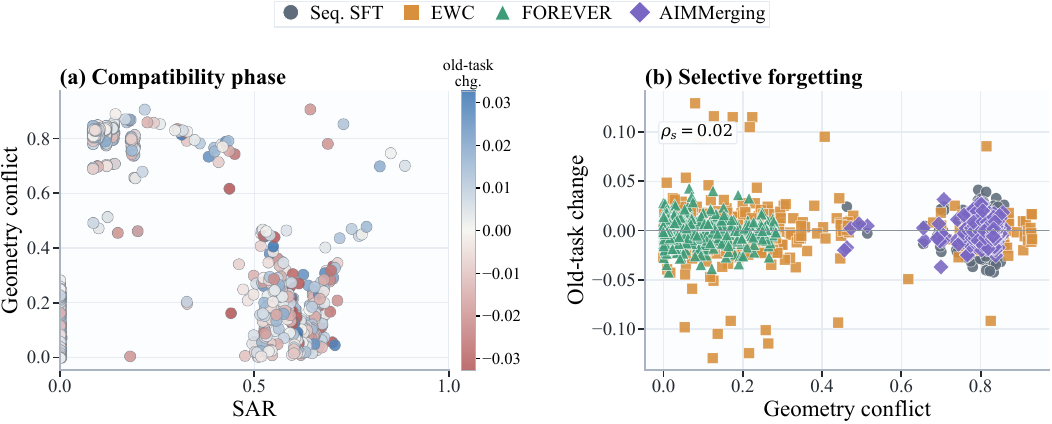}
  \caption{
  {Pairwise subspace and geometry compatibility.}
  The left panel compares SAR with geometry conflict, while the right panel relates pairwise geometry conflict to immediate old-task change.
  }
  \label{fig:app_sec3_compatibility_phase}
\end{figure*}

\begin{figure*}[!htbp]
  \centering
  \includegraphics[width=\textwidth]{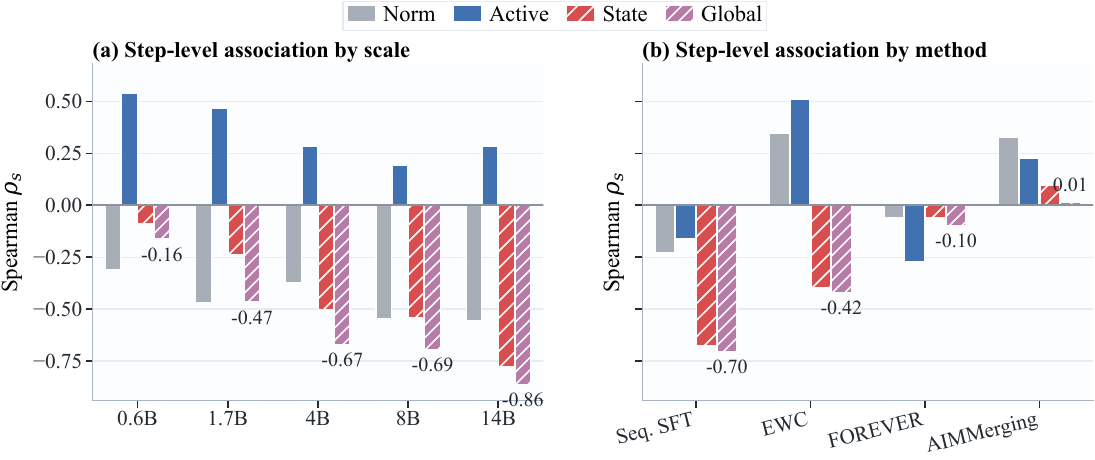}
  \caption{
  {Step-level correlation summary by scale and method.}
  Norm, active-pair conflict, state gap, and global gap are compared against forgetting using Spearman correlation.
  }
  \label{fig:app_sec3_step_corr_summary}
\end{figure*}

\begin{figure*}[!htbp]
  \centering
  \includegraphics[width=\textwidth]{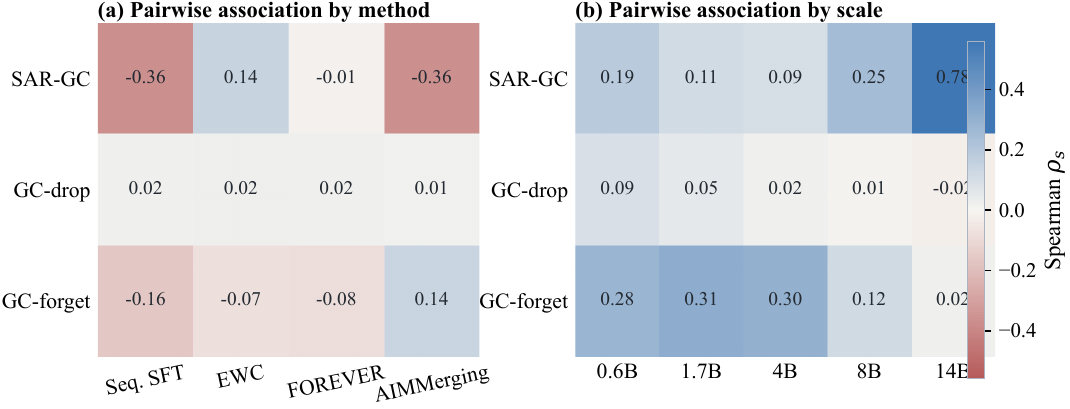}
  \caption{
  {Pairwise correlation summary by method and scale.}
  SAR-GC measures the relation between subspace overlap and geometry conflict; GC-drop and GC-forget measure pairwise geometry against immediate old-task change and best-previous forgetting.
  }
  \label{fig:app_sec3_pairwise_corr_summary}
\end{figure*}

\begin{figure*}[!htbp]
  \centering
  \includegraphics[width=\textwidth]{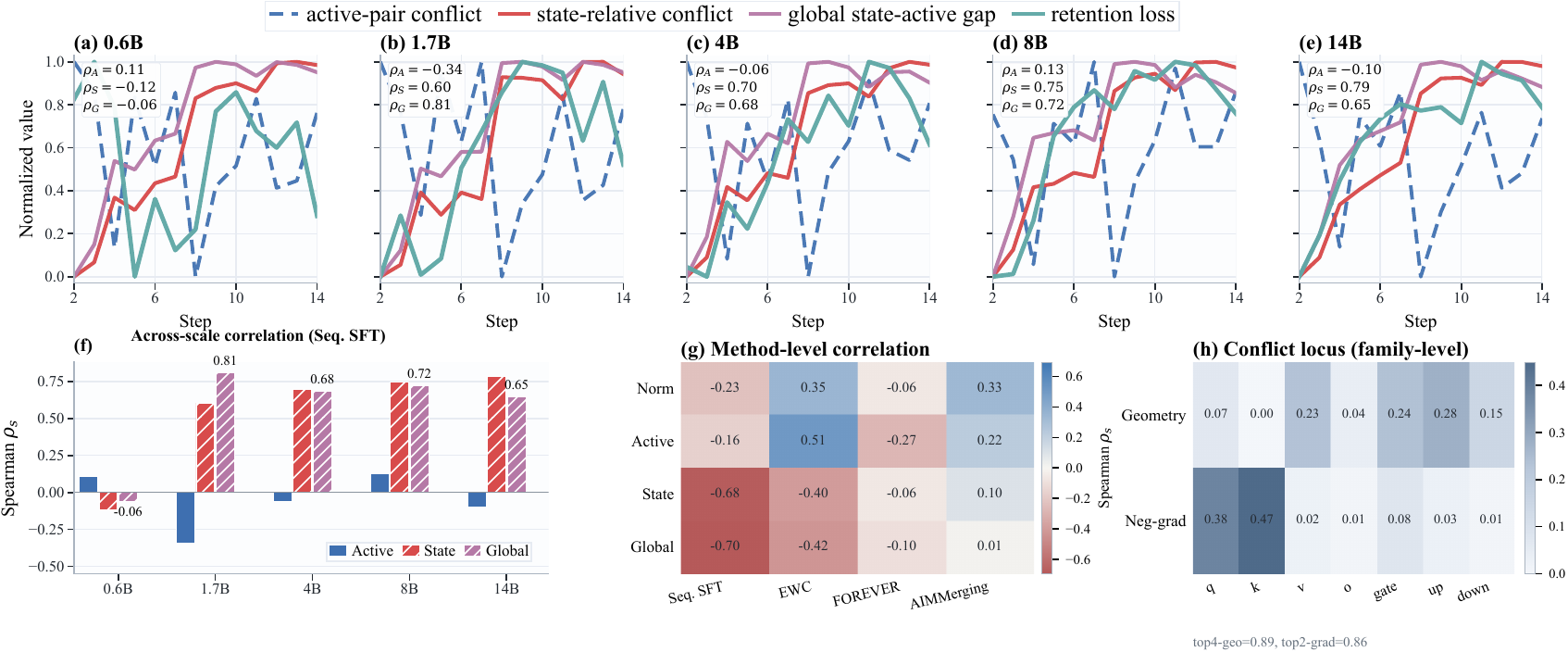}
  \caption{
  {Full state-relative geometry diagnostic.}
  This view expands the state-relative analysis with all Qwen3 scales, cross-scale correlations, method-level summaries, and module-level signal separation.
  }
  \label{fig:app_state_relative_focus_full}
\end{figure*}

Figs.~\ref{fig:app_sec3_drift_geometry}--\ref{fig:app_sec3_pairwise_corr_summary} split the former Sec.~\ref{sec:findings} overview dashboard into four readable diagnostics.
Fig.~\ref{fig:app_sec3_drift_geometry} confirms that update norm is only a coarse drift baseline: the global state-active gap has the strongest overall step-level association with forgetting (\(\rho_s=-0.59\)), compared with update norm (\(\rho_s=-0.48\)) and active-pair geometry conflict (\(\rho_s=0.30\)).
Fig.~\ref{fig:app_sec3_compatibility_phase} shows that SAR and geometry conflict are related but non-redundant, while pairwise geometry conflict alone remains weak for immediate selective forgetting.
Figs.~\ref{fig:app_sec3_step_corr_summary} and \ref{fig:app_sec3_pairwise_corr_summary} provide the corresponding scale/method summaries: the state-relative association strengthens on larger models, reaching \(-0.67\), \(-0.69\), and \(-0.86\) for Qwen3-4B, 8B, and 14B, whereas GC-drop remains near zero across methods and scales.

Fig.~\ref{fig:app_state_relative_focus_full} expands the state-relative tracking view.
Under Seq.\ SFT, active-pair conflict often changes abruptly with local task transitions, whereas state-relative and global geometry gaps evolve with the accumulated model state and more closely track retention loss.
For 1.7B/4B/8B/14B, the correlation between retention loss and state-relative conflict is \(0.60/0.70/0.75/0.79\), while the corresponding global-gap association is \(0.81/0.68/0.72/0.65\).
The 0.6B case is noisier, consistent with the weaker and less stable small-model correlations reported in Table~\ref{tab:app_step_corr}.
Together, these split diagnostics provide the complete evidence behind the focused main-text figures.

\FloatBarrier
\subsection{Additional Step-Level and State-Relative Analysis}
\label{app:additional_step_analysis}

\begin{figure*}[!htbp]
  \centering
  \includegraphics[width=\textwidth]{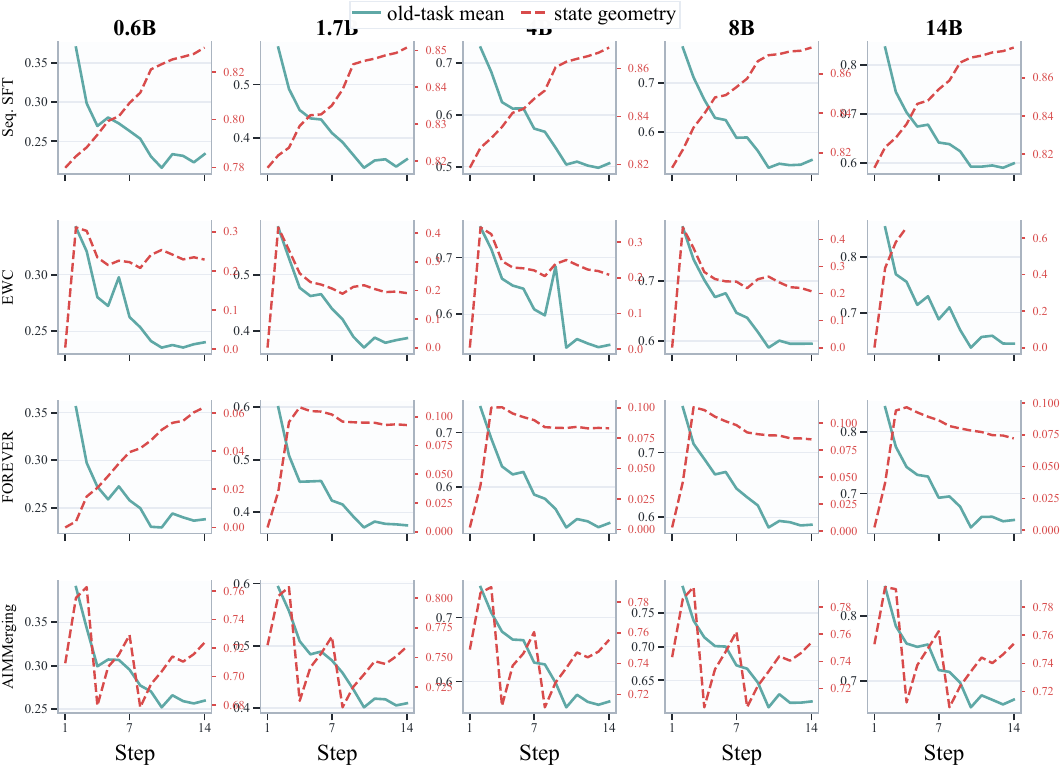}
  \caption{
  {Step-level continual post-training dynamics by method.}
  We show downstream retention and mechanism signals across steps for Seq. SFT, EWC, FOREVER, and AIMMerging.
  }
  \label{fig:app_step_dynamics}
\end{figure*}

\definecolor{rcwmBlue}{HTML}{DDEEFF}
\definecolor{mergeBlue}{HTML}{F1F7FD}
\definecolor{refGray}{HTML}{F7F7F7}
\definecolor{mtlGold}{HTML}{FFF4D6}
\definecolor{sizeBand}{HTML}{EAF2FA}
\definecolor{ruleNavy}{HTML}{244B73}
\providecommand{\best}[1]{\textbf{#1}}
\providecommand{\ub}[1]{\underline{#1}}

\begin{table}[!htbp]
\centering
\caption{
{Step-level Spearman correlations with forgetting.}
Norm denotes incremental update norm; Active denotes active-pair geometry conflict; State and Global denote state-relative geometry gaps.
}
\label{tab:app_step_corr}
\arrayrulecolor{ruleNavy}
\renewcommand{\arraystretch}{1.08}
\begin{tabular}{lrrrr}
\toprule[0.9pt]
\rowcolor{sizeBand}
Group & Norm & Active & State & Global \\
\midrule
\rowcolor{rcwmBlue} All & -0.48 & 0.30 & -0.45 & -0.59 \\
\rowcolor{refGray} 0.6B & -0.31 & 0.54 & -0.09 & -0.16 \\
\rowcolor{mergeBlue} 1.7B & -0.47 & 0.47 & -0.24 & -0.47 \\
\rowcolor{refGray} 4B & -0.37 & 0.28 & -0.50 & -0.67 \\
\rowcolor{mergeBlue} 8B & -0.55 & 0.19 & -0.54 & -0.69 \\
\rowcolor{refGray} 14B & -0.55 & 0.29 & -0.78 & -0.86 \\
\bottomrule[0.9pt]
\end{tabular}
\arrayrulecolor{black}
\end{table}

\begin{figure*}[!htbp]
  \centering
  \includegraphics[width=\textwidth]{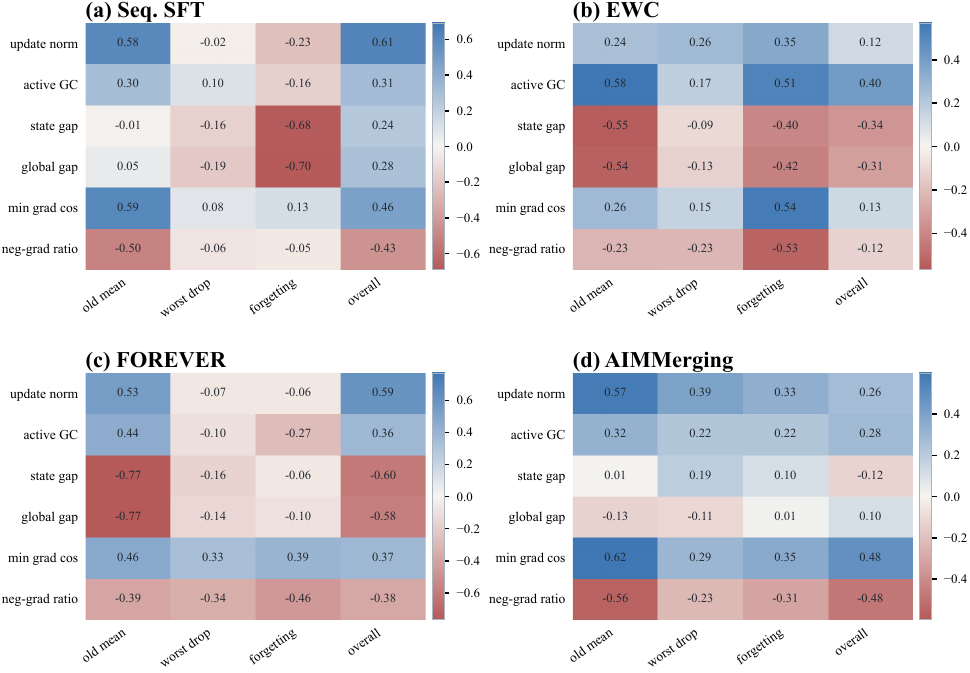}
  \caption{
  {Method-level correlation heatmaps.}
  We compare update norm, SAR, geometry conflict, and gradient conflict against retention, forgetting, and final performance.
  }
  \label{fig:app_method_corr}
\end{figure*}

\begin{table*}[!htbp]
  \centering
  \caption{
  {Step-level Spearman correlations with forgetting by method.}
  Norm denotes incremental update norm; Active denotes active-pair geometry conflict; State and Global denote state-relative geometry gaps.
  }
  \label{tab:app_step_corr_method}
  \scriptsize
  \setlength{\tabcolsep}{4pt}
  {\arrayrulecolor{ruleNavy}
\renewcommand{\arraystretch}{1.08}
\begin{tabular}{lrrrrrr}
\toprule[0.9pt]
\rowcolor{sizeBand}
Group & Update norm & Active GC & State gap & Global gap & Neg-grad ratio & Min grad cos \\
\midrule
\rowcolor{refGray} Seq. SFT & -0.23 & -0.16 & -0.68 & -0.70 & -0.05 & 0.13 \\
\rowcolor{refGray} EWC & 0.35 & 0.51 & -0.40 & -0.42 & -0.53 & 0.54 \\
\rowcolor{refGray} FOREVER & -0.06 & -0.27 & -0.06 & -0.10 & -0.46 & 0.39 \\
\rowcolor{mergeBlue} AIMMerging & 0.33 & 0.22 & 0.10 & 0.01 & -0.31 & 0.35 \\
\bottomrule[0.9pt]
\end{tabular}
\arrayrulecolor{black}}

\end{table*}

\begin{table*}[!htbp]
  \centering
  \caption{
  {Top step-level predictors for downstream targets.}
  We rank predictors by absolute Spearman correlation.
  }
  \label{tab:app_step_explanation_top}
  \scriptsize
  \setlength{\tabcolsep}{4pt}
  {\arrayrulecolor{ruleNavy}
\renewcommand{\arraystretch}{1.06}
\begin{tabular}{llrrr}
\toprule[0.9pt]
\rowcolor{sizeBand}
Target & Predictor & Pearson & Spearman & $|\rho_s|$ \\
\midrule
\rowcolor{rcwmBlue} Forgetting & Merged norm & -0.41 & -0.64 & 0.64 \\
\rowcolor{rcwmBlue} Forgetting & Global gap & -0.33 & -0.59 & 0.59 \\
\rowcolor{rcwmBlue} Forgetting & Update norm & -0.28 & -0.48 & 0.48 \\
\rowcolor{rcwmBlue} Forgetting & State gap & -0.31 & -0.45 & 0.45 \\
\rowcolor{rcwmBlue} Forgetting & Global active GC & 0.27 & 0.36 & 0.36 \\
\rowcolor{rcwmBlue} Forgetting & Neg-grad ratio & -0.16 & -0.32 & 0.32 \\
\rowcolor{rcwmBlue} Forgetting & Min grad cos & 0.25 & 0.30 & 0.30 \\
\rowcolor{rcwmBlue} Forgetting & Active GC & 0.25 & 0.30 & 0.30 \\
\midrule
\rowcolor{mergeBlue} Old-task mean & Min grad cos & 0.42 & 0.46 & 0.46 \\
\rowcolor{mergeBlue} Old-task mean & Neg-grad ratio & -0.16 & -0.38 & 0.38 \\
\rowcolor{mergeBlue} Old-task mean & Global gap & -0.23 & -0.28 & 0.28 \\
\rowcolor{mergeBlue} Old-task mean & Global active GC & 0.16 & 0.27 & 0.27 \\
\rowcolor{mergeBlue} Old-task mean & Active SAR & 0.06 & -0.26 & 0.26 \\
\rowcolor{mergeBlue} Old-task mean & Active SAR (clip) & 0.06 & -0.26 & 0.26 \\
\rowcolor{mergeBlue} Old-task mean & State gap & -0.22 & -0.21 & 0.21 \\
\rowcolor{mergeBlue} Old-task mean & Active GC & 0.12 & 0.20 & 0.20 \\
\midrule
\rowcolor{refGray} Overall & Min grad cos & 0.30 & 0.33 & 0.33 \\
\rowcolor{refGray} Overall & Neg-grad ratio & -0.18 & -0.31 & 0.31 \\
\rowcolor{refGray} Overall & Active SAR & 0.07 & -0.26 & 0.26 \\
\rowcolor{refGray} Overall & Active SAR (clip) & 0.08 & -0.26 & 0.26 \\
\rowcolor{refGray} Overall & Global gap & -0.09 & -0.24 & 0.24 \\
\rowcolor{refGray} Overall & State gap & -0.12 & -0.23 & 0.23 \\
\rowcolor{refGray} Overall & Active GC & 0.08 & 0.16 & 0.16 \\
\rowcolor{refGray} Overall & Global active GC & 0.10 & 0.15 & 0.15 \\
\bottomrule[0.9pt]
\end{tabular}
\arrayrulecolor{black}}

\end{table*}

\begin{figure}[!htbp]
  \centering
  \includegraphics[width=\linewidth]{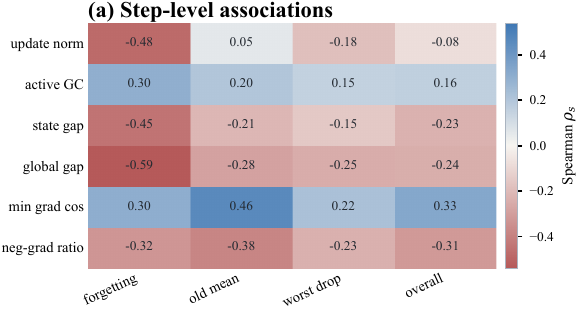}
  \caption{
  {Global step-level explanation heatmap.}
  Each cell reports the Spearman correlation between a mechanism signal and a downstream target.
  }
  \label{fig:app_step_explanation_heatmap}
\end{figure}

Fig.~\ref{fig:app_step_dynamics} expands the step-level analysis by separating continual post-training methods.
The main trend is that raw update magnitude captures a coarse drift effect but does not consistently explain retention across methods.
For example, the global Spearman correlation between incremental update norm and forgetting is \(-0.48\), whereas the global state-active geometry gap reaches \(-0.59\).
This gap becomes stronger at larger scales: its correlation with forgetting is \(-0.67\), \(-0.69\), and \(-0.86\) for Qwen3-4B, 8B, and 14B, respectively.
By contrast, active-pair geometry conflict is more method-dependent and remains weaker as a step-level forgetting predictor.

Fig.~\ref{fig:app_method_corr} summarizes method-level correlations across mechanism signals.
The heatmaps show that no single raw drift statistic consistently explains all outcomes.
State-relative geometry is more aligned with forgetting in Seq. SFT and larger models, while gradient conflict contributes more to old-task retention and overall performance.
This supports our use of geometry conflict as a compatibility signal, rather than treating update magnitude as a sufficient explanation.

Table~\ref{tab:app_step_corr_method} further decomposes the step-level correlations by method.
Seq. SFT shows the clearest state-relative pattern: the state gap and global gap correlate with forgetting at \(\rho_s=-0.68\) and \(-0.70\), respectively, while update norm is weaker (\(\rho_s=-0.23\)).
For EWC, active-pair geometry conflict and gradient conflict are more pronounced, with Active GC at \(\rho_s=0.51\) and minimum gradient cosine at \(\rho_s=0.54\).
FOREVER weakens the geometry-forgetting association, consistent with replay reducing direct sequential interference.
AIMMerging improves retention but also compresses the variance of forgetting, making correlations less directly interpretable.

Fig.~\ref{fig:app_step_explanation_heatmap} and Table~\ref{tab:app_step_explanation_top} summarize the same trends across targets.
For forgetting, the strongest geometry-based state signal is the global gap (\(\rho_s=-0.59\)), which is stronger than the incremental update norm (\(\rho_s=-0.48\)) and active-pair geometry conflict (\(\rho_s=0.30\)).
For old-task mean and overall performance, gradient conflict becomes more visible: minimum gradient cosine reaches \(\rho_s=0.46\) with old-task mean and \(\rho_s=0.33\) with overall score.
This supports the main-text interpretation that geometry conflict is the more direct update-integration signal, while gradient conflict exposes complementary optimization-level interference.

\FloatBarrier
\subsection{Additional Pairwise Compatibility Analysis}
\label{app:pairwise_analysis}

\begin{figure*}[!htbp]
  \centering
  \includegraphics[width=\textwidth]{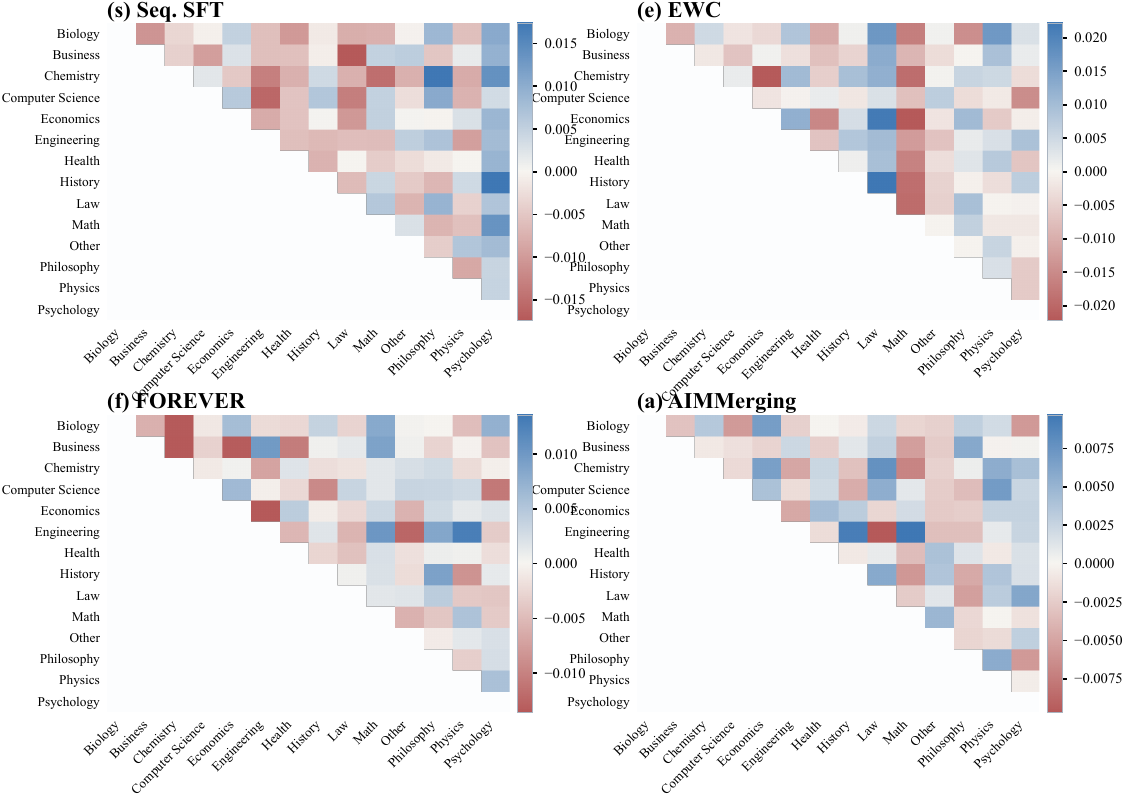}
  \caption{
  {Task-pair selective forgetting by method.}
  Each heatmap reports the old-task score change after introducing a new task.
  }
  \label{fig:app_pair_forgetting}
\end{figure*}

\begin{figure*}[!htbp]
  \centering
  \includegraphics[width=\textwidth]{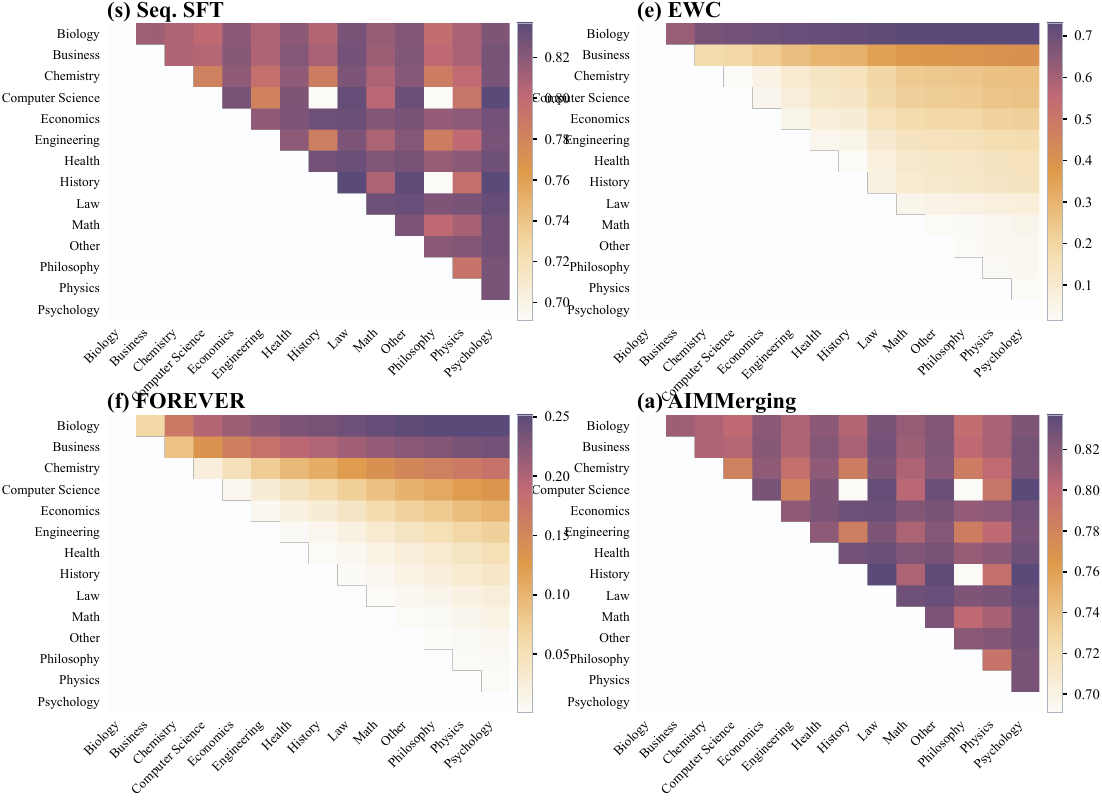}
  \caption{
  {Task-pair geometry conflict by method.}
  Pairwise geometry conflict reveals compatibility structure that is not captured by old-task score change alone.
  }
  \label{fig:app_pair_geometry}
\end{figure*}

\begin{table}[!htbp]
\centering
\caption{
{Pairwise compatibility summary by method.}
SAR-GC denotes Spearman correlation between SAR and geometry conflict; GC-drop denotes geometry conflict vs. immediate old-task change; GC-forget denotes geometry conflict vs. forgetting from best previous score.
}
\label{tab:app_pairwise_method_summary}
{\arrayrulecolor{ruleNavy}
\renewcommand{\arraystretch}{1.08}
\begin{tabular}{lccc}
\toprule[0.9pt]
\rowcolor{sizeBand}
Method & SAR-GC & GC-drop & GC-forget \\
\midrule
\rowcolor{refGray} Seq. SFT & -0.36 & 0.02 & -0.16 \\
\rowcolor{refGray} EWC & 0.14 & 0.02 & -0.07 \\
\rowcolor{refGray} FOREVER & -0.01 & 0.02 & -0.08 \\
\rowcolor{mergeBlue} AIMMerging & -0.36 & 0.01 & 0.14 \\
\bottomrule[0.9pt]
\end{tabular}
\arrayrulecolor{black}}

\end{table}

\begin{table}[!htbp]
\centering
\caption{
{Most harmful task transitions.}
Drop reports the old-task score change after adding the new task; GC is pairwise geometry conflict.
}
\label{tab:app_top_harmful_pairs}
{\arrayrulecolor{ruleNavy}
\renewcommand{\arraystretch}{1.08}
\begin{tabular}{llcc}
\toprule[0.9pt]
\rowcolor{sizeBand}
Transition & Method & Drop & GC \\
\midrule
\rowcolor{rcwmBlue} Math$\rightarrow$History & EWC & -0.129 & 0.12 \\
\rowcolor{rcwmBlue} Math$\rightarrow$Economics & EWC & -0.123 & 0.22 \\
\rowcolor{rcwmBlue} Math$\rightarrow$Chemistry & EWC & -0.114 & 0.26 \\
\rowcolor{mergeBlue} Math$\rightarrow$Health & EWC & -0.104 & 0.14 \\
\rowcolor{mergeBlue} Math$\rightarrow$Computer Science & EWC & -0.100 & 0.26 \\
\rowcolor{mergeBlue} Math$\rightarrow$Law & EWC & -0.097 & 0.05 \\
\rowcolor{refGray} Math$\rightarrow$Business & EWC & -0.093 & 0.44 \\
\rowcolor{refGray} Math$\rightarrow$Engineering & EWC & -0.091 & 0.17 \\
\bottomrule[0.9pt]
\end{tabular}
\arrayrulecolor{black}}

\end{table}

% ---------------------------------------------------------------------
% Insert near the end of "Additional Pairwise Compatibility Analysis".
% The first two tables can replace the compact pairwise_method_summary table
% if space is tight; otherwise keep both compact and full versions.
% ---------------------------------------------------------------------

\begin{table*}[!htbp]
  \centering
  \caption{
  {Full pairwise compatibility summary by method.}
  SAR-GC measures the relation between subspace overlap and geometry conflict; GC-drop and GC-forget measure pairwise geometry against immediate old-task change and forgetting from best previous score.
  }
  \label{tab:app_pairwise_method_full}
  \scriptsize
  \setlength{\tabcolsep}{4pt}
  {\arrayrulecolor{ruleNavy}
\renewcommand{\arraystretch}{1.08}
\begin{tabular}{lrrrrrrr}
\toprule[0.9pt]
\rowcolor{sizeBand}
Method & $N$ & SAR-GC & SAR-drop & GC-drop & GC-forget & Med. SAR & Med. GC \\
\midrule
\rowcolor{refGray} Seq. SFT & 455 & -0.36 & 0.04 & 0.02 & -0.16 & 0.12 & 0.82 \\
\rowcolor{refGray} EWC & 455 & 0.14 & 0.01 & 0.02 & -0.07 & 0.57 & 0.16 \\
\rowcolor{refGray} FOREVER & 455 & -0.01 & -0.04 & 0.02 & -0.08 & 0.00 & 0.06 \\
\rowcolor{mergeBlue} AIMMerging & 455 & -0.36 & 0.00 & 0.01 & 0.14 & 0.12 & 0.82 \\
\bottomrule[0.9pt]
\end{tabular}
\arrayrulecolor{black}}

\end{table*}

\begin{table*}[!htbp]
  \centering
  \caption{
  {Full pairwise compatibility summary by model scale.}
  Pairwise compatibility is informative but does not fully explain continual forgetting.
  }
  \label{tab:app_pairwise_size_full}
  \scriptsize
  \setlength{\tabcolsep}{4pt}
  {\arrayrulecolor{ruleNavy}
\renewcommand{\arraystretch}{1.08}
\begin{tabular}{lrrrrrrr}
\toprule[0.9pt]
\rowcolor{sizeBand}
Size & $N$ & SAR-GC & SAR-drop & GC-drop & GC-forget & Med. SAR & Med. GC \\
\midrule
\rowcolor{refGray} 0.6B & 364 & 0.19 & 0.03 & 0.09 & 0.28 & 0.18 & 0.74 \\
\rowcolor{mergeBlue} 1.7B & 364 & 0.11 & 0.02 & 0.05 & 0.31 & 0.13 & 0.80 \\
\rowcolor{refGray} 4B & 364 & 0.09 & 0.04 & 0.02 & 0.30 & 0.12 & 0.78 \\
\rowcolor{mergeBlue} 8B & 364 & 0.25 & -0.01 & 0.01 & 0.12 & 0.10 & 0.80 \\
\rowcolor{refGray} 14B & 364 & 0.78 & -0.03 & -0.02 & 0.02 & 0.04 & 0.37 \\
\bottomrule[0.9pt]
\end{tabular}
\arrayrulecolor{black}}

\end{table*}

\begin{table*}[!htbp]
  \centering
  \caption{
  {Global pair-level predictor ranking.}
  Pairwise metrics are ranked by their absolute Spearman correlation with immediate old-task change and forgetting.
  }
  \label{tab:app_pair_explanation_global}
  \scriptsize
  \setlength{\tabcolsep}{4pt}
  {\arrayrulecolor{ruleNavy}
\renewcommand{\arraystretch}{1.06}
\begin{tabular}{llrrr}
\toprule[0.9pt]
\rowcolor{sizeBand}
Target & Predictor & Pearson & Spearman & $|\rho_s|$ \\
\midrule
\rowcolor{mergeBlue} Old-task change & Min grad cos & -0.00 & 0.04 & 0.04 \\
\rowcolor{mergeBlue} Old-task change & Neg-grad ratio & 0.01 & -0.03 & 0.03 \\
\rowcolor{mergeBlue} Old-task change & Mean grad cos & 0.02 & 0.03 & 0.03 \\
\rowcolor{mergeBlue} Old-task change & GC & 0.01 & 0.02 & 0.02 \\
\rowcolor{mergeBlue} Old-task change & Max GC & 0.01 & 0.01 & 0.01 \\
\rowcolor{mergeBlue} Old-task change & Mean SAR & -0.01 & 0.01 & 0.01 \\
\rowcolor{mergeBlue} Old-task change & Mean SAR (clip) & -0.01 & 0.01 & 0.01 \\
\rowcolor{mergeBlue} Old-task change & Max SAR & 0.00 & 0.00 & 0.00 \\
\midrule
\rowcolor{rcwmBlue} Forgetting & Max GC & 0.12 & 0.17 & 0.17 \\
\rowcolor{rcwmBlue} Forgetting & Mean grad cos & 0.12 & 0.17 & 0.17 \\
\rowcolor{rcwmBlue} Forgetting & GC & 0.14 & 0.16 & 0.16 \\
\rowcolor{rcwmBlue} Forgetting & Mean SAR (clip) & 0.03 & -0.16 & 0.16 \\
\rowcolor{rcwmBlue} Forgetting & Mean SAR & 0.03 & -0.16 & 0.16 \\
\rowcolor{rcwmBlue} Forgetting & Min grad cos & 0.08 & 0.15 & 0.15 \\
\rowcolor{rcwmBlue} Forgetting & Max SAR & 0.02 & -0.14 & 0.14 \\
\rowcolor{rcwmBlue} Forgetting & Neg-grad ratio & -0.05 & -0.14 & 0.14 \\
\bottomrule[0.9pt]
\end{tabular}
\arrayrulecolor{black}}

\end{table*}

\begin{table*}[!htbp]
  \centering
  \caption{
  {Top harmful task transitions.}
  We report the largest old-task drops together with SAR, geometry conflict, and gradient cosine.
  }
  \label{tab:app_top_harmful_pairs_full}
  \scriptsize
  \setlength{\tabcolsep}{4pt}
  {\arrayrulecolor{ruleNavy}
\renewcommand{\arraystretch}{1.06}
\begin{tabular}{llrlrrrrr}
\toprule[0.9pt]
\rowcolor{sizeBand}
Size & Method & Step & Transition & Drop & Forget & SAR & GC & Grad cos \\
\midrule
\rowcolor{rcwmBlue} 4B & EWC & 10 & Math$\rightarrow$History & -0.129 & -0.129 & 0.596 & 0.125 & 0.424 \\
\rowcolor{rcwmBlue} 4B & EWC & 10 & Math$\rightarrow$Economics & -0.123 & -0.123 & 0.592 & 0.216 & 0.573 \\
\rowcolor{rcwmBlue} 4B & EWC & 10 & Math$\rightarrow$Chemistry & -0.114 & -0.114 & 0.574 & 0.264 & 0.700 \\
\rowcolor{mergeBlue} 4B & EWC & 10 & Math$\rightarrow$Health & -0.104 & -0.104 & 0.597 & 0.137 & 0.472 \\
\rowcolor{mergeBlue} 4B & EWC & 10 & Math$\rightarrow$Computer Science & -0.100 & -0.100 & 0.590 & 0.258 & 0.623 \\
\rowcolor{mergeBlue} 4B & EWC & 10 & Math$\rightarrow$Law & -0.097 & -0.097 & 0.612 & 0.054 & 0.507 \\
\rowcolor{refGray} 4B & EWC & 10 & Math$\rightarrow$Business & -0.092 & -0.092 & 0.550 & 0.440 & 0.706 \\
\rowcolor{refGray} 4B & EWC & 10 & Math$\rightarrow$Engineering & -0.091 & -0.091 & 0.596 & 0.170 & 0.651 \\
\rowcolor{refGray} 4B & EWC & 10 & Math$\rightarrow$Biology & -0.091 & -0.091 & 0.319 & 0.823 & 0.582 \\
\midrule
\rowcolor{mergeBlue} 14B & EWC & 9 & Law$\rightarrow$Computer Science & -0.058 & -0.058 & 0.002 & 0.073 & 0.650 \\
\rowcolor{mergeBlue} 14B & EWC & 9 & Law$\rightarrow$Chemistry & -0.050 & -0.050 & 0.003 & 0.127 & 0.507 \\
\rowcolor{refGray} 0.6B & EWC & 2 & Business$\rightarrow$Biology & -0.049 & -0.049 & 0.434 & 0.617 & 0.503 \\
\bottomrule[0.9pt]
\end{tabular}
\arrayrulecolor{black}}

\end{table*}

\begin{figure}[!htbp]
  \centering
  \includegraphics[width=\linewidth]{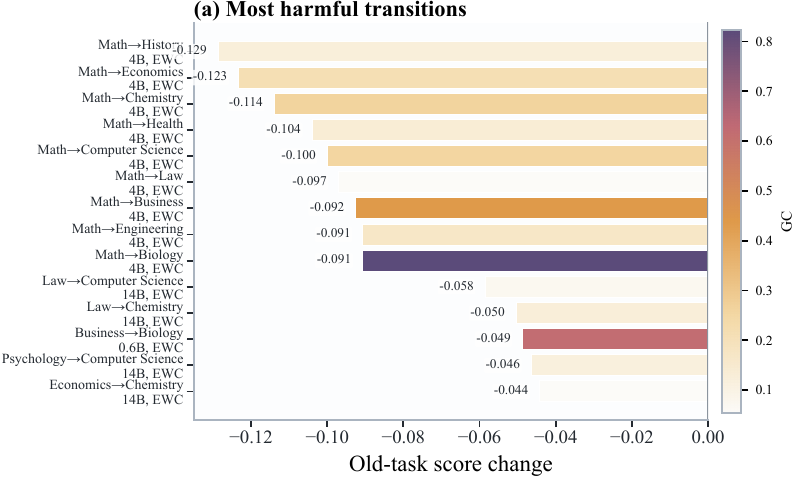}
  \caption{
  {Most harmful task transitions.}
  We visualize the largest old-task drops after introducing new tasks.
  }
  \label{fig:app_top_harmful_pairs_bar}
\end{figure}

Fig.~\ref{fig:app_pair_forgetting} and Fig.~\ref{fig:app_pair_geometry} provide task-pair views of selective forgetting and geometry conflict.
The pairwise results explain why task-task compatibility is informative but incomplete.
Across 1,820 task pairs, SAR and geometry conflict have moderate global association (\(\rho_s=0.27\)), but geometry conflict is only weakly correlated with immediate old-task change (\(\rho_s=0.02\)).
This means that pairwise conflict helps identify compatibility regimes, but forgetting is not determined by isolated task pairs alone.

Table~\ref{tab:app_top_harmful_pairs} lists the largest selective-forgetting transitions.
The most severe drops concentrate around adding Math under EWC for Qwen3-4B, where old-task scores drop by up to \(12.9\) points.
These cases are not explained by pairwise geometry conflict alone: some harmful transitions have moderate conflict, while others have high conflict.
This further motivates the state-relative analysis in Sec.~\ref{sec:state_relative_geometry}.

Tables~\ref{tab:app_pairwise_method_full} and \ref{tab:app_pairwise_size_full} expand the compact pairwise summary.
Across methods, GC-drop remains weak (\(\rho_s=0.01\)--\(0.02\)), confirming that immediate selective forgetting is not determined by pairwise geometry conflict alone.
However, SAR and geometry conflict are not redundant: Seq. SFT and AIMMerging show negative SAR-GC correlations (\(\rho_s=-0.36\)), while EWC shows a weak positive relation (\(\rho_s=0.14\)).
Across scales, GC-forget is more visible on 0.6B, 1.7B, and 4B (\(\rho_s=0.28\), \(0.31\), and \(0.30\)), but becomes weak on 8B and 14B.
This scale dependence motivates the state-relative analysis in the main text.

Table~\ref{tab:app_pair_explanation_global} shows that pairwise predictors have limited explanatory power for immediate old-task change: the largest absolute Spearman correlation is only \(0.04\).
For forgetting from best previous score, pairwise geometry conflict is more useful but still modest (\(\rho_s=0.16\) for mean GC and \(0.17\) for max GC).
Table~\ref{tab:app_top_harmful_pairs_full} gives concrete failure cases.
The largest drops concentrate around the Math step for Qwen3-4B EWC, including Math\(\rightarrow\)History (\(-0.129\)), Math\(\rightarrow\)Economics (\(-0.123\)), and Math\(\rightarrow\)Chemistry (\(-0.114\)).
Their geometry conflict values vary widely, from \(0.054\) for Math\(\rightarrow\)Law to \(0.823\) for Math\(\rightarrow\)Biology, which again shows that harmful forgetting is not captured by a single pairwise metric.
Fig.~\ref{fig:app_top_harmful_pairs_bar} provides the same cases as a compact visual summary.

% ---------------------------------------------------------------------
% Add this as a new subsection after Pairwise Compatibility Analysis.
% ---------------------------------------------------------------------

\FloatBarrier
\subsection{Geometry and Gradient Conflict by Module Family}
\label{app:family_mechanism}

\begin{table}[!htbp]
  \centering
  \caption{
  {Family-level metric summary.}
  We report median SAR, mean geometry conflict, mean gradient cosine, and negative-gradient ratio by module family.
  }
  \label{tab:app_family_metric_summary}
  \scriptsize
  \setlength{\tabcolsep}{3pt}
  {\arrayrulecolor{ruleNavy}
\renewcommand{\arraystretch}{1.08}
\begin{tabular}{lrrrr}
\toprule[0.9pt]
\rowcolor{sizeBand}
Family & Med. SAR & Mean GC & Mean grad cos & Neg-grad ratio \\
\midrule
\rowcolor{mergeBlue} q\_proj & 0.20 & 0.56 & 0.59 & 0.03 \\
\rowcolor{mergeBlue} k\_proj & 0.12 & 0.40 & 0.57 & 0.03 \\
\rowcolor{mergeBlue} v\_proj & 0.21 & 0.56 & 0.64 & 0.01 \\
\rowcolor{mergeBlue} o\_proj & 0.13 & 0.47 & 0.62 & 0.01 \\
\midrule
\rowcolor{rcwmBlue} gate\_proj & 0.11 & 0.49 & 0.63 & 0.02 \\
\rowcolor{rcwmBlue} up\_proj & 0.11 & 0.50 & 0.65 & 0.02 \\
\rowcolor{rcwmBlue} down\_proj & 0.08 & 0.45 & 0.64 & 0.02 \\
\bottomrule[0.9pt]
\end{tabular}
\arrayrulecolor{black}}

\end{table}

\begin{table}[!htbp]
  \centering
  \caption{
  {Top-layer family distribution.}
  We compare the module families most frequently appearing among top geometry-conflict layers and top negative-gradient layers.
  }
  \label{tab:app_top_layer_family_distribution}
  \scriptsize
  \setlength{\tabcolsep}{3pt}
  {\arrayrulecolor{ruleNavy}
\renewcommand{\arraystretch}{1.08}
\begin{tabular}{lrrrrrrr}
\toprule[0.9pt]
\rowcolor{sizeBand}
Metric & q\_proj & k\_proj & v\_proj & o\_proj & gate\_proj & up\_proj & down\_proj \\
\midrule
\rowcolor{rcwmBlue} Top GC layers & 0.07 & 0.00 & 0.23 & 0.04 & 0.24 & 0.28 & 0.15 \\
\rowcolor{mergeBlue} Top neg-grad layers & 0.38 & 0.47 & 0.02 & 0.01 & 0.08 & 0.03 & 0.01 \\
\bottomrule[0.9pt]
\end{tabular}
\arrayrulecolor{black}}

\end{table}

\begin{figure*}[!htbp]
  \centering
  \includegraphics[width=\textwidth]{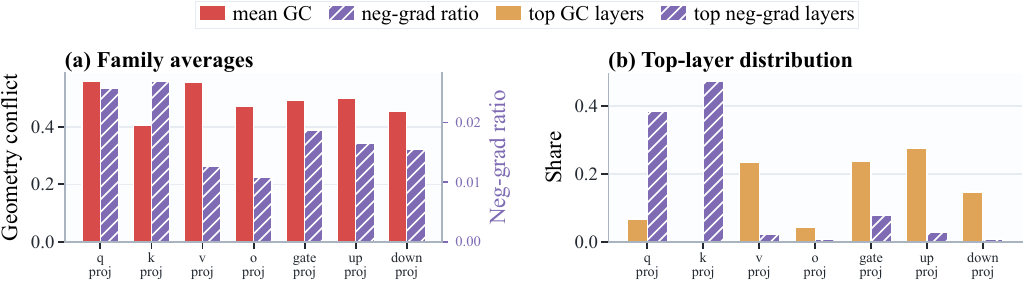}
  \caption{
  {Family-level mechanism profile.}
  Geometry conflict and gradient conflict emphasize different module families.
  }
  \label{fig:app_family_mechanism}
\end{figure*}

\begin{figure*}[!htbp]
  \centering
  \includegraphics[width=\textwidth]{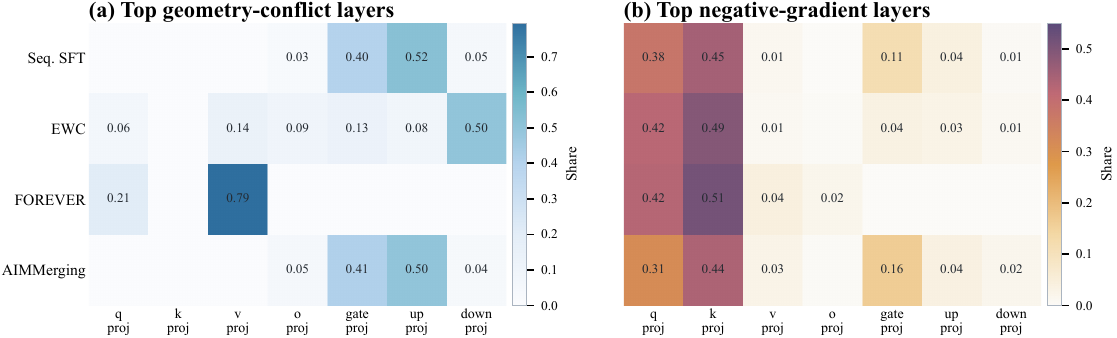}
  \caption{
  {Method-wise top-layer family distribution.}
  We decompose top geometry-conflict layers and top negative-gradient layers by method and module family.
  }
  \label{fig:app_method_family_distribution}
\end{figure*}

Fig.~\ref{fig:app_family_mechanism} and Tables~\ref{tab:app_family_metric_summary}--\ref{tab:app_top_layer_family_distribution} provide the module-level decomposition behind Sec.~\ref{sec:geometry_gradient}.
Geometry conflict and gradient conflict are not redundant: they emphasize different module families and correspond to different failure modes.

At the family-average level, geometry conflict is high across both attention and MLP projections.
The largest mean geometry-conflict values appear in \(q\_\mathrm{proj}\) and \(v\_\mathrm{proj}\) (\(0.56\) for both), while MLP projections also remain high: \(up\_\mathrm{proj}\), \(gate\_\mathrm{proj}\), and \(down\_\mathrm{proj}\) reach \(0.50\), \(0.49\), and \(0.45\), respectively.
This indicates that geometry conflict is not merely a sparse gradient-opposition event; it reflects broad mismatch in task-induced update structure.

The top-layer distribution shows a sharper separation.
Top geometry-conflict layers concentrate in value and MLP-related modules:
\(up\_\mathrm{proj}\) accounts for \(0.28\), \(gate\_\mathrm{proj}\) for \(0.24\), \(v\_\mathrm{proj}\) for \(0.23\), and \(down\_\mathrm{proj}\) for \(0.15\).
Together, these four families explain \(0.90\) of top geometry-conflict layers.
By contrast, top negative-gradient layers concentrate in attention query/key modules:
\(k\_\mathrm{proj}\) accounts for \(0.47\) and \(q\_\mathrm{proj}\) for \(0.38\), together explaining \(0.85\) of top negative-gradient layers.

Fig.~\ref{fig:app_method_family_distribution} further decomposes this pattern by continual post-training method.
The separation between geometry conflict and gradient conflict is stable, but the geometry-conflict locus varies with the update-integration strategy.
For Seq. SFT, top geometry-conflict layers concentrate almost entirely in MLP projections, especially \(up\_\mathrm{proj}\) (\(0.52\)) and \(gate\_\mathrm{proj}\) (\(0.40\)).
AIMMerging shows a similar geometry pattern, with \(up\_\mathrm{proj}\) and \(gate\_\mathrm{proj}\) accounting for \(0.50\) and \(0.41\), respectively.
This suggests that direct sequential update accumulation and ungated merging both induce strong mismatch in MLP transformation pathways.

EWC shifts the geometry-conflict locus toward \(down\_\mathrm{proj}\), which accounts for \(0.50\) of top geometry-conflict layers, with smaller contributions from \(v\_\mathrm{proj}\) (\(0.14\)), \(gate\_\mathrm{proj}\) (\(0.13\)), and \(o\_\mathrm{proj}\) (\(0.09\)).
This is consistent with regularization changing where update mismatch appears rather than removing it entirely.
FOREVER shows a different pattern: top geometry-conflict layers concentrate in \(v\_\mathrm{proj}\) (\(0.79\)) and \(q\_\mathrm{proj}\) (\(0.21\)), suggesting that replay changes the geometry of retained and incoming updates, making attention-value pathways more prominent.

In contrast, top negative-gradient layers are consistently concentrated in query/key projections across all methods.
For Seq. SFT, \(q\_\mathrm{proj}\) and \(k\_\mathrm{proj}\) account for \(0.38\) and \(0.45\); for EWC, \(0.42\) and \(0.49\); for FOREVER, \(0.42\) and \(0.51\); and for AIMMerging, \(0.31\) and \(0.44\).
The only notable secondary contribution is \(gate\_\mathrm{proj}\) under AIMMerging (\(0.16\)) and Seq. SFT (\(0.11\)).
Thus, gradient opposition remains largely attention-centric even when the geometry-conflict locus changes across methods.

This module-level separation supports the main-text interpretation.
Geometry conflict captures mismatch in how task updates should be integrated in weight space, especially in value and MLP transformation pathways.
Gradient conflict captures direct optimization opposition, most visible in attention query/key projections.
The two signals therefore provide complementary views of continual post-training failure: geometry conflict is better suited as an update-integration control signal, while gradient conflict remains useful as a diagnostic for optimization-level interference.

The predictor-level results in Fig.~\ref{fig:app_step_explanation_heatmap} are consistent with this distinction.
For forgetting, the strongest state-relative geometry signal is the global gap (\(\rho_s=-0.59\)), exceeding update norm (\(\rho_s=-0.48\)) and active-pair geometry conflict (\(\rho_s=0.30\)).
For retained-task and overall performance, gradient diagnostics become more visible: minimum gradient cosine reaches \(\rho_s=0.46\) with old-task mean and \(\rho_s=0.33\) with overall score, while negative-gradient ratio reaches \(\rho_s=-0.38\) and \(-0.31\), respectively.
This further supports the complementary role of geometry and gradient conflict.

% ---------------------------------------------------------------------
% Add this as a new subsection after the mechanism analysis.
% ---------------------------------------------------------------------

\FloatBarrier
\section{Additional Experiments for Sec.~\ref{sec:experiments}}
\label{app:sec5_additional}

\subsection{Experimental Setup Details}
\label{app:exp_setup_details}

\paragraph{Model scales and task experts.}
All experiments start from the corresponding Qwen3 backbone and construct task experts relative to the same pretrained initialization.
GCWM and the data-free merging baselines operate only on these task updates at integration time; they do not access replay data during merging.
Seq.\ SFT, EWC, and FOREVER are included as reference continual-training pipelines because they respectively perform sequential optimization, regularized optimization, or replay-based training.

\paragraph{Training data.}
For the domain-continual setting, we use MMLU-Pro-CoT-Train-Labeled and sample 1k training examples for each of the 14 MMLU-Pro sub-domains.
For the capability-continual setting, we construct math and code capability experts from two instruction sources: 30k examples from the math split of Nemotron-Post-Training-Dataset-v1 and 30k examples from CodeFeedback-Filtered-Instruction.
This setup separates the data used to train task experts from the held-out benchmark suite used to measure capability retention and transfer.

\paragraph{Evaluation suite.}
The domain-continual setting evaluates all 14 MMLU-Pro sub-categories and reports both overall accuracy and per-domain accuracy.
The capability-continual setting uses a mixed reasoning and code suite: GPQA-Diamond for graduate-level science reasoning, GSM8K for grade-school mathematical reasoning, MATH-500 for competition-style mathematics, HumanEval for Python code generation, and MMLU-Pro for broad knowledge retention.
We use accuracy or exact-match style scoring for multiple-choice and math benchmarks, and pass@1 for HumanEval.
{The evaluation code we employ strictly adheres to the evaluation and inference configurations from the Qwen Technical Report \cite{yang2025qwen3}, and produces results on the original Qwen3-7B and Qwen3-14B models that are aligned with the Qwen Technical Report.}

\paragraph{Evaluation reliability.}
LLM benchmark scores can vary across repeated runs and execution environments \cite{yang2026inficoevalchain}.
Therefore, all reported capability-continual performance scores are averaged over five independent evaluation runs, using different decoding seeds where stochastic decoding is applied.
We report average accuracy and keep benchmark scripts, decoding settings, and task order fixed across data-free integration methods.

\paragraph{Compute resources.}
Additional task-expert training used an internal Slurm-managed cluster with up to 64 NVIDIA H800 GPUs.
The core GCWM merge, geometry construction, and scaling-law analyses are data-free and run on CPU nodes.
Our CPU runs use Slurm allocations from dual-socket Intel Xeon Platinum 8480CL machines (2 sockets, 56 cores per socket, 224 logical CPUs, 210 MiB L3 cache); typical analysis jobs request a subset of cores (e.g., 24 CPU cores), while large parallel sweeps can request larger CPU allocations.
GCWM introduces no additional inference-time cost after merging: the final model is evaluated with the same architecture as the corresponding merged checkpoint.

\paragraph{Remark:}
Multi-Task Learning (MTL) is treated as a joint-training upper-bound reference rather than a data-free method.
In tables, bold numbers mark the best non-MTL method, while underlined MTL numbers indicate the upper-bound reference.
All data-free update-integration methods are compared under the same trained-expert inputs, so differences reflect the merge-time integration rule rather than additional training data.

\subsection{Evaluation Prompt}
\label{app:evaluation_prompt}

\begin{table}[!htbp]
  \centering
  \label{tab:evaluation_prompt_template}
  \scriptsize
  \setlength{\tabcolsep}{3pt}
  \caption{Prompt Templates for Benchmarks}
\label{tab:full_benchmark_prompts}
\small
\begin{tabularx}{\textwidth}{l >{\raggedright\arraybackslash}X}
\toprule
\textbf{Benchmark} & \textbf{Prompt Template} \\
\midrule
GSM8K & 
\textbf{System:} \texttt{You are a helpful assistant.}\par
\textbf{User:} \texttt{Solve the following math problem step by step. The last line of your response should display the answer enclosed within ANSWER.}\par
\texttt{Example: [Example-Content]}\par
\texttt{question: [Question-Content]}\par
\texttt{Remember to put your answer on its own line at the end in the form ANSWER (without quotes), where \$ANSWER is replaced by the actual answer to the problem.} \\
\addlinespace
HumanEval & 
\textbf{System:} \texttt{You are a helpful assistant.}\par
\textbf{User:} \texttt{Read the following function signature and docstring, and fully implement the function described. Your response should only contain the code for this function.}\par
\texttt{[Code-Content]} \\
\addlinespace
Math500 & 
\textbf{System:} \texttt{You are a helpful assistant.}\par
\textbf{User:} \texttt{[Question-Content]} \\
\addlinespace
MMLU-Pro & 
\textbf{System:} \texttt{You are a helpful assistant.}\par
\textbf{User:} \texttt{Answer the following multiple choice question. The last line of your response should be of the following format: ANSWER: LETTER (without quotes) where LETTER is one of A,B,C,D,E,F,G,H,I,J. Think step by step before answering.}\par
\texttt{Question: [Question-Content]} \\
\addlinespace
MBPP & 
\textbf{System:} \texttt{You are a helpful assistant.}\par
\textbf{User:} \texttt{[Code-Content]} \\
\addlinespace
GPQA-Diamond & 
\textbf{System:} \texttt{You are a helpful assistant.}\par
\textbf{User:} \texttt{Answer the following multiple choice question. The last line of your response should be of the following format: ANSWER: LETTER (without quotes) where LETTER is one of A,B,C,D. Think step by step before answering.} \par
\texttt{[Question-Content]} \\
\bottomrule
\end{tabularx}
\end{table}

\subsection{Performance Context and Data Completeness}
\label{app:performance_context}

\begin{table}[!htbp]
  \centering
  \caption{
  {Final MMLU-Pro overall score by scale and method.}
  }
  \label{tab:app_final_overall_pivot}
  \footnotesize
  \setlength{\tabcolsep}{4pt}
  {\arrayrulecolor{ruleNavy}
\renewcommand{\arraystretch}{1.08}
\begin{tabular}{lrrrr}
\toprule[0.9pt]
\rowcolor{sizeBand}
Size & Seq. SFT & EWC & FOREVER & AIMMerging \\
\midrule
\rowcolor{refGray} 0.6B & 0.244 & 0.248 & 0.247 & 0.271 \\
\rowcolor{mergeBlue} 1.7B & 0.370 & 0.400 & 0.385 & 0.418 \\
\rowcolor{refGray} 4B & 0.516 & 0.554 & 0.547 & 0.581 \\
\rowcolor{mergeBlue} 8B & 0.549 & 0.604 & 0.596 & 0.629 \\
\rowcolor{refGray} 14B & 0.604 & 0.653 & 0.665 & 0.678 \\
\bottomrule[0.9pt]
\end{tabular}
\arrayrulecolor{black}}

\end{table}

\begin{table*}[!htbp]
  \centering
  \caption{
  {Final performance summary.}
  Old mean reports retained-task average at the final step; Forget reports average drop from best previous score.
  }
  \label{tab:app_final_performance_summary}
  \footnotesize
  \setlength{\tabcolsep}{4pt}
  {\arrayrulecolor{ruleNavy}
\renewcommand{\arraystretch}{1.05}
\begin{tabular}{llrrrr}
\toprule[0.9pt]
\rowcolor{sizeBand}
Size & Method & Overall & Old mean & Forget & Current \\
\midrule
\rowcolor{refGray} 0.6B & Seq. SFT & 0.244 & 0.234 & -0.016 & 0.343 \\
\rowcolor{refGray} 0.6B & EWC & 0.248 & 0.240 & -0.023 & 0.284 \\
\rowcolor{refGray} 0.6B & FOREVER & 0.247 & 0.238 & -0.023 & 0.296 \\
\rowcolor{mergeBlue} 0.6B & AIMMerging & 0.271 & 0.260 & -0.011 & 0.351 \\
\midrule
\rowcolor{refGray} 1.7B & Seq. SFT & 0.370 & 0.360 & -0.029 & 0.474 \\
\rowcolor{refGray} 1.7B & EWC & 0.400 & 0.386 & -0.031 & 0.466 \\
\rowcolor{refGray} 1.7B & FOREVER & 0.385 & 0.375 & -0.036 & 0.474 \\
\rowcolor{mergeBlue} 1.7B & AIMMerging & 0.418 & 0.408 & -0.009 & 0.475 \\
\midrule
\rowcolor{refGray} 4B & Seq. SFT & 0.516 & 0.507 & -0.035 & 0.630 \\
\rowcolor{refGray} 4B & EWC & 0.554 & 0.546 & -0.110 & 0.629 \\
\rowcolor{refGray} 4B & FOREVER & 0.547 & 0.534 & -0.034 & 0.635 \\
\rowcolor{mergeBlue} 4B & AIMMerging & 0.581 & 0.569 & -0.006 & 0.627 \\
\midrule
\rowcolor{refGray} 8B & Seq. SFT & 0.549 & 0.543 & -0.054 & 0.672 \\
\rowcolor{refGray} 8B & EWC & 0.604 & 0.596 & -0.028 & 0.682 \\
\rowcolor{refGray} 8B & FOREVER & 0.596 & 0.589 & -0.033 & 0.685 \\
\rowcolor{mergeBlue} 8B & AIMMerging & 0.629 & 0.619 & -0.011 & 0.679 \\
\midrule
\rowcolor{refGray} 14B & Seq. SFT & 0.604 & 0.599 & -0.045 & 0.726 \\
\rowcolor{refGray} 14B & EWC & 0.653 & 0.644 & -0.028 & 0.717 \\
\rowcolor{refGray} 14B & FOREVER & 0.665 & 0.657 & -0.022 & 0.743 \\
\rowcolor{mergeBlue} 14B & AIMMerging & 0.678 & 0.672 & -0.007 & 0.723 \\
\bottomrule[0.9pt]
\end{tabular}
\arrayrulecolor{black}}

\end{table*}

\begin{table*}[!htbp]
  \centering
  \caption{
  {Benchmark completeness and final overall scores.}
  All benchmark files used in the analysis contain 14 evaluated continual steps after the data completion pass.
  }
  \label{tab:app_benchmark_quality}
  \footnotesize
  \setlength{\tabcolsep}{4pt}
  {\arrayrulecolor{ruleNavy}
\renewcommand{\arraystretch}{1.05}
\begin{tabular}{llrrrr}
\toprule[0.9pt]
\rowcolor{sizeBand}
Size & Method & Steps & NaN rows & NaN values & Final overall \\
\midrule
\rowcolor{refGray} 0.6B & Seq. SFT & 14 & 0 & 0 & 0.244 \\
\rowcolor{refGray} 0.6B & EWC & 14 & 0 & 0 & 0.248 \\
\rowcolor{refGray} 0.6B & FOREVER & 14 & 0 & 0 & 0.247 \\
\rowcolor{mergeBlue} 0.6B & AIMMerging & 14 & 0 & 0 & 0.271 \\
\midrule
\rowcolor{refGray} 1.7B & Seq. SFT & 14 & 0 & 0 & 0.370 \\
\rowcolor{refGray} 1.7B & EWC & 14 & 0 & 0 & 0.400 \\
\rowcolor{refGray} 1.7B & FOREVER & 14 & 0 & 0 & 0.385 \\
\rowcolor{mergeBlue} 1.7B & AIMMerging & 14 & 0 & 0 & 0.418 \\
\midrule
\rowcolor{refGray} 4B & Seq. SFT & 14 & 0 & 0 & 0.516 \\
\rowcolor{refGray} 4B & EWC & 14 & 0 & 0 & 0.554 \\
\rowcolor{refGray} 4B & FOREVER & 14 & 0 & 0 & 0.547 \\
\rowcolor{mergeBlue} 4B & AIMMerging & 14 & 0 & 0 & 0.581 \\
\midrule
\rowcolor{refGray} 8B & Seq. SFT & 14 & 0 & 0 & 0.549 \\
\rowcolor{refGray} 8B & EWC & 14 & 0 & 0 & 0.604 \\
\rowcolor{refGray} 8B & FOREVER & 14 & 0 & 0 & 0.596 \\
\rowcolor{mergeBlue} 8B & AIMMerging & 14 & 0 & 0 & 0.629 \\
\midrule
\rowcolor{refGray} 14B & Seq. SFT & 14 & 0 & 0 & 0.604 \\
\rowcolor{refGray} 14B & EWC & 14 & 0 & 0 & 0.653 \\
\rowcolor{refGray} 14B & FOREVER & 14 & 0 & 0 & 0.665 \\
\rowcolor{mergeBlue} 14B & AIMMerging & 14 & 0 & 0 & 0.678 \\
\bottomrule[0.9pt]
\end{tabular}
\arrayrulecolor{black}}

\end{table*}

\begin{figure*}[!htbp]
  \centering
  \includegraphics[width=\textwidth]{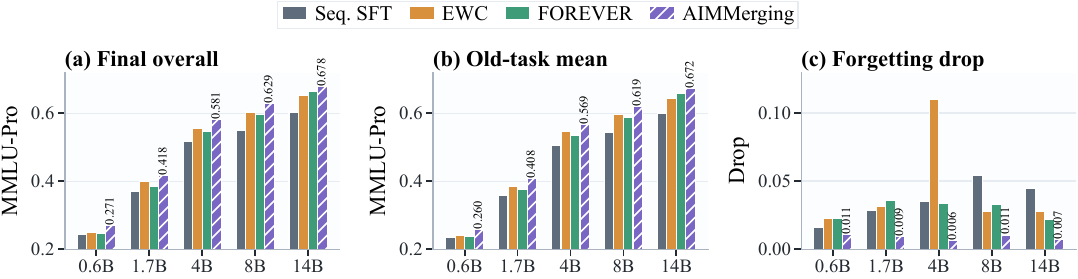}
  \caption{
  {Final performance across model scales and continual post-training methods.}
  }
  \label{fig:app_final_scaling}
\end{figure*}

Fig.~\ref{fig:app_final_scaling} and Tables~\ref{tab:app_final_overall_pivot}--\ref{tab:app_final_performance_summary} provide performance context for the analysis.
AIMMerging improves final overall performance at every scale, from \(0.271\) on Qwen3-0.6B to \(0.678\) on Qwen3-14B.
It also yields consistently small final forgetting, with final forgetting scores of \(-0.011\), \(-0.009\), \(-0.006\), \(-0.011\), and \(-0.007\) from 0.6B to 14B.
These results show that update integration can mitigate forgetting, but the analysis in Sec.~\ref{sec:findings} explains why an explicit geometry-based control signal is needed: existing strategies improve retention without directly modeling state-relative geometry conflict.

Table~\ref{tab:app_benchmark_quality} verifies that all benchmark runs used for the final analysis contain 14 steps and no missing values.
This includes the re-evaluated EWC and SFT files that previously had incomplete MMLU-Pro coverage.

\subsection{Comparison with Non-Continual Model Merging}
\label{app:non_continual_merge}

We additionally compare GCWM with non-continual model merging baselines that merge task experts without explicitly modeling the sequential state.
This comparison isolates a different question from the main continual setting: whether one-shot task-vector merging is sufficient when capability updates must be integrated into a shared LLM.
Table~\ref{tab:app_normal_merging_all_scales} reports the full benchmark results, and Table~\ref{tab:app_normal_merging_gain} summarizes the overall gain over the strongest non-continual merge at each scale.

The results show two clear patterns.
First, DARE is unstable in this setting: its average score collapses to \(15.1\%\) on Qwen3-0.6B, \(32.2\%\) on Qwen3-8B, and \(29.6\%\) on Qwen3-14B, with severe failures on individual benchmarks such as MBPP at 8B (\(0.8\%\)) and MATH-500 at 14B (\(0.4\%\)).
This suggests that sparsifying or rescaling task vectors without modeling continual compatibility can be brittle for heterogeneous math, code, and knowledge updates.
Second, GCWM is more effective beyond the smallest model scale.
Compared with the best non-continual merge, GCWM improves average performance by \(+0.21\), \(+3.24\), \(+5.71\), \(+2.64\), and \(+4.28\) points from 0.6B to 14B, respectively.
The gain is also broad: GCWM matches or exceeds the best non-continual merge on 2/6, 5/6, 6/6, 4/6, and 4/6 benchmarks across the five scales.
These results support the main claim that update integration benefits from an explicit geometry-aware compatibility signal rather than one-shot, state-agnostic task-vector merging.

\definecolor{rcwmBlue}{HTML}{DDEEFF}
\definecolor{mergeBlue}{HTML}{F1F7FD}
\definecolor{refGray}{HTML}{F7F7F7}
\definecolor{mtlGold}{HTML}{FFF4D6}
\definecolor{sizeBand}{HTML}{EAF2FA}
\definecolor{ruleNavy}{HTML}{244B73}
\providecommand{\best}[1]{\textbf{#1}}
\providecommand{\ub}[1]{\underline{#1}}

\begin{table}[!htbp]
  \centering
  \caption{
  {GCWM gains over the strongest non-continual merge.}
  Wins counts the number of benchmarks where GCWM matches or exceeds the best score among TA, TIES, and DARE.
  }
  \label{tab:app_normal_merging_gain}
  \small
  \setlength{\tabcolsep}{5pt}
  \renewcommand{\arraystretch}{1.08}
  \arrayrulecolor{ruleNavy}
  \begin{tabular}{lrrrr}
\toprule[0.9pt]
Scale & Best non-continual merge & GCWM & Gain & Wins \\
\midrule
\rowcolor{rcwmBlue} 0.6B & TIES 39.75 & 39.96 & +0.21 & 2/6 \\
\rowcolor{rcwmBlue} 1.7B & TIES 55.02 & 58.26 & +3.24 & 5/6 \\
\rowcolor{rcwmBlue} 4B & TIES 64.56 & 70.27 & +5.71 & 6/6 \\
\rowcolor{rcwmBlue} 8B & TIES 69.90 & 72.54 & +2.64 & 4/6 \\
\rowcolor{rcwmBlue} 14B & TIES 70.04 & 74.32 & +4.28 & 4/6 \\
\bottomrule[0.9pt]
\end{tabular}

  \arrayrulecolor{black}
\end{table}

\begin{table*}[!htbp]
  \centering
  \caption{
  {Non-continual merging on capability benchmarks.}
  Scores are accuracies or pass@1 (\%).
  Bold marks GCWM scores that match or exceed the best non-continual merging baseline among TA, TIES, and DARE.
  }
  \label{tab:app_normal_merging_all_scales}
  \scriptsize
  \setlength{\tabcolsep}{4pt}
  \renewcommand{\arraystretch}{1.06}
  \arrayrulecolor{ruleNavy}
  \resizebox{0.8\textwidth}{!}{%
\begin{tabular}{lrrrrrrr}
\toprule[0.9pt]
Method & Avg. & GPQA-D & GSM8K & HumanEval & MATH-500 & MBPP & MMLU-Pro \\
\midrule
\rowcolor{sizeBand}
\multicolumn{8}{l}{\emph{Qwen3-0.6B}} \\
\rowcolor{mergeBlue} TA & 39.2 & 24.8 & 52.4 & 34.8 & 41.8 & 51.4 & 29.9 \\
\rowcolor{mergeBlue} TIES & 39.8 & 22.7 & 55.9 & 34.8 & 44.6 & 49.0 & 31.5 \\
\rowcolor{mergeBlue} DARE & 15.1 & 5.6 & 7.2 & 16.5 & 6.8 & 33.5 & 21.1 \\
\rowcolor{rcwmBlue} GCWM & \best{40.0} & 23.2 & 51.2 & \best{36.6} & \best{50.8} & 50.2 & 27.8 \\
\midrule[0.55pt]
\rowcolor{sizeBand}
\multicolumn{8}{l}{\emph{Qwen3-1.7B}} \\
\rowcolor{mergeBlue} TA & 52.5 & 21.7 & 72.1 & 61.6 & 59.2 & 56.0 & 44.7 \\
\rowcolor{mergeBlue} TIES & 55.0 & 31.3 & 74.2 & 59.8 & 62.4 & 56.8 & 45.7 \\
\rowcolor{mergeBlue} DARE & 27.5 & 19.2 & 28.2 & 23.8 & 12.0 & 46.7 & 35.3 \\
\rowcolor{rcwmBlue} GCWM & \best{58.3} & 26.3 & \best{79.0} & \best{67.4} & \best{63.4} & \best{61.5} & \best{52.0} \\
\midrule[0.55pt]
\rowcolor{sizeBand}
\multicolumn{8}{l}{\emph{Qwen3-4B}} \\
\rowcolor{mergeBlue} TA & 63.3 & 32.3 & 86.7 & 70.7 & 64.2 & 70.4 & 55.7 \\
\rowcolor{mergeBlue} TIES & 64.6 & 32.8 & 86.2 & 72.0 & 69.2 & 69.7 & 57.5 \\
\rowcolor{mergeBlue} DARE & 46.6 & 26.8 & 67.4 & 54.3 & 27.6 & 58.8 & 44.8 \\
\rowcolor{rcwmBlue} GCWM & \best{70.3} & \best{35.4} & \best{93.7} & \best{84.8} & \best{71.2} & \best{72.8} & \best{63.9} \\
\midrule[0.55pt]
\rowcolor{sizeBand}
\multicolumn{8}{l}{\emph{Qwen3-8B}} \\
\rowcolor{mergeBlue} TA & 69.3 & 31.3 & 90.0 & 80.5 & 79.8 & 73.5 & 60.8 \\
\rowcolor{mergeBlue} TIES & 69.9 & 38.4 & 89.8 & 77.4 & 78.2 & 75.1 & 60.4 \\
\rowcolor{mergeBlue} DARE & 32.2 & 2.5 & 80.3 & 24.5 & 35.0 & 0.8 & 49.9 \\
\rowcolor{rcwmBlue} GCWM & \best{72.5} & \best{38.8} & \best{94.5} & \best{84.9} & 75.4 & 75.0 & \best{66.5} \\
\midrule[0.55pt]
\rowcolor{sizeBand}
\multicolumn{8}{l}{\emph{Qwen3-14B}} \\
\rowcolor{mergeBlue} TA & 69.1 & 36.4 & 91.2 & 61.0 & 80.4 & 80.5 & 64.9 \\
\rowcolor{mergeBlue} TIES & 70.0 & 36.9 & 89.5 & 84.8 & 66.2 & 79.4 & 63.6 \\
\rowcolor{mergeBlue} DARE & 29.6 & 7.1 & 41.9 & 15.8 & 0.4 & 56.4 & 55.8 \\
\rowcolor{rcwmBlue} GCWM & \best{74.3} & \best{39.9} & \best{95.8} & \best{86.6} & 78.2 & 76.7 & \best{66.2} \\
\bottomrule[0.9pt]
\end{tabular}%
}

  \arrayrulecolor{black}
\end{table*}

\FloatBarrier
\subsection{Full Domain-Continual Results}
\label{app:domain_continual_results}

We provide complete MMLU-Pro domain-continual results across all Qwen3 scales.
Table~\ref{tab:app_mmlu_op_all_scales} summarizes overall accuracy, Table~\ref{tab:app_mmlu_rcwm_gain} reports GCWM's gain over the strongest data-free update-integration baseline at each scale, and Table~\ref{tab:app_mmlu_domain_06b_4b} gives the full per-domain results for Qwen3-0.6B and Qwen3-4B omitted from the main text.
Together with Table~\ref{tab:mmlu_domain_main}, these tables cover all five model scales.

Across all five scales, GCWM achieves the best non-MTL overall accuracy.
It improves over the strongest data-free baseline by \(+0.30\), \(+1.61\), \(+1.19\), \(+0.74\), and \(+1.23\) points on Qwen3-0.6B, 1.7B, 4B, 8B, and 14B, respectively.
The average overall accuracy improves from \(51.1\%\) for AIMMerging and \(51.0\%\) for OPCM to \(52.3\%\) for GCWM.

The 0.6B and 4B results further support the same trend.
On Qwen3-0.6B, GCWM obtains the best non-MTL overall score (\(27.14\%\)), with clear gains on chemistry, engineering, math, and other.
On Qwen3-4B, GCWM improves over all data-free baselines overall (\(59.43\%\)) and matches or exceeds the best non-MTL result on 12 of 14 domains, including business, chemistry, computer science, economics, health, history, law, math, other, philosophy, and physics.
These results indicate that geometry-conflict-controlled integration is not only effective at larger scales, but also improves data-free update integration in small and mid-size LLM regimes.

\definecolor{rcwmBlue}{HTML}{DDEEFF}
\definecolor{mergeBlue}{HTML}{F1F7FD}
\definecolor{refGray}{HTML}{F7F7F7}
\definecolor{mtlGold}{HTML}{FFF4D6}
\definecolor{sizeBand}{HTML}{EAF2FA}
\definecolor{ruleNavy}{HTML}{244B73}
\providecommand{\best}[1]{\textbf{#1}}
\providecommand{\ub}[1]{\underline{#1}}

\begin{table*}[!htbp]
\centering
\caption{All-scale MMLU-Pro overall accuracy. MTL is an upper-bound reference; bold marks the best non-MTL result.}
\label{tab:app_mmlu_op_all_scales}
\footnotesize
\setlength{\tabcolsep}{5pt}
\renewcommand{\arraystretch}{1.08}
\arrayrulecolor{ruleNavy}
\begin{tabular}{lrrrrrr}
\toprule[0.9pt]
Method & 0.6B & 1.7B & 4B & 8B & 14B & Avg. \\
\midrule
\rowcolor{mtlGold} MTL & \ub{27.5} & \ub{44.4} & \ub{61.0} & \ub{65.3} & \ub{68.6} & \ub{53.4} \\
\rowcolor{refGray} Seq.\ SFT & 24.4 & 36.8 & 51.6 & 55.2 & 60.4 & 45.7 \\
\rowcolor{refGray} EWC & 24.9 & 40.0 & 55.4 & 60.4 & 65.3 & 49.2 \\
\rowcolor{refGray} FOREVER & 24.7 & 38.5 & 54.7 & 59.6 & 66.5 & 48.8 \\
\cmidrule(lr){1-7}
\rowcolor{mergeBlue} L\&S & 26.2 & 41.1 & 57.3 & 62.4 & 65.6 & 50.5 \\
\rowcolor{mergeBlue} AIMMerging & 26.5 & 41.8 & 58.1 & 62.9 & 66.4 & 51.1 \\
\rowcolor{mergeBlue} OPCM & 26.8 & 41.7 & 58.2 & 61.9 & 66.6 & 51.0 \\
\rowcolor{rcwmBlue} GCWM & \best{27.1} & \best{43.5} & \best{59.4} & \best{63.7} & \best{67.8} & \best{52.3} \\
\bottomrule[0.9pt]
\end{tabular}
\arrayrulecolor{black}
\end{table*}

\begin{table}[!htbp]
\centering
\caption{GCWM gains over data-free update-integration baselines on MMLU-Pro.}
\label{tab:app_mmlu_rcwm_gain}
\footnotesize
\setlength{\tabcolsep}{4.5pt}
\renewcommand{\arraystretch}{1.08}
\arrayrulecolor{ruleNavy}
\begin{tabular}{lrrrr}
\toprule[0.9pt]
Scale & Best data-free base & GCWM & Gain & Wins vs AIM \\
\midrule
\rowcolor{rcwmBlue} 0.6B & OPCM 26.84 & 27.14 & +0.30 & 7/14 \\
\rowcolor{rcwmBlue} 1.7B & AIMMerging 41.84 & 43.45 & +1.61 & 12/14 \\
\rowcolor{rcwmBlue} 4B & OPCM 58.24 & 59.43 & +1.19 & 14/14 \\
\rowcolor{rcwmBlue} 8B & AIMMerging 62.92 & 63.66 & +0.74 & 8/14 \\
\rowcolor{rcwmBlue} 14B & OPCM 66.61 & 67.84 & +1.23 & 9/14 \\
\bottomrule[0.9pt]
\end{tabular}
\arrayrulecolor{black}
\end{table}

\begin{table*}[!htbp]
\centering
\caption{Additional full MMLU-Pro domain-continual results on Qwen3-0.6B and Qwen3-4B. Scores are accuracies (\%). Underlined MTL is a joint-training upper-bound reference; bold marks the best non-MTL result in each block.}
\label{tab:app_mmlu_domain_06b_4b}
\scriptsize
\setlength{\tabcolsep}{2.15pt}
\renewcommand{\arraystretch}{1.06}
\arrayrulecolor{ruleNavy}
\resizebox{\textwidth}{!}{%
\begin{tabular}{lrrrrrrrrrrrrrrr}
\toprule[0.9pt]
Method & Overall & Bio & Bus & Chem & CS & Econ & Eng & Health & Hist & Law & Math & Other & Phil & Phys & Psych \\
\midrule
\rowcolor{sizeBand}
\multicolumn{16}{l}{\emph{Qwen3-0.6B}} \\
\rowcolor{mtlGold} MTL & \ub{27.5} & \ub{42.1} & \ub{33.3} & \ub{19.9} & \ub{30.0} & \ub{35.2} & \ub{21.1} & \ub{22.3} & \ub{18.6} & \ub{14.7} & \ub{36.1} & \ub{22.5} & \ub{22.7} & \ub{28.9} & \ub{37.1} \\
\rowcolor{refGray} Seq.\ SFT & 24.4 & 33.9 & 28.1 & 20.6 & 22.9 & 29.0 & 17.3 & \best{23.5} & \best{18.9} & \best{14.4} & 33.2 & 21.4 & 17.2 & 23.4 & 34.3 \\
\rowcolor{refGray} EWC & 24.9 & 36.7 & 30.9 & 22.4 & 23.9 & 27.8 & 19.2 & 21.0 & 16.5 & 12.9 & 35.9 & 21.1 & 18.0 & 25.9 & 28.4 \\
\rowcolor{refGray} FOREVER & 24.7 & 36.3 & 28.4 & 21.7 & 22.9 & 30.9 & 19.7 & 19.7 & 15.8 & 11.5 & 36.3 & 19.8 & 20.0 & 26.4 & 29.6 \\
\cmidrule(lr){1-16}
\rowcolor{mergeBlue} L\&S & 26.2 & 41.2 & 32.2 & 18.4 & 28.7 & 34.1 & 19.6 & 20.8 & 17.1 & 13.1 & 35.0 & 21.1 & 21.2 & 27.7 & 36.0 \\
\rowcolor{mergeBlue} AIMMerging & 26.5 & 42.2 & 32.7 & 18.3 & 29.2 & 34.7 & 19.6 & 20.9 & 17.0 & 12.8 & 35.7 & 21.1 & 21.3 & 28.0 & 36.8 \\
\rowcolor{mergeBlue} OPCM & 26.8 & \best{42.3} & \best{33.0} & 18.9 & \best{29.5} & \best{35.0} & 20.1 & 21.4 & 17.6 & 13.5 & 35.9 & 21.6 & \best{21.8} & \best{28.4} & \best{36.9} \\
\rowcolor{rcwmBlue} GCWM & \best{27.1} & 38.9 & 29.1 & \best{23.8} & 28.3 & 34.9 & \best{22.0} & 21.1 & 15.2 & 14.3 & \best{40.0} & \best{22.2} & 21.2 & 26.4 & 35.1 \\
\midrule[0.55pt]
\rowcolor{sizeBand}
\multicolumn{16}{l}{\emph{Qwen3-4B}} \\
\rowcolor{mtlGold} MTL & \ub{61.0} & \ub{81.6} & \ub{67.5} & \ub{67.5} & \ub{65.1} & \ub{70.3} & \ub{52.2} & \ub{57.2} & \ub{50.4} & \ub{35.4} & \ub{71.3} & \ub{51.3} & \ub{53.3} & \ub{65.7} & \ub{69.9} \\
\rowcolor{refGray} Seq.\ SFT & 51.6 & 73.8 & 57.0 & 49.3 & 51.9 & 61.4 & 44.7 & 50.0 & 41.5 & 24.6 & 62.7 & 43.4 & 44.5 & 53.7 & 63.0 \\
\rowcolor{refGray} EWC & 55.4 & \best{75.6} & 62.9 & 57.9 & 61.5 & 64.5 & \best{47.8} & 53.3 & 42.5 & 25.2 & 69.2 & 46.5 & 45.1 & 57.8 & 62.9 \\
\rowcolor{refGray} FOREVER & 54.7 & 72.1 & 59.4 & 57.4 & 58.8 & 64.5 & 41.2 & 53.3 & 40.4 & 24.7 & 71.0 & 46.7 & 45.7 & 59.3 & \best{63.5} \\
\cmidrule(lr){1-16}
\rowcolor{mergeBlue} L\&S & 57.3 & 72.7 & 64.7 & 62.3 & 62.2 & 66.3 & 46.1 & 56.3 & 41.2 & 26.4 & 75.0 & 46.2 & 45.1 & 61.4 & 61.4 \\
\rowcolor{mergeBlue} AIMMerging & 58.1 & 74.3 & 66.0 & 63.6 & 63.4 & 67.8 & 46.8 & 57.3 & 41.7 & 26.3 & 76.7 & 46.9 & 45.7 & 62.6 & 62.7 \\
\rowcolor{mergeBlue} OPCM & 58.3 & 75.2 & 65.7 & 63.8 & 63.5 & 67.8 & 47.0 & 57.5 & 42.1 & 27.0 & 76.5 & 47.2 & 46.0 & 62.6 & 63.1 \\
\rowcolor{rcwmBlue} GCWM & \best{59.4} & 75.1 & \best{66.8} & \best{64.4} & \best{64.2} & \best{68.6} & 47.6 & \best{58.2} & \best{42.6} & \best{27.3} & \best{77.5} & \best{47.7} & \best{46.6} & \best{63.4} & 63.5 \\
\bottomrule[0.9pt]
\end{tabular}%
}
\arrayrulecolor{black}
\end{table*}

\FloatBarrier
\subsection{Full Capability-Continual Results}
\label{app:capability_continual_results}

Table~\ref{tab:app_capability_all_scales} reports full capability-continual results across all five Qwen3 scales, and Table~\ref{tab:app_capability_gain} summarizes GCWM's gain over the strongest data-free baseline at each scale.
Together with Table~\ref{tab:capability_main}, these results cover both the compact main-text comparison and the full scale sweep.

Across all five scales, GCWM has the strongest average among data-free update-integration methods (63.94), ahead of OPCM (61.99), AIMMerging (60.28), and L\&S (59.53).
GCWM is best on overall average at 1.7B, 8B, and 14B, while remaining close to the best baseline at 0.6B and 4B.
The gain over the strongest data-free baseline is -0.11, +5.78, -0.18, +1.61, and +1.39 points on 0.6B, 1.7B, 4B, 8B, and 14B, respectively.

At larger scales, GCWM shows clearer capability transfer benefits.
On 14B, GCWM leads data-free baselines on GPQA-Diamond, GSM8K, HumanEval, and MMLU-Pro, while remaining competitive on MATH-500 and MBPP.
At smaller scales, capability interactions are noisier: for 0.6B, GCWM is strongest on HumanEval and MBPP but slightly behind AIMMerging in average score.
This pattern is consistent with the state-relative geometry findings in Sec.~\ref{sec:findings}: as model capacity increases, compatibility-aware update integration yields more stable gains.

\begin{table*}[!htbp]
\centering
\caption{All-scale capability-continual results. Scores are accuracies or pass@1 (\%). Underlined MTL is a joint-training upper-bound reference; bold marks the best data-free update-integration method in each size block.}
\label{tab:app_capability_all_scales}
\scriptsize
\setlength{\tabcolsep}{4.0pt}
\renewcommand{\arraystretch}{1.06}
\arrayrulecolor{ruleNavy}
\resizebox{0.8\textwidth}{!}{%
\begin{tabular}{lrrrrrrr}
\toprule[0.9pt]
Method & Avg. & GPQA-D. & GSM8K & HumanEval & MATH-500 & MBPP & MMLU-Pro \\
\midrule
\rowcolor{sizeBand}
\multicolumn{8}{l}{\emph{Qwen3-0.6B}} \\
\rowcolor{mtlGold} MTL & \ub{40.7} & \ub{24.8} & \ub{52.4} & \ub{31.7} & \ub{44.4} & \ub{45.9} & \ub{45.1} \\
\rowcolor{refGray} Seq.\ SFT & 37.4 & 19.7 & 51.7 & 36.6 & 49.8 & 42.4 & 24.2 \\
\rowcolor{refGray} EWC & 38.9 & 23.7 & 52.8 & 35.4 & 49.4 & 45.5 & 26.6 \\
\rowcolor{refGray} FOREVER & 40.0 & 18.2 & 61.7 & 37.2 & 51.6 & 42.4 & 29.1 \\
\rowcolor{mergeBlue} L\&S & 39.6 & 23.5 & 55.9 & 32.6 & 48.2 & 48.5 & 29.0 \\
\rowcolor{mergeBlue} AIMMerging & \best{40.1} & \best{23.7} & \best{56.6} & 32.9 & 48.8 & 49.0 & \best{29.4} \\
\rowcolor{mergeBlue} OPCM & 38.8 & 22.3 & 51.0 & 32.9 & \best{51.2} & 48.8 & 26.7 \\
\rowcolor{rcwmBlue} GCWM & 40.0 & 23.2 & 51.2 & \best{36.6} & 50.8 & \best{50.2} & 27.8 \\
\midrule[0.55pt]
\rowcolor{sizeBand}
\multicolumn{8}{l}{\emph{Qwen3-1.7B}} \\
\rowcolor{mtlGold} MTL & \ub{57.1} & \ub{26.7} & \ub{76.1} & \ub{61.6} & \ub{63.6} & \ub{56.8} & \ub{57.5} \\
\rowcolor{refGray} Seq.\ SFT & 51.9 & 18.2 & 70.6 & 64.0 & 64.2 & 59.1 & 35.3 \\
\rowcolor{refGray} EWC & 54.7 & 24.8 & 75.7 & 61.6 & 67.8 & 57.6 & 40.5 \\
\rowcolor{refGray} FOREVER & 58.3 & 27.3 & 76.7 & 62.2 & 69.6 & 66.9 & 47.3 \\
\rowcolor{mergeBlue} L\&S & 52.4 & 21.3 & 71.0 & 62.8 & 57.5 & 58.4 & 43.5 \\
\rowcolor{mergeBlue} AIMMerging & 53.4 & 21.7 & 72.4 & 64.0 & 58.6 & 59.5 & 44.3 \\
\rowcolor{mergeBlue} OPCM & 56.8 & 23.2 & 73.0 & 65.2 & \best{67.6} & 59.9 & 51.9 \\
\rowcolor{rcwmBlue} GCWM & \best{58.3} & \best{26.3} & \best{79.0} & \best{67.4} & 63.4 & \best{61.5} & \best{52.0} \\
\midrule[0.55pt]
\rowcolor{sizeBand}
\multicolumn{8}{l}{\emph{Qwen3-4B}} \\
\rowcolor{mtlGold} MTL & \ub{68.7} & \ub{32.8} & \ub{89.3} & \ub{74.4} & \ub{80.2} & \ub{70.4} & \ub{65.3} \\
\rowcolor{refGray} Seq.\ SFT & 71.6 & 41.4 & 93.9 & 84.8 & 69.4 & 76.6 & 63.6 \\
\rowcolor{refGray} EWC & 68.6 & 37.2 & 90.1 & 82.6 & 71.5 & 70.6 & 60.0 \\
\rowcolor{refGray} FOREVER & 72.5 & 42.4 & 92.7 & 88.4 & 72.6 & 75.9 & 62.8 \\
\rowcolor{mergeBlue} L\&S & 64.7 & 32.9 & 86.7 & 73.4 & 69.6 & 69.8 & 55.6 \\
\rowcolor{mergeBlue} AIMMerging & 65.6 & 33.3 & 87.9 & 74.4 & 70.6 & 70.8 & 56.4 \\
\rowcolor{mergeBlue} OPCM & \best{70.5} & \best{37.4} & \best{94.5} & 82.3 & 70.4 & \best{73.9} & \best{64.2} \\
\rowcolor{rcwmBlue} GCWM & 70.3 & 35.4 & 93.7 & \best{84.8} & \best{71.2} & 72.8 & 63.9 \\
\midrule[0.55pt]
\rowcolor{sizeBand}
\multicolumn{8}{l}{\emph{Qwen3-8B}} \\
\rowcolor{mtlGold} MTL & \ub{73.3} & \ub{35.9} & \ub{92.2} & \ub{82.9} & \ub{87.2} & \ub{76.3} & \ub{65.4} \\
\rowcolor{refGray} Seq.\ SFT & 70.3 & 35.9 & 89.3 & 87.2 & 72.6 & 76.6 & 60.4 \\
\rowcolor{refGray} EWC & 72.6 & 41.3 & 93.5 & 84.7 & 76.5 & 75.4 & 64.2 \\
\rowcolor{refGray} FOREVER & 75.6 & 47.0 & 94.7 & 92.7 & 69.0 & 82.5 & 68.1 \\
\rowcolor{mergeBlue} L\&S & 69.6 & 32.1 & 88.5 & 79.3 & 76.4 & 75.3 & 66.1 \\
\rowcolor{mergeBlue} AIMMerging & 70.2 & 32.3 & 89.2 & 79.9 & \best{77.0} & \best{75.9} & \best{66.7} \\
\rowcolor{mergeBlue} OPCM & 70.9 & 37.8 & \best{94.6} & 79.3 & 74.4 & 74.3 & 65.2 \\
\rowcolor{rcwmBlue} GCWM & \best{72.5} & \best{38.8} & 94.5 & \best{84.9} & 75.4 & 75.0 & 66.5 \\
\midrule[0.55pt]
\rowcolor{sizeBand}
\multicolumn{8}{l}{\emph{Qwen3-14B}} \\
\rowcolor{mtlGold} MTL & \ub{74.6} & \ub{33.3} & \ub{92.1} & \ub{84.2} & \ub{87.8} & \ub{80.5} & \ub{69.8} \\
\rowcolor{refGray} Seq.\ SFT & 70.4 & 43.4 & 95.8 & 86.6 & 66.2 & 63.4 & 67.2 \\
\rowcolor{refGray} EWC & 73.5 & 43.4 & 95.4 & 86.0 & 68.0 & 78.6 & 69.4 \\
\rowcolor{refGray} FOREVER & 75.9 & 54.0 & 96.4 & 87.2 & 69.2 & 75.9 & 72.8 \\
\rowcolor{mergeBlue} L\&S & 71.3 & 38.4 & 94.1 & 83.7 & 76.5 & 78.5 & 56.8 \\
\rowcolor{mergeBlue} AIMMerging & 72.2 & 38.8 & 95.2 & 84.7 & 77.4 & \best{79.4} & 57.5 \\
\rowcolor{mergeBlue} OPCM & 72.9 & 38.0 & 94.5 & 80.5 & \best{79.7} & 78.6 & 66.3 \\
\rowcolor{rcwmBlue} GCWM & \best{74.3} & \best{39.9} & \best{95.8} & \best{86.6} & 78.2 & 76.7 & \best{68.8} \\
\bottomrule[0.9pt]
\end{tabular}%
}

\arrayrulecolor{black}
\end{table*}

\begin{table}[!htbp]
\centering
\caption{GCWM gains over the strongest data-free capability baseline at each scale.}
\label{tab:app_capability_gain}
\footnotesize
\setlength{\tabcolsep}{4.2pt}
\renewcommand{\arraystretch}{1.08}
\arrayrulecolor{ruleNavy}
\begin{tabular}{lrrrr}
\toprule[0.9pt]
Scale & Best data-free base & GCWM-Diamond & Gain & Wins vs AIM \\
\midrule
\rowcolor{rcwmBlue} 0.6B & AIMMerging 40.07 & 39.96 & -0.11 & 3/6 \\
\rowcolor{rcwmBlue} 1.7B & OPCM 56.82 & 58.26 & +1.44 & 6/6 \\
\rowcolor{rcwmBlue} 4B & OPCM 70.45 & 70.27 & -0.18 & 6/6 \\
\rowcolor{rcwmBlue} 8B & OPCM 70.93 & 72.54 & +1.61 & 3/6 \\
\rowcolor{rcwmBlue} 14B & OPCM 72.93 & 74.32 & +1.39 & 5/6 \\
\bottomrule[0.9pt]
\end{tabular}

\arrayrulecolor{black}
\end{table}

\begin{figure}[!htbp]
  \centering
  \includegraphics[width=\linewidth]{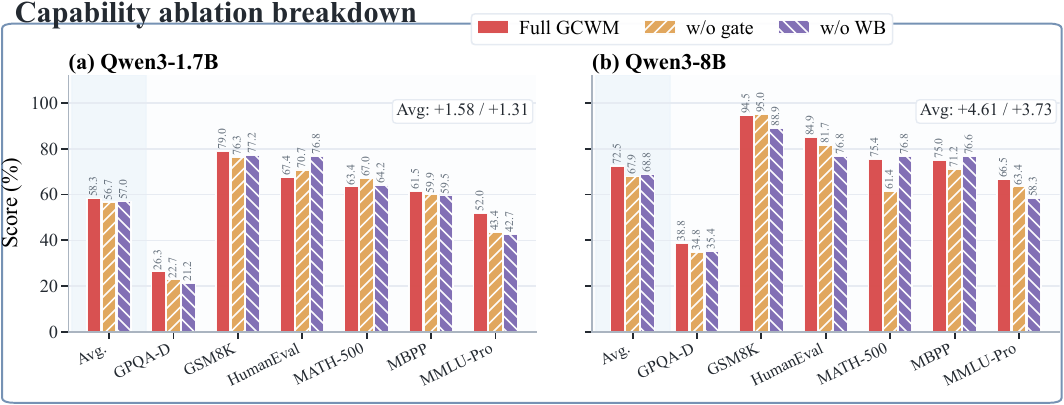}
  \caption{
  {Capability ablation breakdown.}
  Full GCWM is compared with variants that remove conflict gating or replace the Wasserstein barycenter.
  }
  \label{fig:app_capability_ablation_breakdown}
\end{figure}

\section{Additional Ablation Study}
\label{app:ablations}

We provide additional capability-continual ablations on Qwen3-1.7B and Qwen3-8B.
The variants isolate the two merge-time components used by GCWM: the geometry-conflict gate and the Wasserstein shared metric.
The \emph{w/o gate} variant removes conflict-conditioned interpolation and applies the geometry-aware branch uniformly, while \emph{w/o WB} replaces the Wasserstein barycenter with a simpler shared metric.
All variants use the same task experts and evaluation protocol; only the merge rule changes.

Fig.~\ref{fig:app_capability_ablation_breakdown} visualizes the benchmark-level ablations, and Table~\ref{tab:app_capability_ablation_summary} summarizes the aggregate effect.
On Qwen3-1.7B, full GCWM improves the average score from \(56.68\%\) without the gate and \(56.95\%\) without the Wasserstein barycenter to \(58.26\%\).
On Qwen3-8B, the gap is larger: full GCWM reaches \(72.54\%\), compared with \(67.93\%\) without the gate and \(68.81\%\) without the Wasserstein barycenter.
This indicates that both components matter more strongly in the larger-scale capability setting, where math, code, and knowledge updates interact more sharply.

Table~\ref{tab:app_capability_ablation_full} gives the full benchmark breakdown.
The improvements are not uniform across all benchmarks, which is expected for heterogeneous capability updates.
For Qwen3-1.7B, removing the gate can improve MATH-500 and removing the Wasserstein barycenter can improve HumanEval, but both ablations substantially reduce MMLU-Pro and GPQA-Diamond, leading to weaker aggregate performance.
For Qwen3-8B, the gate is especially important for MATH-500 (\(+14.0\) points over w/o gate), while the Wasserstein barycenter is important for GSM8K, HumanEval, MMLU-Pro, and GPQA-Diamond.
Overall, these ablations support the design of GCWM as a compatibility-controlled merge: the gate decides how strongly to trust geometry-aware integration, and the Wasserstein shared metric provides the common space in which heterogeneous task updates can be combined.

\definecolor{rcwmBlue}{HTML}{DDEEFF}
\definecolor{mergeBlue}{HTML}{F1F7FD}
\definecolor{refGray}{HTML}{F7F7F7}
\definecolor{mtlGold}{HTML}{FFF4D6}
\definecolor{sizeBand}{HTML}{EAF2FA}
\definecolor{ruleNavy}{HTML}{244B73}
\providecommand{\best}[1]{\textbf{#1}}
\providecommand{\ub}[1]{\underline{#1}}

\begin{table}[!htbp]
  \centering
  \caption{
  {Capability ablation summary.}
  Drop is the average-score difference between full GCWM and the corresponding ablated variant.
  }
  \label{tab:app_capability_ablation_summary}
  \footnotesize
  \setlength{\tabcolsep}{5pt}
  \renewcommand{\arraystretch}{1.08}
  \arrayrulecolor{ruleNavy}
  \begin{tabular}{lrrrrr}
\toprule[0.9pt]
Scale & Full GCWM & w/o gate & Drop & w/o WB & Drop \\
\midrule
\rowcolor{rcwmBlue} 1.7B & 58.26 & 56.68 & +1.58 & 56.95 & +1.31 \\
\rowcolor{rcwmBlue} 8B & 72.54 & 67.93 & +4.61 & 68.81 & +3.73 \\
\bottomrule[0.9pt]
\end{tabular}

  \arrayrulecolor{black}
\end{table}

\begin{table*}[!htbp]
  \centering
  \caption{
  {Capability ablation breakdown on Qwen3-1.7B and 8B.}
  Scores are accuracies or pass@1 (\%).
  Bold marks the best variant within each scale and metric.
  }
  \label{tab:app_capability_ablation_full}
  \scriptsize
  \setlength{\tabcolsep}{2.5pt}
  \renewcommand{\arraystretch}{1.06}
  \arrayrulecolor{ruleNavy}
  \resizebox{0.75\textwidth}{!}{%
\begin{tabular}{lrrrrrrr}
\toprule[0.9pt]
Variant & Avg. & GPQA-D & GSM8K & HumanEval & MATH-500 & MBPP & MMLU-Pro \\
\midrule
\rowcolor{sizeBand}
\multicolumn{8}{l}{\emph{Qwen3-1.7B}} \\
\rowcolor{rcwmBlue} GCWM & \best{58.3} & \best{26.3} & \best{79.0} & 67.4 & 63.4 & \best{61.5} & \best{52.0} \\
\rowcolor{mergeBlue} w/o gate & 56.7 & 22.7 & 76.3 & 70.7 & \best{67.0} & 59.9 & 43.4 \\
\rowcolor{mergeBlue} w/o WB & 57.0 & 21.2 & 77.2 & \best{76.8} & 64.2 & 59.5 & 42.7 \\
\midrule[0.55pt]
\rowcolor{sizeBand}
\multicolumn{8}{l}{\emph{Qwen3-8B}} \\
\rowcolor{rcwmBlue} GCWM & \best{72.5} & \best{38.8} & 94.5 & \best{84.9} & 75.4 & 75.0 & \best{66.5} \\
\rowcolor{mergeBlue} w/o gate & 67.9 & 34.8 & \best{95.0} & 81.7 & 61.4 & 71.2 & 63.4 \\
\rowcolor{mergeBlue} w/o WB & 68.8 & 35.4 & 88.9 & 76.8 & \best{76.8} & \best{76.6} & 58.3 \\
\bottomrule[0.9pt]
\end{tabular}%
}

  \arrayrulecolor{black}
\end{table*}

\FloatBarrier

\section{Runtime and Memory Profiling}
\label{app:runtime_memory_profile}

We profile GCWM merge-time cost on Qwen3-8B and Qwen3-14B under the capability-continual setting.
The purpose is to measure practical feasibility rather than training efficiency: GCWM is a data-free merge-time method, so its overhead is incurred once during model integration and adds no inference-time cost.
Profiling is performed on a Slurm job using one visible GPU on node \texttt{kb3-a1-nv-dgx02}; each continual step is timed separately and peak GPU memory is recorded.
The active update count \(m\) increases with the number of merged updates, while the retained rank is fixed at \(r=16\). The 8B profile covers the full three-step sequence (MMLU, math, code), including the \(m=3\) code step. For 14B, we report the two profiled steps (MMLU and math), which provide a matched \(m=1,2\) comparison against 8B and isolate the scale effect without treating 14B as a full three-step average.

Fig.~\ref{fig:app_runtime_memory_profile} visualizes the merge-time decomposition and peak memory trend, and Tables~\ref{tab:app_runtime_memory_summary}--\ref{tab:app_runtime_memory_steps} report the exact summary and per-step values.
On Qwen3-8B, the average merge step takes \(40.5\pm19.7\) minutes with \(7.8\pm3.4\) GB peak GPU memory across the three profiled steps.
On Qwen3-14B, the two matched profiled steps take \(76.2\pm34.8\) minutes with \(11.7\pm4.7\) GB peak GPU memory.
The cost grows with active updates because the union rank increases with \(m\), but the expensive SPD operations remain in the projected rank-\(r\) or union-rank space rather than in the full input dimension.
In practice, the dominant wall-clock terms are SVD/metric preparation and inner merge optimization; Bures--Wasserstein barycenter and matrix square-root operations are comparatively small in the profiled runs.
For example, averaged over the available steps, barycenter plus matrix square-root/inverse-square-root takes \(52.1\) seconds on 8B and \(30.5\) seconds on 14B.

The implementation follows the projected low-rank formulation in Sec.~\ref{sec:method}.
For an update matrix with input dimension \(d_{\mathrm{in}}\), output dimension \(d_{\mathrm{out}}\), active updates \(m\), retained rank \(r\), and barycenter iterations \(I\), the dominant per-layer cost can be summarized as
\[
O(m d_{\mathrm{out}} d_{\mathrm{in}} r)
+ O(I r^3)
+ O(m d_{\mathrm{out}} r^2),
\]
up to implementation constants and merge-optimizer iterations.
This highlights why GCWM is feasible at 8B/14B scales: the Bures--Wasserstein and square-root operations are applied to low-rank projected SPD matrices rather than dense \(d_{\mathrm{in}}\times d_{\mathrm{in}}\) operators.

\begin{figure}[!htbp]
  \centering
  \includegraphics[width=\linewidth]{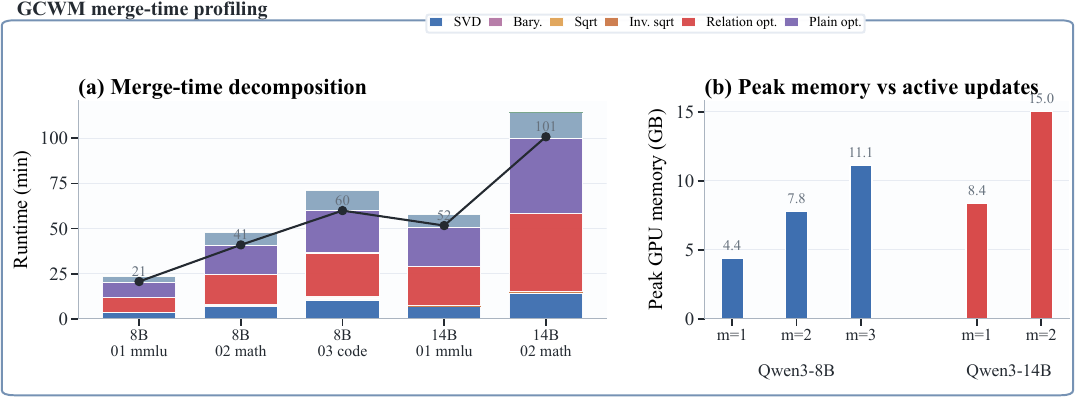}
  \caption{
  {GCWM merge-time runtime and memory profiling.}
  Runtime is decomposed by major merge-time components, and memory is reported as peak allocated GPU memory.
  }
  \label{fig:app_runtime_memory_profile}
\end{figure}

\begin{table}[!htbp]
  \centering
  \caption{
  {GCWM runtime and memory summary.}
  Values are mean \(\pm\) standard deviation across the profiled continual steps; 14B uses the two matched \(m=1,2\) steps.
  }
  \label{tab:app_runtime_memory_summary}
  \scriptsize
  \setlength{\tabcolsep}{3.3pt}
  \renewcommand{\arraystretch}{1.08}
  \arrayrulecolor{ruleNavy}
  \resizebox{0.8\linewidth}{!}{\begin{tabular}{lrrrrrrrr}
\toprule[0.9pt]
Model & Steps & Layers & \(m\) & \(r\) & Union rank & Wall time & Peak mem. & Bary.+sqrt \\
 &  &  &  &  &  & (min/step) & (GB) & (s) \\
\midrule
\rowcolor{rcwmBlue} 8B & 3 & 252 & 2.0 & 16 & 32 & 40.5 $\pm$ 19.7 & 7.8 $\pm$ 3.4 & 52.1 \\
\rowcolor{rcwmBlue} 14B & 2 & 280 & 1.5 & 16 & 24 & 76.2 $\pm$ 34.8 & 11.7 $\pm$ 4.7 & 30.5 \\
\bottomrule[0.9pt]
\end{tabular}
}
  \arrayrulecolor{black}
\end{table}

\begin{table}[!htbp]
  \centering
  \caption{
  {Per-step GCWM profiling.}
  \(m\) denotes the number of active updates in the merge step.
  }
  \label{tab:app_runtime_memory_steps}
  \scriptsize
  \setlength{\tabcolsep}{3.4pt}
  \renewcommand{\arraystretch}{1.08}
  \arrayrulecolor{ruleNavy}
  \resizebox{0.8\linewidth}{!}{\begin{tabular}{llrrrrr}
\toprule[0.9pt]
Model & Step & \(m\) & Union rank & Wall time (min) & Peak mem. (GB) & Bary. (s) \\
\midrule
\rowcolor{mergeBlue} 8B & 01\_mmlu & 1 & 16 & 20.6 & 4.4 & 0.6 \\
\rowcolor{mergeBlue} 8B & 02\_math & 2 & 32 & 40.9 & 7.8 & 25.2 \\
\rowcolor{mergeBlue} 8B & 03\_code & 3 & 48 & 59.9 & 11.1 & 55.0 \\
\rowcolor{mergeBlue} 14B & 01\_mmlu & 1 & 16 & 51.6 & 8.4 & 3.9 \\
\rowcolor{mergeBlue} 14B & 02\_math & 2 & 32 & 100.8 & 15.0 & 28.4 \\
\bottomrule[0.9pt]
\end{tabular}
}
  \arrayrulecolor{black}
\end{table}

\FloatBarrier

\section{Hyperparameter Sensitivity}
\label{app:hyperparameter_sensitivity}

We evaluate one-dimensional hyperparameter sweeps on Qwen3-8B under the capability-continual setting.
Each sweep changes one parameter while keeping the remaining GCWM configuration fixed, which is intended to test robustness rather than tune on the evaluation set.
We vary the retained geometry energy, the conflict threshold \(\tau\), the retained SVD rank \(r\), the gate sharpness \(\kappa\), and the outer merge coefficient \(\eta_t\).
The default configuration corresponds to energy \(0.90\), \(\tau=0.12\), \(r=16\), \(\kappa=10\), and \(\eta_t=0.1\).

Fig.~\ref{fig:app_hyperparam_sensitivity} summarizes the sensitivity profile, while Tables~\ref{tab:app_hyperparam_sensitivity_summary}--\ref{tab:app_hyperparam_sensitivity_full} provide exact values.
GCWM is stable under moderate choices of the geometry and gate parameters.
Changing \(\tau\) over \(\{0.08,0.12,0.18\}\), \(r\) over \(\{8,16,32,64\}\), and \(\kappa\) over \(\{5,10,20\}\) changes the average score by only \(1.6\), \(0.9\), and \(1.4\) points, respectively.
The energy threshold has a larger but interpretable trade-off: energy \(0.95\) gives the best average score (\(72.2\%\)), while the default energy \(0.90\) is more conservative on MMLU-Pro and GPQA-Diamond.
The dominant sensitivity is the outer merge coefficient \(\eta_t\): moderate values remain competitive, with \(\eta_t=0.3\) achieving \(70.8\%\), but overly aggressive integration collapses performance at \(\eta_t=1.0\) (\(34.3\%\)).
This supports the use of a conservative validation-free coefficient schedule for data-free continual update integration.

\begin{figure}[!htbp]
  \centering
  \includegraphics[width=\linewidth]{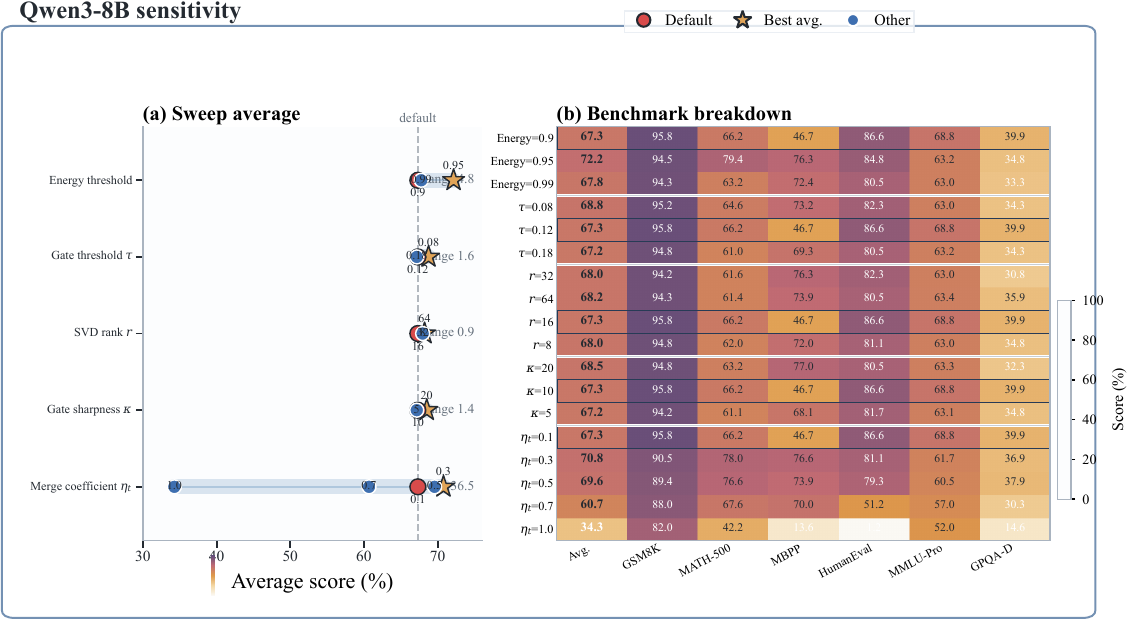}
  \caption{
  {GCWM hyperparameter sensitivity on Qwen3-8B.}
  Each sweep changes one parameter while keeping the others fixed; default rows are outlined and best average settings are marked by stars.
  }
  \label{fig:app_hyperparam_sensitivity}
\end{figure}

\begin{table}[!htbp]
  \centering
  \caption{
  {GCWM hyperparameter sensitivity summary.}
  Scores are average capability-continual performance (\%); range is best minus worst within each one-dimensional sweep.
  }
  \label{tab:app_hyperparam_sensitivity_summary}
  \footnotesize
  \setlength{\tabcolsep}{5pt}
  \renewcommand{\arraystretch}{1.08}
  \arrayrulecolor{ruleNavy}
  \begin{tabular}{llccc}
\toprule
Sweep & Default & Best avg. & Worst avg. & Range \\
\midrule
Energy threshold & 0.9 (67.3) & \best{0.95 (72.2)} & 0.9 (67.3) & 4.8 \\
Gate threshold $\tau$ & 0.12 (67.3) & \best{0.08 (68.8)} & 0.18 (67.2) & 1.6 \\
SVD rank $r$ & 16 (67.3) & \best{64 (68.2)} & 16 (67.3) & 0.9 \\
Gate sharpness $\kappa$ & 10 (67.3) & \best{20 (68.5)} & 5 (67.2) & 1.4 \\
Merge coefficient $\eta_t$ & 0.1 (67.3) & \best{0.3 (70.8)} & 1.0 (34.3) & 36.5 \\
\bottomrule
\end{tabular}

  \arrayrulecolor{black}
\end{table}

\begin{table*}[!htbp]
  \centering
  \caption{
  {Full Qwen3-8B hyperparameter sensitivity breakdown.}
  Scores are accuracies or pass@1 (\%); shaded rows denote the default setting for each sweep.
  }
  \label{tab:app_hyperparam_sensitivity_full}
  \scriptsize
  \setlength{\tabcolsep}{3.0pt}
  \renewcommand{\arraystretch}{1.06}
  \arrayrulecolor{ruleNavy}
  \resizebox{\textwidth}{!}{\begin{tabular}{llrrrrrrr}
\toprule
Sweep & Setting & Avg. & GSM8K & MATH-500 & MBPP & HumanEval & MMLU-Pro & GPQA-D \\
\midrule
\rowcolor{mergeBlue}
Energy threshold & 0.9 & 67.3 & 95.8 & 66.2 & 46.7 & 86.6 & 68.8 & 39.9 \\
 & 0.95 & \best{72.2} & 94.5 & 79.4 & 76.3 & 84.8 & 63.2 & 34.8 \\
 & 0.99 & 67.8 & 94.3 & 63.2 & 72.4 & 80.5 & 63.0 & 33.3 \\
\addlinespace[1pt]
Gate threshold $\tau$ & 0.08 & \best{68.8} & 95.2 & 64.6 & 73.2 & 82.3 & 63.0 & 34.3 \\
\rowcolor{mergeBlue}
 & 0.12 & 67.3 & 95.8 & 66.2 & 46.7 & 86.6 & 68.8 & 39.9 \\
 & 0.18 & 67.2 & 94.8 & 61.0 & 69.3 & 80.5 & 63.2 & 34.3 \\
\addlinespace[1pt]
SVD rank $r$ & 32 & 68.0 & 94.2 & 61.6 & 76.3 & 82.3 & 63.0 & 30.8 \\
 & 64 & \best{68.2} & 94.3 & 61.4 & 73.9 & 80.5 & 63.4 & 35.9 \\
\rowcolor{mergeBlue}
 & 16 & 67.3 & 95.8 & 66.2 & 46.7 & 86.6 & 68.8 & 39.9 \\
 & 8 & 68.0 & 94.8 & 62.0 & 72.0 & 81.1 & 63.0 & 34.8 \\
\addlinespace[1pt]
Gate sharpness $\kappa$ & 20 & \best{68.5} & 94.8 & 63.2 & 77.0 & 80.5 & 63.3 & 32.3 \\
\rowcolor{mergeBlue}
 & 10 & 67.3 & 95.8 & 66.2 & 46.7 & 86.6 & 68.8 & 39.9 \\
 & 5 & 67.2 & 94.2 & 61.1 & 68.1 & 81.7 & 63.1 & 34.8 \\
\addlinespace[1pt]
\rowcolor{mergeBlue}
Merge coefficient $\eta_t$ & 0.1 & 67.3 & 95.8 & 66.2 & 46.7 & 86.6 & 68.8 & 39.9 \\
 & 0.3 & \best{70.8} & 90.5 & 78.0 & 76.6 & 81.1 & 61.7 & 36.9 \\
 & 0.5 & 69.6 & 89.4 & 76.6 & 73.9 & 79.3 & 60.5 & 37.9 \\
 & 0.7 & 60.7 & 88.0 & 67.6 & 70.0 & 51.2 & 57.0 & 30.3 \\
 & 1.0 & 34.3 & 82.0 & 42.2 & 13.6 & 1.2 & 52.0 & 14.6 \\
\bottomrule
\end{tabular}
}
  \arrayrulecolor{black}
\end{table*}

\FloatBarrier

%%%%%%%%%%%%%%%%%%%%%%%%%%%%%%%%%%%%%%%%%%%%%%%%%%%%%%%%%%%%
\newpage
\section*{NeurIPS Paper Checklist}

\begin{enumerate}

\item {\bf Claims}
    \item[] Question: Do the main claims made in the abstract and introduction accurately reflect the paper's contributions and scope?
    \item[] Answer: \answerYes{} % Replace by \answerYes{}, \answerNo{}, or \answerNA{}.
    \item[] Justification: The abstract and introduction state the paper's core claims (state-relative geometry explanation and GCWM as a data-free integration method), and these are supported by the empirical/theoretical sections (Secs.~3--5 and Appendix).
    \item[] Guidelines:
    \begin{itemize}
        \item The answer \answerNA{} means that the abstract and introduction do not include the claims made in the paper.
        \item The abstract and/or introduction should clearly state the claims made, including the contributions made in the paper and important assumptions and limitations. A \answerNo{} or \answerNA{} answer to this question will not be perceived well by the reviewers. 
        \item The claims made should match theoretical and experimental results, and reflect how much the results can be expected to generalize to other settings. 
        \item It is fine to include aspirational goals as motivation as long as it is clear that these goals are not attained by the paper. 
    \end{itemize}

\item {\bf Limitations}
    \item[] Question: Does the paper discuss the limitations of the work performed by the authors?
    \item[] Answer: \answerYes{} % Replace by \answerYes{}, \answerNo{}, or \answerNA{}.
    \item[] Justification: Limitations are discussed in Appendix~\ref{app:limitations}, including benchmark scope, the interpretation of geometry conflict, replay-vs-data-free tradeoffs, and offline compute overhead.
    \item[] Guidelines:
    \begin{itemize}
        \item The answer \answerNA{} means that the paper has no limitation while the answer \answerNo{} means that the paper has limitations, but those are not discussed in the paper. 
        \item The authors are encouraged to create a separate ``Limitations'' section in their paper.
        \item The paper should point out any strong assumptions and how robust the results are to violations of these assumptions (e.g., independence assumptions, noiseless settings, model well-specification, asymptotic approximations only holding locally). The authors should reflect on how these assumptions might be violated in practice and what the implications would be.
        \item The authors should reflect on the scope of the claims made, e.g., if the approach was only tested on a few datasets or with a few runs. In general, empirical results often depend on implicit assumptions, which should be articulated.
        \item The authors should reflect on the factors that influence the performance of the approach. For example, a facial recognition algorithm may perform poorly when image resolution is low or images are taken in low lighting. Or a speech-to-text system might not be used reliably to provide closed captions for online lectures because it fails to handle technical jargon.
        \item The authors should discuss the computational efficiency of the proposed algorithms and how they scale with dataset size.
        \item If applicable, the authors should discuss possible limitations of their approach to address problems of privacy and fairness.
        \item While the authors might fear that complete honesty about limitations might be used by reviewers as grounds for rejection, a worse outcome might be that reviewers discover limitations that aren't acknowledged in the paper. The authors should use their best judgment and recognize that individual actions in favor of transparency play an important role in developing norms that preserve the integrity of the community. Reviewers will be specifically instructed to not penalize honesty concerning limitations.
    \end{itemize}

\item {\bf Theory assumptions and proofs}
    \item[] Question: For each theoretical result, does the paper provide the full set of assumptions and a complete (and correct) proof?
    \item[] Answer: \answerYes{} % Replace by \answerYes{}, \answerNo{}, or \answerNA{}.
    \item[] Justification: Theoretical assumptions, statements, and proofs are provided in the main text and Appendix (Sec.~4 and Appendices~A--B), including theorem assumptions and full derivations.
    \item[] Guidelines:
    \begin{itemize}
        \item The answer \answerNA{} means that the paper does not include theoretical results. 
        \item All the theorems, formulas, and proofs in the paper should be numbered and cross-referenced.
        \item All assumptions should be clearly stated or referenced in the statement of any theorems.
        \item The proofs can either appear in the main paper or the supplemental material, but if they appear in the supplemental material, the authors are encouraged to provide a short proof sketch to provide intuition. 
        \item Inversely, any informal proof provided in the core of the paper should be complemented by formal proofs provided in appendix or supplemental material.
        \item Theorems and Lemmas that the proof relies upon should be properly referenced. 
    \end{itemize}

    \item {\bf Experimental result reproducibility}
    \item[] Question: Does the paper fully disclose all the information needed to reproduce the main experimental results of the paper to the extent that it affects the main claims and/or conclusions of the paper (regardless of whether the code and data are provided or not)?
    \item[] Answer: \answerYes{} % Replace by \answerYes{}, \answerNo{}, or \answerNA{}.
    \item[] Justification: The draft specifies the continual setup, metrics, baselines, and evaluation protocol, and provides implementation/analysis details in Appendix (Secs.~5.1 and Appendix~\ref{app:exp_setup_details}).
    \item[] Guidelines:
    \begin{itemize}
        \item The answer \answerNA{} means that the paper does not include experiments.
        \item If the paper includes experiments, a \answerNo{} answer to this question will not be perceived well by the reviewers: Making the paper reproducible is important, regardless of whether the code and data are provided or not.
        \item If the contribution is a dataset and\slash or model, the authors should describe the steps taken to make their results reproducible or verifiable. 
        \item Depending on the contribution, reproducibility can be accomplished in various ways. For example, if the contribution is a novel architecture, describing the architecture fully might suffice, or if the contribution is a specific model and empirical evaluation, it may be necessary to either make it possible for others to replicate the model with the same dataset, or provide access to the model. In general. releasing code and data is often one good way to accomplish this, but reproducibility can also be provided via detailed instructions for how to replicate the results, access to a hosted model (e.g., in the case of a large language model), releasing of a model checkpoint, or other means that are appropriate to the research performed.
        \item While NeurIPS does not require releasing code, the conference does require all submissions to provide some reasonable avenue for reproducibility, which may depend on the nature of the contribution. For example
        \begin{enumerate}
            \item If the contribution is primarily a new algorithm, the paper should make it clear how to reproduce that algorithm.
            \item If the contribution is primarily a new model architecture, the paper should describe the architecture clearly and fully.
            \item If the contribution is a new model (e.g., a large language model), then there should either be a way to access this model for reproducing the results or a way to reproduce the model (e.g., with an open-source dataset or instructions for how to construct the dataset).
            \item We recognize that reproducibility may be tricky in some cases, in which case authors are welcome to describe the particular way they provide for reproducibility. In the case of closed-source models, it may be that access to the model is limited in some way (e.g., to registered users), but it should be possible for other researchers to have some path to reproducing or verifying the results.
        \end{enumerate}
    \end{itemize}

\item {\bf Open access to data and code}
    \item[] Question: Does the paper provide open access to the data and code, with sufficient instructions to faithfully reproduce the main experimental results, as described in supplemental material?
    \item[] Answer: \answerYes{} % Replace by \answerYes{}, \answerNo{}, or \answerNA{}.
    \item[] Justification: Appendix~\ref{app:availability_compute_impact} provides an anonymized code and data availability link and states that scripts, configurations, processed tables, and task-split reconstruction instructions are included.
    \item[] Guidelines:
    \begin{itemize}
        \item The answer \answerNA{} means that paper does not include experiments requiring code.
        \item Please see the NeurIPS code and data submission guidelines (\url{https://neurips.cc/public/guides/CodeSubmissionPolicy}) for more details.
        \item While we encourage the release of code and data, we understand that this might not be possible, so \answerNo{} is an acceptable answer. Papers cannot be rejected simply for not including code, unless this is central to the contribution (e.g., for a new open-source benchmark).
        \item The instructions should contain the exact command and environment needed to run to reproduce the results. See the NeurIPS code and data submission guidelines (\url{https://neurips.cc/public/guides/CodeSubmissionPolicy}) for more details.
        \item The authors should provide instructions on data access and preparation, including how to access the raw data, preprocessed data, intermediate data, and generated data, etc.
        \item The authors should provide scripts to reproduce all experimental results for the new proposed method and baselines. If only a subset of experiments are reproducible, they should state which ones are omitted from the script and why.
        \item At submission time, to preserve anonymity, the authors should release anonymized versions (if applicable).
        \item Providing as much information as possible in supplemental material (appended to the paper) is recommended, but including URLs to data and code is permitted.
    \end{itemize}

\item {\bf Experimental setting/details}
    \item[] Question: Does the paper specify all the training and test details (e.g., data splits, hyperparameters, how they were chosen, type of optimizer) necessary to understand the results?
    \item[] Answer: \answerYes{} % Replace by \answerYes{}, \answerNo{}, or \answerNA{}.
    \item[] Justification: Training/evaluation setup, model scales, task construction, benchmarks, and baseline definitions are documented in Sec.~5.1 and expanded in Appendix~\ref{app:exp_setup_details}.
    \item[] Guidelines:
    \begin{itemize}
        \item The answer \answerNA{} means that the paper does not include experiments.
        \item The experimental setting should be presented in the core of the paper to a level of detail that is necessary to appreciate the results and make sense of them.
        \item The full details can be provided either with the code, in appendix, or as supplemental material.
    \end{itemize}

\item {\bf Experiment statistical significance}
    \item[] Question: Does the paper report error bars suitably and correctly defined or other appropriate information about the statistical significance of the experiments?
    \item[] Answer: \answerYes{} % Replace by \answerYes{}, \answerNo{}, or \answerNA{}.
    \item[] Justification: We report statistical confidence for key Sec.~3 claims via run-cluster bootstrap confidence intervals and permutation tests (Appendix~\ref{app:sec3_confidence}, Fig.~\ref{fig:app_sec3_stat_confidence}, Tables~\ref{tab:app_sec3_conf_step_global}--\ref{tab:app_sec3_conf_geo_grad}).
    \item[] Guidelines:
    \begin{itemize}
        \item The answer \answerNA{} means that the paper does not include experiments.
        \item The authors should answer \answerYes{} if the results are accompanied by error bars, confidence intervals, or statistical significance tests, at least for the experiments that support the main claims of the paper.
        \item The factors of variability that the error bars are capturing should be clearly stated (for example, train/test split, initialization, random drawing of some parameter, or overall run with given experimental conditions).
        \item The method for calculating the error bars should be explained (closed form formula, call to a library function, bootstrap, etc.)
        \item The assumptions made should be given (e.g., Normally distributed errors).
        \item It should be clear whether the error bar is the standard deviation or the standard error of the mean.
        \item It is OK to report 1-sigma error bars, but one should state it. The authors should preferably report a 2-sigma error bar than state that they have a 96\% CI, if the hypothesis of Normality of errors is not verified.
        \item For asymmetric distributions, the authors should be careful not to show in tables or figures symmetric error bars that would yield results that are out of range (e.g., negative error rates).
        \item If error bars are reported in tables or plots, the authors should explain in the text how they were calculated and reference the corresponding figures or tables in the text.
    \end{itemize}

\item {\bf Experiments compute resources}
    \item[] Question: For each experiment, does the paper provide sufficient information on the computer resources (type of compute workers, memory, time of execution) needed to reproduce the experiments?
    \item[] Answer: \answerYes{} % Replace by \answerYes{}, \answerNo{}, or \answerNA{}.
    \item[] Justification: Appendix~\ref{app:exp_setup_details} reports the GPU resources used for expert training and the CPU resources used for data-free merging, geometry construction.
    \item[] Guidelines:
    \begin{itemize}
        \item The answer \answerNA{} means that the paper does not include experiments.
        \item The paper should indicate the type of compute workers CPU or GPU, internal cluster, or cloud provider, including relevant memory and storage.
        \item The paper should provide the amount of compute required for each of the individual experimental runs as well as estimate the total compute. 
        \item The paper should disclose whether the full research project required more compute than the experiments reported in the paper (e.g., preliminary or failed experiments that didn't make it into the paper). 
    \end{itemize}
    
\item {\bf Code of ethics}
    \item[] Question: Does the research conducted in the paper conform, in every respect, with the NeurIPS Code of Ethics \url{https://neurips.cc/public/EthicsGuidelines}?
    \item[] Answer: \answerYes{} % Replace by \answerYes{}, \answerNo{}, or \answerNA{}.
    \item[] Justification: The research is conducted on open benchmarks and model post-training settings and follows NeurIPS ethical expectations. No human-subject data collection or deceptive deployment study is involved.
    \item[] Guidelines:
    \begin{itemize}
        \item The answer \answerNA{} means that the authors have not reviewed the NeurIPS Code of Ethics.
        \item If the authors answer \answerNo, they should explain the special circumstances that require a deviation from the Code of Ethics.
        \item The authors should make sure to preserve anonymity (e.g., if there is a special consideration due to laws or regulations in their jurisdiction).
    \end{itemize}

\item {\bf Broader impacts}
    \item[] Question: Does the paper discuss both potential positive societal impacts and negative societal impacts of the work performed?
    \item[] Answer: \answerYes{} % Replace by \answerYes{}, \answerNo{}, or \answerNA{}.
    \item[] Justification: Appendix~\ref{app:availability_compute_impact} includes a concise broader-impact discussion covering data-free continual adaptation, possible misuse of easier post-training, and the need for downstream safety checks.
    \item[] Guidelines:
    \begin{itemize}
        \item The answer \answerNA{} means that there is no societal impact of the work performed.
        \item If the authors answer \answerNA{} or \answerNo, they should explain why their work has no societal impact or why the paper does not address societal impact.
        \item Examples of negative societal impacts include potential malicious or unintended uses (e.g., disinformation, generating fake profiles, surveillance), fairness considerations (e.g., deployment of technologies that could make decisions that unfairly impact specific groups), privacy considerations, and security considerations.
        \item The conference expects that many papers will be foundational research and not tied to particular applications, let alone deployments. However, if there is a direct path to any negative applications, the authors should point it out. For example, it is legitimate to point out that an improvement in the quality of generative models could be used to generate Deepfakes for disinformation. On the other hand, it is not needed to point out that a generic algorithm for optimizing neural networks could enable people to train models that generate Deepfakes faster.
        \item The authors should consider possible harms that could arise when the technology is being used as intended and functioning correctly, harms that could arise when the technology is being used as intended but gives incorrect results, and harms following from (intentional or unintentional) misuse of the technology.
        \item If there are negative societal impacts, the authors could also discuss possible mitigation strategies (e.g., gated release of models, providing defenses in addition to attacks, mechanisms for monitoring misuse, mechanisms to monitor how a system learns from feedback over time, improving the efficiency and accessibility of ML).
    \end{itemize}
    
\item {\bf Safeguards}
    \item[] Question: Does the paper describe safeguards that have been put in place for responsible release of data or models that have a high risk for misuse (e.g., pre-trained language models, image generators, or scraped datasets)?
    \item[] Answer: \answerNA{} % Replace by \answerYes{}, \answerNo{}, or \answerNA{}.
    \item[] Justification: This submission does not release a new high-risk foundation model or scraped dataset artifact; it studies update-integration behavior on existing model families and benchmarks.
    \item[] Guidelines:
    \begin{itemize}
        \item The answer \answerNA{} means that the paper poses no such risks.
        \item Released models that have a high risk for misuse or dual-use should be released with necessary safeguards to allow for controlled use of the model, for example by requiring that users adhere to usage guidelines or restrictions to access the model or implementing safety filters. 
        \item Datasets that have been scraped from the Internet could pose safety risks. The authors should describe how they avoided releasing unsafe images.
        \item We recognize that providing effective safeguards is challenging, and many papers do not require this, but we encourage authors to take this into account and make a best faith effort.
    \end{itemize}

\item {\bf Licenses for existing assets}
    \item[] Question: Are the creators or original owners of assets (e.g., code, data, models), used in the paper, properly credited and are the license and terms of use explicitly mentioned and properly respected?
    \item[] Answer: \answerYes{} % Replace by \answerYes{}, \answerNo{}, or \answerNA{}.
    \item[] Justification: The paper credits upstream datasets/models/benchmarks in Sec.~5.1 and citations. We use publicly available assets with standard research usage terms and will add explicit license names/URLs in Appendix for clarity.
    \item[] Guidelines:
    \begin{itemize}
        \item The answer \answerNA{} means that the paper does not use existing assets.
        \item The authors should cite the original paper that produced the code package or dataset.
        \item The authors should state which version of the asset is used and, if possible, include a URL.
        \item The name of the license (e.g., CC-BY 4.0) should be included for each asset.
        \item For scraped data from a particular source (e.g., website), the copyright and terms of service of that source should be provided.
        \item If assets are released, the license, copyright information, and terms of use in the package should be provided. For popular datasets, \url{paperswithcode.com/datasets} has curated licenses for some datasets. Their licensing guide can help determine the license of a dataset.
        \item For existing datasets that are re-packaged, both the original license and the license of the derived asset (if it has changed) should be provided.
        \item If this information is not available online, the authors are encouraged to reach out to the asset's creators.
    \end{itemize}

\item {\bf New assets}
    \item[] Question: Are new assets introduced in the paper well documented and is the documentation provided alongside the assets?
    \item[] Answer: \answerNA{} % Replace by \answerYes{}, \answerNo{}, or \answerNA{}.
    \item[] Justification: The paper does not introduce a new standalone dataset or model asset release in the current submission package.
    \item[] Guidelines:
    \begin{itemize}
        \item The answer \answerNA{} means that the paper does not release new assets.
        \item Researchers should communicate the details of the dataset\slash code\slash model as part of their submissions via structured templates. This includes details about training, license, limitations, etc. 
        \item The paper should discuss whether and how consent was obtained from people whose asset is used.
        \item At submission time, remember to anonymize your assets (if applicable). You can either create an anonymized URL or include an anonymized zip file.
    \end{itemize}

\item {\bf Crowdsourcing and research with human subjects}
    \item[] Question: For crowdsourcing experiments and research with human subjects, does the paper include the full text of instructions given to participants and screenshots, if applicable, as well as details about compensation (if any)? 
    \item[] Answer: \answerNA{} % Replace by \answerYes{}, \answerNo{}, or \answerNA{}.
    \item[] Justification: The work does not involve crowdsourcing tasks or human-subject experiments.
    \item[] Guidelines:
    \begin{itemize}
        \item The answer \answerNA{} means that the paper does not involve crowdsourcing nor research with human subjects.
        \item Including this information in the supplemental material is fine, but if the main contribution of the paper involves human subjects, then as much detail as possible should be included in the main paper. 
        \item According to the NeurIPS Code of Ethics, workers involved in data collection, curation, or other labor should be paid at least the minimum wage in the country of the data collector. 
    \end{itemize}

\item {\bf Institutional review board (IRB) approvals or equivalent for research with human subjects}
    \item[] Question: Does the paper describe potential risks incurred by study participants, whether such risks were disclosed to the subjects, and whether Institutional Review Board (IRB) approvals (or an equivalent approval/review based on the requirements of your country or institution) were obtained?
    \item[] Answer: \answerNA{} % Replace by \answerYes{}, \answerNo{}, or \answerNA{}.
    \item[] Justification: The work does not involve human-subject research requiring IRB review.
    \item[] Guidelines:
    \begin{itemize}
        \item The answer \answerNA{} means that the paper does not involve crowdsourcing nor research with human subjects.
        \item Depending on the country in which research is conducted, IRB approval (or equivalent) may be required for any human subjects research. If you obtained IRB approval, you should clearly state this in the paper. 
        \item We recognize that the procedures for this may vary significantly between institutions and locations, and we expect authors to adhere to the NeurIPS Code of Ethics and the guidelines for their institution. 
        \item For initial submissions, do not include any information that would break anonymity (if applicable), such as the institution conducting the review.
    \end{itemize}

\item {\bf Declaration of LLM usage}
    \item[] Question: Does the paper describe the usage of LLMs if it is an important, original, or non-standard component of the core methods in this research? Note that if the LLM is used only for writing, editing, or formatting purposes and does \emph{not} impact the core methodology, scientific rigor, or originality of the research, declaration is not required.
    %this research? 
    \item[] Answer: \answerNA{} % Replace by \answerYes{}, \answerNo{}, or \answerNA{}.
    \item[] Justification: LLMs are used for writing, editing, and formatting.
    \item[] Guidelines:
    \begin{itemize}
        \item The answer \answerNA{} means that the core method development in this research does not involve LLMs as any important, original, or non-standard components.
        \item Please refer to our LLM policy in the NeurIPS handbook for what should or should not be described.
    \end{itemize}

\end{enumerate}

\end{document}